\definecolor{dark-blue}{rgb}{0.15,0.15,0.4}
\definecolor{medium-blue}{rgb}{0,0,0.5}
\DeclareMathOperator*{\argmin}{arg\,min}
\newcommand{\dd}{\mathrm{d}} %derivative wrt. time t
\newcommand{\Qm}{\mathbb{Q}} %path measure \mathbb{Q}
\newcommand{\Sm}{\mathbb{S}} %path measure \mathbb{S}
\newcommand{\Tm}{\mathbb{T}} %path measure \mathbb{T}
\newcommand{\Vm}{\mathbb{V}} %path measure \mathbb{T}
\newcommand{\Pm}{\mathbb{P}} %path measure \mathbb{P}
\newcommand{\fbm}{B^{H}} %standard fractional Brownian motion (fBM)
\newcommand{\MAfbm}{\hat{B}^{H}} %Markov approximate fBM (MA-fBm)
\newcommand{\kl}{D_{\text{KL}}}
\newcommand{\dkl}{D_\text{KL}} 
\theoremstyle{plain}
\newtheorem{theorem}{Theorem}
\newtheorem{proposition}[theorem]{Proposition}
\newtheorem{lemma}[theorem]{Lemma}
\newtheorem{definition}[theorem]{Definition}
\newtheorem{assumption}{Assumption}
\newtheorem{corollary}[theorem]{Corollary}
\newcommand{\RomanNumeralCaps}[1]
    {\MakeUppercase{\romannumeral #1}} %to use roman leters
\newcommand{\todo}[1]{}
\newcommand{\Q}[1]{}
\newcommand{\addref}[1]{}
\newcommand{\tocheck}[1]{}
\newcommand{\gabriel}[1]{}
\newcommand{\christoph}[1]{}
\newcommand{\Pd}{\mathbb{P}} %distribuion associated with measure \mathbb{P}
\newcommand{\B}{B} %standard Brownian motion (BM)
\newcommand{\Y}{Y} %augmenting Ornstein-Uhlenbeck (OU) process
\newcommand{\y}{Y}
\newcommand{\Sig}{\Sigma}
\renewcommand{\paragraph}[1]{{\vspace{0.25mm}\noindent \bf #1}.}
\title{Fractional Diffusion Bridge Models}
\author{Gabriel Nobis$^{\ *\text{ \Letter}}$ \\
Fraunhofer HHI
\And
Maximilian Springenberg$^{\ *\text{ \Letter}}$\\
Fraunhofer HHI
\And
Arina Belova \\
Fraunhofer HHI
\AND
Rembert Daems  \\
Ghent University--imec\\
FlandersMake--MIRO
\And
Christoph Knochenhauer\\
Technical University of Munich
\And
Manfred Opper\\
Technical University of Berlin\\ 
University of Potsdam\\
University of Birmingham
\And
Tolga Birdal$^{\ \dagger}$\\
Imperial College London
\And
Wojciech Samek$^{\ \dagger}$\\
Fraunhofer HHI\\
Technical University of Berlin
}
\begin{document}

\maketitle
\def\thefootnote{*}\footnotetext{Equal contribution; $^\dagger$ Shared senior authorship}\def\thefootnote{\arabic{footnote}}
\def\thefootnote{\Letter}\footnotetext{corresponding authors: \texttt{\{gabriel.nobis, maximilian.springenberg\}@hhi.fraunhofer.de}}\def\thefootnote{\arabic{footnote}}

\begin{abstract}
We present \textit{Fractional Diffusion Bridge Models} (FDBM), a novel generative diffusion bridge framework driven by an approximation of the rich and non-Markovian fractional Brownian motion (fBM).
Real stochastic processes exhibit a degree of memory effects (correlations in time), long-range dependencies, roughness and anomalous diffusion phenomena that are not captured in standard diffusion or bridge modeling due to the use of Brownian motion (BM). 
As a remedy, leveraging a recent Markovian approximation of fBM (MA-fBM), we construct FDBM that enable tractable inference while preserving the non-Markovian nature of fBM. We prove the existence of a coupling-preserving generative diffusion bridge and leverage it for future state prediction from paired training data. We then extend our formulation to the Schr\"{o}dinger bridge problem and derive a principled loss function to learn the unpaired data translation. We evaluate FDBM on both tasks: predicting future protein conformations from aligned data, and unpaired image translation. In both settings, FDBM achieves superior performance compared to the Brownian baselines, yielding lower root mean squared deviation (RMSD) of C$_\alpha$ atomic positions in protein structure prediction and lower Fréchet Inception Distance (FID) in unpaired image translation.
\end{abstract}

\addtocontents{toc}{\protect\setcounter{tocdepth}{0}}
\section{Introduction}
Stochastic differential equations (SDEs) offer a natural framework for modeling the inherent randomness and continuous-time dynamics of real-world systems \cite{CohenElliott,appliedSDEs}. This is precisely why they serve as the backbone of state-of-the-art generative diffusion models \cite{song2021scorebased,foti2024uv,leng2024hypersdfusion}. Traditionally, these models assume noise driven by standard Brownian motion (BM) \cite{brown1828, einstein1905, wiener1923differential}, which is Markovian with independent increments \cite{oksendal2000}. However, this choice is motivated by mathematical tractability and simplicity rather than faithfulness and fidelity to real-world data. Empirical data, particularly in complex systems such as proteins, often exhibit long-range temporal dependencies, heavy-tailed behaviors, and intricate dynamics that are poorly captured by memoryless processes \cite{ayaz2021non}. A generative process, lacking temporal dependencies, may lead to insufficient approximations of such intricate data, due to the absence of modeled memory effects.
These limitations have motivated recent efforts to explore generative models with non-standard noise sources 
\cite{tong2022,daras2023soft,hoogeboom2023blurring,yoon2023scorebased,paquet2024annealed,nobis2024generative,liang2025protgfdm,shariatian2025denoising}.
Our work extends this line of research to generative diffusion bridge models 
\cite{bortoli2021diffusion,vargas2021solving,peluchetti2023nondenoisingforwardtimediffusions}, where the goal is to transform a structured, non-Gaussian source distribution into a complex target distribution. We specifically investigate stochastic bridges driven by fractional Brownian motion (fBM) \cite{levy1953random, mandelbrot1968fractional}, a generalization of BM with dependent increments, characterized by the Hurst index $H$, which governs both roughness  (i.e., pathwise regularity) and long-range dependence. However, directly using fBM as the driving noise in a stochastic bridge introduces an intractable drift \cite{janak2010fractional}. To address this, we adopt a Markov approximation of fBM (MA-fBM) \cite{HARMS2019,daems2023variational} that enables efficient simulation. By using MA-fBM as the driving process, we introduce a more expressive and flexible framework for building bridges: when $H=0.5$, fBM recovers classical BM, whereas other values of $H$ flexibly allow us to model a broader range of temporal behaviors, as demonstrated in our experiments. Our framework, \textit{Fractional Diffusion Bridge Models (FDBM)}, enables generative bridge modeling with fractional noise for both paired and unpaired training data, applicable across a broad range of machine learning tasks. In this work, we focus on predicting conformational changes in proteins to explore effects in paired-data problems, as well as unpaired image translation. In the context of protein generation, diffusion processes driven by MA-fBM have proven effective in their superdiffusive regime, showing improvements in both sample fidelity and diversity \cite{liang2025protgfdm}, potentially due to a better capture of long-range correlations in protein structures. Building on this observation, we propose MA-fBM-driven diffusion bridges as a principled extension for modeling conformational changes in proteins. To the best of our knowledge, our framework is the first to incorporate fractional noise into generative bridge modeling within machine learning. Our contributions are:
\begin{itemize}[itemsep=0.25pt,topsep=0pt,leftmargin=.125in]
    \vspace{-0.5ex}
    \item 
    We propose a method for learning generative diffusion bridges that interpolate between two unknown distributions via a non-Markovian trajectory with controllable correlation of increments and long-range dependencies, enabling more flexible modeling of real-world variability and biological dynamics.
    \item We prove that, for these generalized stochastic dynamics, there exists a process solving a stochastic differential equation that preserves the coupling given in the training data.
    \item We formulate the Schrödinger bridge problem with a reference process approximating fractional Brownian motion and propose a method to learn stochastic transport trajectories, whose roughness and long-range dependencies are controlled by the Hurst index.
    \item We apply our framework broadly to common use cases of stochastic bridges in machine learning, including inferring conformational changes in proteins and performing unpaired image translation, achieving lower root mean squared deviation (RMSD) of C$_\alpha$ atomic positions in protein prediction, and improved Fréchet Inception Distance (FID) scores for image translation.
\end{itemize}
\vspace{-0.5ex}
We accompany our work with several publicly available implementations to facilitate the adoption of our framework in both paired and unpaired settings, as well as a stand-alone reimplementation of the method proposed by \citet{debortoli2024schrodingerbridgeflowunpaired}.\footnote{\scriptsize \url{https://github.com/GabrielNobis/FDBM_paired}} \footnote{\scriptsize \url{https://github.com/mspringe/FDBM_unpaired}} \footnote{\scriptsize \url{https://github.com/mspringe/Schroedinger-Bridge-Flow}}

\vspace{-1.5mm}
\section{Background}
\vspace{-1.5mm}
Stochastic bridges interpolate between two given data points by conditioning a prior reference process to start and end at prescribed values. A common choice for this reference process in machine learning is a scaled BM
$X=\sqrt{\varepsilon} B$ with $\varepsilon>0$. Conditioning on the endpoints $(x_0,x_1)\in\mathbb{R}^d\times\mathbb{R}^d$ yields the scaled Brownian bridge (BB) $X_{|0,1}$, which starts at $x_0$ and ends at $x_1$, while evolving for $t\in(0,1)$ according to the stochastic dynamics \citep{mansuy2008aspects}
\vspace{-0.5mm}
\begin{equation}
\label{eq:bb}
    \dd X_{|0,1}(t) = \varepsilon\frac{x_1-X_{|0,1}(t) }{1-t}\dd t + \sqrt{\varepsilon} \dd B_t, \quad X_{|0,1}(0)=x_0.
\vspace{-0.5mm}
\end{equation}
This scaled BB, or a generalization thereof, serves as the starting point for many machine learning applications \cite{vargas2021solving,peluchetti2023nondenoisingforwardtimediffusions,wu2022diffusionbased,liu2023learning,somnath2023aligned,debortoli2023augmentedbridgematching,peluchetti2023diffusion,shi2023diffusion,debortoli2024schrodingerbridgeflowunpaired}, where the goal is to learn a stochastic process $X^{\star}$ that interpolates not only between the fixed endpoints $(x_0,x_1)$, but in law between two unknown distributions $\Pi_0$ and $\Pi_1$ on $\mathbb{R}^{d}$. Since the drift of such a stochastic process is generally intractable, the drift term in \cref{eq:bb} serves as a target for a neural network, which is optimized by minimizing a conditional expectation.

\paragraph{Coupling-preserving data translation} Data translation aims to map between two unknown distributions. In the setting where training data is provided in pairs---such as the unbound and bound states of a protein \cite{somnath2023aligned,minan2025sesame}, a distorted and a clean image \cite{liu2023i2sb,yue2024image}, or two snapshots of cell differentiation recorded on different
days \cite{somnath2023aligned}---the additional objective is to \textit{preserve} the coupling given in the training data. We build our framework for paired data translation on Augmented Bridge Matching (ABM) \cite{debortoli2023augmentedbridgematching}, where a stochastic process $X^\star$ is learned that transports an unknown distribution $\Pi_0$ on $\mathbb{R}^{d}$ to another unknown distribution $\Pi_1$ on $\mathbb{R}^{d}$, while preserving the coupling $(X^\star_0,X^\star_1)\sim\Pi_{0,1}$ on $\mathbb{R}^{d}\times\mathbb{R}^{d}$. Additionally, $X^\star$ solves an SDE such that we can sample from the coupling $(x_0,x_1)\sim\Pi_{0,1}$ by first sampling $X^\star_0=x_0\sim\Pi_0$ according to the first marginal of $\Pi_{0,1}$, and then simulating the SDE forward in time to arrive at a sample $X^\star_1 = x_1\sim\Pi_1$. \citet[Proposition 3]{debortoli2023augmentedbridgematching} show that for the scaled Brownian reference process $X=\sqrt{\varepsilon} B$ there exists such a coupling preserving process $X^{\star}$ with associated path measure $\Pm^\star$ that solves
\vspace{-1mm}
\begin{equation}
    \dd X^{\star}_t = \varepsilon\mathbb{E}_{\Pm^{\star}_{1|0,t}}\left[\frac{X^{\star}_1-X^{\star}_{t}}{1-t}|X^{\star}_t,X^{\star}_0\right]\dd t + \sqrt{\varepsilon} \dd B_t,\quad X^{\star}_0 \sim\Pi_0.
\vspace{-1mm}
\end{equation}
The drift of $X^{\star}$ is intractable and approximated by a time-dependent neural network $v^{\theta}_t$, resulting in a process $X^\theta$ with associated path measure $\Pm^\theta$. Minimizing now the KL divergence $\dkl(\Pm^\star|\Pm^\theta)$ with respect to the weight vector $\theta$ yields the loss function
\vspace{-1mm}
\begin{equation}
\label{eq:loss_abm}
    \mathcal{L}_{ABM}(\theta) := \int^1_0\mathbb{E}_{\Pm^\star}\left[\left \Vert v^{\theta}_t(X^{\star}_0,X^{\star}_{t})- \frac{X^{\star}_1-X^{\star}_{t}}{1-t}\right\Vert^2\right]\dd t.
\vspace{-1mm}
\end{equation}
Given paired training data sampled from the unknown coupling $\Pi_{0,1}$, we can approximate the above loss function since, by construction, $\Pm^\star = \Pi_{0,1}\Qm_{|0,1}$, where $\Qm_{|0,1}$ denotes the path measure of the scaled BB $X_{|0,1}$ solving \cref{eq:bb}. Consequently, to compute the loss during training, we first sample $(x_0, x_1) \sim \Pi_{0,1}$ and then sample $x_t \sim \mathbb{Q}_{t|0,1}(\cdot \mid x_0, x_1)$.

\paragraph{Unpaired data translation via the Schrödinger bridge} On the other hand, in unpaired data translation via the Schrödinger bridge, the objective is to \textit{find} the coupling that corresponds to the optimal transport \cite{peyre2019cot} between two unknown distributions. Here, we aim to learn the stochastic process $X^{SB}$ corresponding to the solution of the dynamic Schrödinger bridge problem \cite{schrodinger1931umkehrung,schrodinger1932theorie,foellmer1988random,leonard2013survey} 
\vspace{-1mm}
\begin{equation}
\label{eq:dyn_sbp}
    \Pm^{SB} = \argmin_{\Tm\in\mathcal{P}(\mathcal{C}^{d})}
    \left\{
    \kl(\Tm|\Qm)\,\,;\,\,\Tm_0 = \Pi_0,\,\,\Tm_1 = \Pi_1
    \right\},
\vspace{-1mm}
\end{equation}
where the minimization is taken over all path measures $\Tm$ defined on the set of continuous functions $\mathcal{C}^{d}$ from the unit interval $[0,1]$ to $\mathbb{R}^d$. We build our framework for unpaired data translation on Schrödinger Bridge Flow (SBFlow) \cite{debortoli2024schrodingerbridgeflowunpaired}, whose unique stationary point corresponds to the Schrödinger bridge. See \Cref{adx:SBFlow} for a detailed summary.

In the following, we incorporate fractional noise into generative diffusion bridge models in order to control the roughness and long-range dependencies of the interpolating stochastic trajectories, replacing the BM used as the driving noise in traditional diffusion bridge models. Our work builds directly on \citet{daems2023variational} for the approximation of fBM, on \citet{somnath2023aligned} and \citet{debortoli2023augmentedbridgematching} for the paired-data setting, and on \citet{peluchetti2023diffusion}, \citet{shi2023diffusion}, and \citet{debortoli2024schrodingerbridgeflowunpaired} for the unpaired-data setting. See \Cref{adx:extended-related-work} for a detailed discussion of related work.

\begin{figure}[tb]
    \vspace{-7.5mm}
    \centering
        \includegraphics[width=\linewidth]{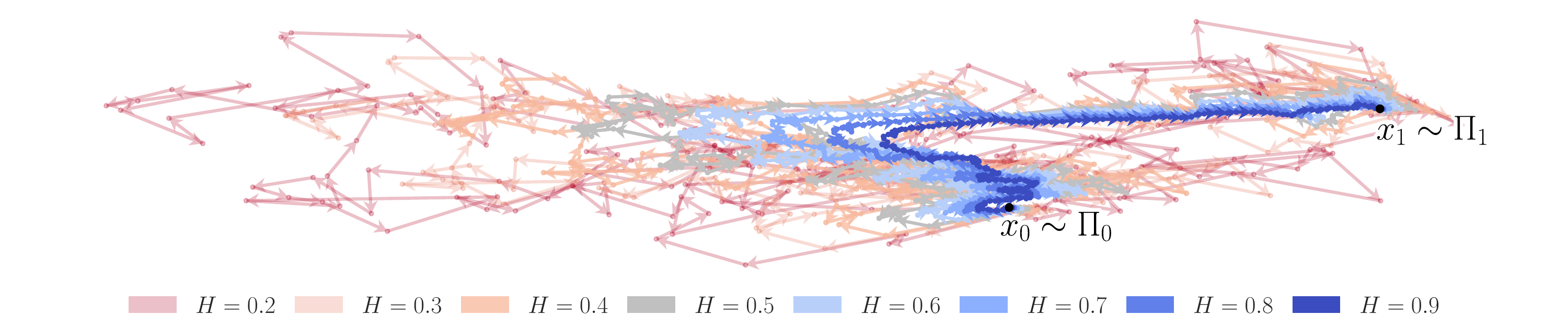}
    \caption{\small Trajectories from the approximate $2d$-fractional Brownian bridge for different Hurst indices $H$.
    } \label{fig:intuition}
\end{figure}

\vspace{-1mm}
\section{A stochastic bridge driven by fractional noise}
\vspace{-3mm}
We first define and characterize the fractional noise that serves as the driving process replacing BM. For mathematical details, we refer the reader to \Cref{sec:mathematical_framework}, along with the notational conventions in \Cref{adx:notation}.

\vspace{-2mm}
\subsection{Fractional noise}
\vspace{-2mm}
We begin with the definition of Riemann-Liouville (Type \RomanNumeralCaps{2}) fBM, a non-Markovian, centered Gaussian process with non-stationary and correlated increments. 
\begin{definition}[Type \RomanNumeralCaps{2} Fractional Brownian motion \cite{levy1953random}] Let $\B = (\B_t)_{t\geq 0}$ be a (multidimensional) standard Brownian motion (BM) and $\Gamma$ the Gamma function. The centered Gaussian process 
  \vspace{-1mm}
\begin{equation}
   \fbm_{t} := \frac{1}{\Gamma(H+\frac{1}{2})}\int^{t}_{0}(t-s)^{H-\frac{1}{2}}\dd \B_s,\quad t\geq 0,
  \vspace{-1mm}
\end{equation}
is called \textit{Type \RomanNumeralCaps{2} fractional Brownian motion (fBM)} with Hurst index $H\in(0,1)$.
\end{definition}
Compared to BM with independent increments (diffusion), the paths of fBM become smoother for $H>0.5$ due to positively correlated increments (super-diffusion) and rougher for $H<0.5$ due to negatively correlated increments (sub-diffusion), while $H=0.5$ recovers BM. A stochastic bridge can be derived for Gaussian processes, including fBM; however, the drift of the fractional Brownian bridge (fBB) is intractable \cite{janak2010fractional} and therefore unsuitable both for sampling from its marginals and as a loss-function target analogous to \cref{eq:loss_abm}. Rather than introducing an additional approximation error by attempting to approximate the drift of the fBB, we follow \citet{HARMS2019,daems2023variational} and first approximate fBM by a linear superposition of Ornstein--Uhlenbeck (OU) processes. These augmenting OU processes are all driven by the same standard BM, thereby approximating the time-correlated behavior of fBM.
\begin{definition}[\label{def:MA_fbm}Markov approximation of fBM \cite{HARMS2019, daems2023variational}] Choose $K\in\mathbb{N}$ Ornstein--Uhlenbeck (OU) processes 
  \vspace{-1mm}
\begin{equation}
\label{eq:def_OU}
\Y^{k}_t := \int^{t}_{0}e^{-\gamma_k(t-s)}\dd\B_s,\quad k=1,\dots,K,\quad K\in\mathbb{N},\quad t\geq 0, 
  \vspace{-1mm}
\end{equation}
with speeds of mean reversion $\gamma_{1},...,\gamma_{K}$ and dynamics $\dd \Y^{k}_t = -\gamma_{k} \Y^{k}_{t}\dd t + \dd\B_{t}$. Given a Hurst index $H\in(0,1)$ and a geometrically spaced grid $\gamma_{k}=r^{k-n}$ with $r>1$ and $n=\frac{K+1}{2}$ we call the process
  \vspace{-1mm}
\begin{equation}
    \MAfbm_t := \sum_{k=1}^{K}\omega_{k}Y_{t}^{k},\quad H\in(0,1),\quad t\geq 0,
  \vspace{-1mm}
\end{equation}
(multidimensional) Markov-approximate fractional Brownian motion (MA-fBM) with approximation coefficients $\omega_{1},...,\omega_{K}\in\mathbb{R}$.
\end{definition}
While the choice of approximation coefficients in \citet{harms2021} enables strong convergence to fBM with high polynomial order in $K$ for $H<0.5$, we opt for the computationally more efficient method proposed by \citet{daems2023variational}. This method selects the $L^2(\mathbb{P})$ optimal approximation coefficients for a given $K$, achieving empirically good results in approximating fBM, even with a small number of OU processes. See \citet[Figures 3.13--3.15]{daems2025phdthesis} for the approximation error of Type \RomanNumeralCaps{2} fBM. We fix $K = 5$ throughout all experiments presented in the main text.
\begin{proposition}[Optimal Approximation Coefficients \cite{daems2023variational}] 
\label{prop:optw} 
The optimal approximation coefficients $\omega=(\omega_{1},...,\omega_{K})\in\mathbb{R}^{K}$ for a given Hurst index $H\in(0,1)$, a terminal time $T>0$ and a fixed geometrically spaced grid to minimize the $L^{2}(\mathbb{P})$-error 
  \vspace{-1mm}
\begin{equation}
    \mathcal{E}(\omega):= \int^{T}_{0}\mathbb{E}\left[\left(\fbm_{t}-\MAfbm_{t}\right)^{2}\right]\dd t
  \vspace{-1mm}
\end{equation}
are given in closed form by the linear system $A\omega = b$, where $A\in\mathbb{R}^{K,K}$ and $b\in \mathbb{R}^{K}$ are known. 
\end{proposition}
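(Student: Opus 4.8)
The plan is to recognize $\mathcal{E}(\omega)$ as a convex quadratic form in $\omega$ and to read off the linear system from its first-order optimality condition. First I would expand the square pointwise in $t$: since $\MAfbm_t = \sum_{k=1}^{K}\omega_k \Y^k_t$, linearity of the expectation gives
\begin{equation}
\mathbb{E}\!\left[(\fbm_t - \MAfbm_t)^2\right] = \mathbb{E}[(\fbm_t)^2] - 2\sum_{k=1}^{K}\omega_k\,\mathbb{E}[\fbm_t \Y^k_t] + \sum_{k,l=1}^{K}\omega_k\omega_l\,\mathbb{E}[\Y^k_t \Y^l_t].
\end{equation}
Integrating over $[0,T]$ and interchanging $\int_0^T$ with $\mathbb{E}$ (justified by Fubini--Tonelli, using that the integrand in $\mathcal{E}$ is nonnegative and that all second moments of the Gaussian processes involved are finite) yields $\mathcal{E}(\omega) = c - 2b^\top\omega + \omega^\top A\omega$, with $A_{kl} = \int_0^T \mathbb{E}[\Y^k_t \Y^l_t]\,\dd t$, $b_k = \int_0^T \mathbb{E}[\fbm_t \Y^k_t]\,\dd t$, and $c = \int_0^T \mathbb{E}[(\fbm_t)^2]\,\dd t$, the last being an irrelevant additive constant.

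Next I would make the entries explicit via the It\^o isometry, noting that $\Y^k$ and $\fbm$ are stochastic integrals against the \emph{same} driving BM $\B$. From $\Y^k_t = \int_0^t e^{-\gamma_k(t-s)}\dd\B_s$ one gets $\mathbb{E}[\Y^k_t \Y^l_t] = \int_0^t e^{-(\gamma_k+\gamma_l)(t-s)}\,\dd s = (1-e^{-(\gamma_k+\gamma_l)t})/(\gamma_k+\gamma_l)$, and from the integral representation of $\fbm$ one gets $\mathbb{E}[\fbm_t \Y^k_t] = \Gamma(H+\tfrac12)^{-1}\int_0^t u^{H-1/2}e^{-\gamma_k u}\,\dd u$, a lower incomplete gamma function in $t$. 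A further elementary integration over $t\in[0,T]$ then gives $A$ in terms of rational and exponential expressions in the $\gamma_k$ and $b$ in terms of incomplete-gamma expressions; since $H$, $T$ and the geometric grid $\{\gamma_k\}$ are fixed, these are "known" in closed form, establishing the asserted form $A\omega = b$.

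Finally I would invoke convexity. The gradient is $\nabla\mathcal{E}(\omega) = 2(A\omega - b)$, so every stationary point solves $A\omega=b$, and $A$ is positive definite because, for $\omega\neq 0$,
\begin{equation}
\omega^\top A\omega = \int_0^T \mathbb{E}\!\left[\Big(\sum_{k=1}^{K}\omega_k \Y^k_t\Big)^{\!2}\right]\dd t = \int_0^T\!\!\int_0^t \Big(\sum_{k=1}^{K}\omega_k e^{-\gamma_k(t-s)}\Big)^{\!2}\dd s\,\dd t > 0,
\end{equation}
since the maps $s\mapsto e^{-\gamma_k(t-s)}$ are linearly independent on any interval whenever the $\gamma_k = r^{k-n}$ are pairwise distinct, which holds for $r>1$. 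Hence $\mathcal{E}$ is strictly convex, $A\omega=b$ has a unique solution, and that solution is the global minimizer. I expect the only genuinely delicate point to be the integrability bookkeeping for the Fubini-type interchanges and for the well-definedness of the stochastic integrals near $t=0$ when $H<\tfrac12$, where $(t-s)^{H-1/2}$ is singular but still square-integrable; the remaining computations are routine.
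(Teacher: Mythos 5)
Your proposal is correct. Note, however, that the paper does not actually prove this statement: Proposition~\ref{prop:optw} is imported verbatim from \citet[Proposition 5]{daems2023variational}, and the appendix merely restates the minimization problem and cites eq.~(19) and eq.~(21) of that reference for the explicit $A$ and $b$. Your self-contained derivation is the standard normal-equations argument that underlies the cited result: expand $\mathcal{E}(\omega)=c-2b^{\top}\omega+\omega^{\top}A\omega$, identify $A_{kl}=\int_0^T\mathbb{E}[\Y^k_t\Y^l_t]\,\dd t$ and $b_k=\int_0^T\mathbb{E}[\fbm_t\Y^k_t]\,\dd t$ via the It\^o isometry (using that all processes are driven by the same $\B$), and conclude by strict convexity. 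The computations are right: $\mathbb{E}[\Y^k_t\Y^l_t]=(1-e^{-(\gamma_k+\gamma_l)t})/(\gamma_k+\gamma_l)$ and $\mathbb{E}[\fbm_t\Y^k_t]$ is a scaled lower incomplete gamma function, matching the closed forms in the reference. Your argument in fact buys slightly more than the bare statement: positive definiteness of $A$ (from linear independence of $s\mapsto e^{-\gamma_k(t-s)}$ for the pairwise distinct $\gamma_k=r^{k-n}$, $r>1$) shows that $A$ is invertible and the minimizer is unique, which the paper uses implicitly when it sets $\omega:=A^{-1}b$ without comment. Two minor caveats: the word ``closed form'' should be read as ``in terms of standard special functions,'' since $b_k$ involves the incomplete gamma function rather than elementary functions; and for the multidimensional processes in Definition~\ref{def:MA_fbm} the computation should be understood componentwise (the components are i.i.d., so the same $K\times K$ system results). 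Neither affects the validity of your proof.
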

We now use MA-fBM, equipped with the optimal approximation coefficients, as a reference process to approximate a fBB, thereby enabling efficient simulation and closed-form drift computation in the stochastic bridge derived in the next section. 

\subsection{A Markov approximate fractional Brownian bridge}
Towards the goal of defining a stochastic bridge driven by fractional noise we fix the reference process to $X = \sqrt{\varepsilon}\MAfbm$ with $\varepsilon>0$, and write $Y=(Y^1,\dots,Y^K)$ for the vector of the OU processes and $Z=(X,Y)$ for the augmented reference process. The reference process $X$ is non-Markovian (see \Cref{prop:explicit_mafbm_non_markovian}) and becomes Markovian only after augmenting it with the OU processes, resulting in the Markovian process $Z$. To define a stochastic bridge connecting two given data points $x_0\sim\Pi_0$ and $x_1\sim\Pi_1$ via X, we only need to steer the first dimension of $Z$ towards $x_1$, while the terminal values of $Y$ are not required to attain a specific value. The dynamics of the resulting stochastic bridge $Z_{|x_0,x_1}$ can be derived directly from \citet[Chapter 4]{daems2025phdthesis}, where a posterior SDE steered towards $x_1$ is constructed. In \Cref{sec:mathematical_framework}, we present an alternative derivation using Doob's $h$-transform \cite{appliedSDEs}. Both approaches yield the dynamics stated in the following proposition. 

\begin{proposition}[Markov approximation of a fractional Brownian bridge \cite{daems2025phdthesis,daems2025efficient}]\label{prop:mafbm_bridge}The partially pinned process $Z_{|x_0,x_1} := Z|(X_0=x_0,X_1=x_1)$ solves for $d=1$ the SDE
  \vspace{-1mm}
\begin{align}
    \label{eq:dynamics-pinned-process}
    \dd Z_{|x_0,x_1}(t) &= F Z_{|x_0,x_1}(t)\dd t  + G G^{T} u(t, Z_{|x_0,x_1}(t))\dd t + G\dd B(t),\quad Z_{|x_0,x_1}(0)=(x_0,0_{K}),\\
    \label{eq:def_u}
    u(t, z) &= \left[1,\omega_1\zeta_1(t,1),...,\omega_K\zeta_K(t,1)\right]^{T}\frac{x_1- \mu_{1|t}(z)}{\sigma^{2}_{1|t}},
\end{align}
where $F\in\mathbb{R}^{K+1,K+1}$ and $G\in\mathbb{R}^{K+1}$ are known, $\zeta_k(t,t+s) := \sqrt{\varepsilon}(e^{\gamma_k s} - 1)$ and $\mu_{1|t}(z)$ and $\sigma^{2}_{1|t}$ denote the mean and the variance of the conditional terminal $X_1|(Z_t=z)$, respectively. We call the process $Z_{|x_0,x_1}$ a scaled \textit{Markov-approximate fractional Brownian bridge (MA-fBB)}.
\end{proposition}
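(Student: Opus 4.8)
The plan is to realize $Z_{|x_0,x_1}$ as a Doob $h$-transform of the reference process $Z$, exploiting that $Z$ is a nondegenerate Gauss--Markov process. First I would record that the augmented process solves a linear SDE: differentiating $\MAfbm_t=\sum_{k}\omega_k\Y^k_t$ and inserting $\dd\Y^k_t=-\gamma_k\Y^k_t\,\dd t+\dd B_t$ gives $\dd X_t=\sqrt{\varepsilon}\sum_k\omega_k(-\gamma_k\Y^k_t\,\dd t+\dd B_t)$, so that, writing $Z=(X,Y)$ as a column vector, one reads off a drift matrix $F\in\mathbb{R}^{K+1,K+1}$ (zero first diagonal entry, since $X$ has no mean reversion of its own; first-row off-diagonal entries $-\sqrt{\varepsilon}\omega_k\gamma_k$; diagonal block $\mathrm{diag}(-\gamma_1,\dots,-\gamma_K)$) and a constant vector $G\in\mathbb{R}^{K+1}$ with $\dd Z_t=FZ_t\,\dd t+G\,\dd B_t$, $Z_0=(x_0,0_{K})$. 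In particular $Z$ is Markov and Gaussian, and conditioning on $X_0=x_0$ only fixes the initial point; the substance is the terminal conditioning on the single coordinate $\{X_1=x_1\}$.

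For the terminal conditioning I would apply Doob's $h$-transform with the function $h(t,z):=p_{X_1|Z_t}(x_1\mid z)$, the conditional density of the terminal $X$-value given the current augmented state. By the Markov property, $\mathbb{E}[h(s,Z_s)\mid Z_t=z]=h(t,z)$ for $t\le s\le 1$, i.e. $h$ is space--time harmonic for the generator of $Z$; nondegeneracy and Gaussianity make $h$ smooth and strictly positive on $[0,1)\times\mathbb{R}^{K+1}$, so the transform is licit and the conditioned law $Z|(X_0=x_0,X_1=x_1)$ solves $\dd Z_t=FZ_t\,\dd t+GG^{T}\nabla_z\log h(t,Z_t)\,\dd t+G\,\dd B_t$. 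It then remains to identify $\nabla_z\log h$ with the vector $u$ of \cref{eq:def_u}.

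This last step is a Gaussian computation. From the linear dynamics, $Z_1=e^{F(1-t)}Z_t+\int_t^1 e^{F(1-s)}G\,\dd B_s$, so $X_1|\{Z_t=z\}$ is Gaussian with mean $\mu_{1|t}(z)=[e^{F(1-t)}z]_1$ (affine in $z$) and variance $\sigma^2_{1|t}=\int_t^1([e^{F(1-s)}G]_1)^2\,\dd s$ (independent of $z$). Hence $\nabla_z\log h(t,z)=\frac{x_1-\mu_{1|t}(z)}{\sigma^2_{1|t}}\,\nabla_z\mu_{1|t}(z)$, and $\nabla_z\mu_{1|t}$ is the transpose of the first row of the transition matrix $e^{F(1-t)}$. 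Evaluating that matrix exponential in closed form --- which is elementary given the block-triangular structure of $F$ --- produces exactly the coefficient vector $[1,\omega_1\zeta_1(t,1),\dots,\omega_K\zeta_K(t,1)]^{T}$ with $\zeta_k(t,1)=\sqrt{\varepsilon}(e^{\gamma_k(1-t)}-1)$, so $\nabla_z\log h(t,z)=u(t,z)$ and \cref{eq:dynamics-pinned-process} follows. The case $d>1$ reduces to the scalar statement applied coordinatewise.

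The step I expect to be the main obstacle is the \emph{partial} terminal conditioning: since only $X_1$ is pinned while the OU coordinates $Y_1$ are left free, one must argue that the correct harmonic function is the marginal conditional density $p_{X_1|Z_t}(x_1\mid z)$ (not a joint terminal density), verify via the tower property that it is genuinely space--time harmonic for the generator of $Z$, and check that the transformed measure is the regular conditional law $Z|(X_0=x_0,X_1=x_1)$ rather than some other conditioning. A secondary point is the boundary behavior at $t\uparrow1$: $\sigma^2_{1|t}\to0$ and the drift diverges like $1/(1-t)$, as for the Brownian bridge, so one should record that the SDE is well posed on $[0,1)$ and that $X_t\to x_1$ almost surely, ensuring the pinned process really is the bridge. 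The matrix-exponential bookkeeping needed to match the stated form of $u$ is then routine.
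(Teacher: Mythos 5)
Your proposal is correct and follows essentially the same route as the paper: a Doob $h$-transform of the augmented linear Gauss--Markov process with $h(t,z)=p_{X_1\mid Z_t}(x_1\mid z)$, a tower-property/Bayes check that this marginal conditional density is space--time harmonic for the generator of $Z$, and a Gaussian computation identifying $\nabla_z\log h$ with $u(t,z)$. The only (cosmetic) difference is that you extract $\mu_{1|t}$ and $\sigma^2_{1|t}$ from the transition matrix $e^{F(1-t)}$, whereas the paper obtains them from an explicit stochastic-Fubini decomposition of $X_{t+s}$ in terms of $(X_t,Y^1_t,\dots,Y^K_t)$; both yield the same coefficient vector $[1,\omega_1\zeta_1(t,1),\dots,\omega_K\zeta_K(t,1)]^{T}$.
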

See \Cref{fig:intuition} for a visualization of two-dimensional MA-fBB trajectories for different Hurst indices. We now incorporate fractional noise into generative diffusion bridge models by using the defined MA-fBB for both paired and unpaired data translation.

\vspace{-2mm}
\section{Fractional diffusion bridge models} 
\vspace{-2mm}

\paragraph{Paired data translation} Paired training data arises in tasks such as predicting conformational changes in proteins, where the unbound and bound states of the same protein form a pair \cite{somnath2023aligned,zhang2023bending,minan2025sesame}; \begin{wrapfigure}{l}{0.4\linewidth}
    \centering
    \vspace{-0.4cm}
    \includegraphics[width=\linewidth]{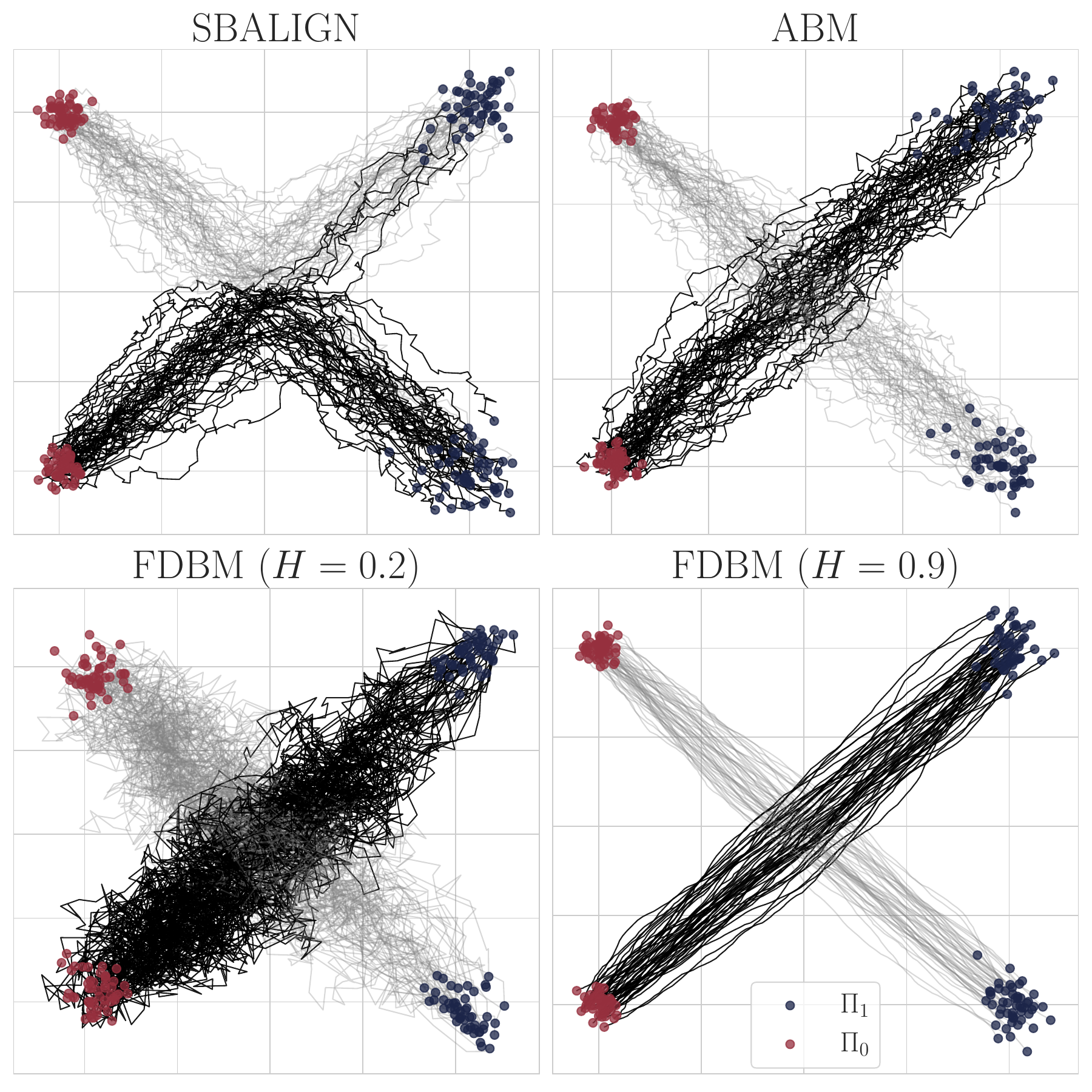}
    \caption{
    \small Illustration of FDBM coupling-preserving property shown in \Cref{prop:coupling_preserving}: ABM and FDBM preserve the intended coupling, unlike SBALIGN, while FDBM offers a broader range of trajectories.
    % FDBM preserves the coupling of a Gaussian mixtures with centers $(-2,-2)$ and  $(2,2)$ as well as the Gaussian mixture with centers $(-2,2)$ and $(2,-2)$, while offering a broader range of trajectories. 
    % Initial samples from $(-2,-2)$ are paired with terminal samples at $(2,2)$, and initial samples from $(-2,2)$ with $(2,-2)$. This coupling is not preserved by SBALIGN, in contrast to Augmented Bridge Matching (ABM). In line with Proposition 5, Fractional Diffusion Bridge Models (FDBMs) preserve the intended coupling while also offering a broader range of trajectories. In the rough regime $(H=0.2)$, trajectories explore more of the space, whereas in the smooth regime $(H=0.9)$, nearly straight-line paths emerge.
    }
    \label{fig:toy_coupling}
    \vspace{-0.5cm}
\end{wrapfigure}forecasting the future state of a cell, where two snapshots of cell differentiation are recorded on different days \cite{somnath2023aligned}; or reconstructing a clean image from its distorted counterpart \cite{yue2024image,liu2023i2sb}. 
We assume access to paired training data $(x^{i}_0, x^{i}_1)_{1 \leq i \leq N}$ independently sampled from the unknown coupling $(x^i_0, x^i_1) \sim \Pi_{0,1}$ on $\mathbb{R}^{d} \times \mathbb{R}^{d}$, with unknown marginals $\Pi_0$ and $\Pi_1$ on $\mathbb{R}^{d}$. The goal is to transport $\Pi_0$ to $\Pi_1$ via stochastic trajectories driven by MA-fBM, while preserving the coupling $\Pi_{0,1}$. To this end, we construct in the following proposition a stochastic process $X^\star$ that preserves the coupling in the sense that $(X^\star_0,X^\star_1)\sim\Pi_{0,1}$, and that solves an SDE, generalizing the result of \citet{debortoli2023augmentedbridgematching} to a driving MA-fBM. See \Cref{adx:ours-paired} for the proof.
\begin{proposition}\label{prop:coupling_preserving}
Fix the non-Markovian reference process $X = \sqrt{\varepsilon}\MAfbm$ with associated path measure $\Qm$, and denote by $Z = (X, Y)$ the augmented reference process with associated path measure $\Sm$. We write $\Sm^1_{1|t}$ for the conditional distribution of $X_1|Z_t$. Recall that $\Qm_{|0,1}$ denotes the path measure of the references process $X$ conditioned on $(x_0,x_1)\in\mathbb{R}^{d}\times\mathbb{R}^d$, and define $\Pd = \Pi_{0,1}\Qm_{|0,1}$ by integrating $(x_0,x_1)$ with respect to $\Pi_{0,1}$. Assuming that $\Pd$ is absolutely continuous with respect to $\Qm$ we can lift the path measure $\Pd$ to a coupling preserving path measure $\Pm^{\star}$ on the augmented space. Under the additional \Cref{as:assumption_on_h}, the SDE
\vspace{-1mm}
\begin{equation}
\label{eq:z_star}
  \dd Z^{\star}_t = F Z^{\star}_t \dd t + GG^{T}\mathbb{E}_{\Pm^\star_{1|0,t}}[\nabla_{z}\log \mathbb{S}^1_{1|t}(X^\star_1|Z^{\star}_t)|Z^{\star}_0,Z^{\star}_t] \dd t  + G\dd B_t,
  \vspace{-1mm}
\end{equation}
with initial vector $Z^\star_0 = (X_0,0\dots 0)$
admits a pathwise unique strong solution $Z^\star=(X^\star,Y^\star)$ with distribution $\Pm^{\star}$.
In particular, $X^\star$ preserves the coupling $\Pi_{0,1}$, that is, $(X^\star_0,X^\star_1)\sim\Pi_{0,1}$.
\end{proposition}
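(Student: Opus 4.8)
The plan is to follow the route of \citet[Proposition 3]{debortoli2023augmentedbridgematching}, replacing the scalar Brownian bridge used there by the linear-Gaussian Markov bridge $Z_{|x_0,x_1}$ of \Cref{prop:mafbm_bridge}. By construction, the lifted path measure $\Pm^\star$ from the statement has two features we exploit: (i) $(X^\star_0,X^\star_1)\sim\Pi_{0,1}$, so the coupling conclusion is automatic once every solution of \cref{eq:z_star} is shown to have law $\Pm^\star$; and (ii) disintegrating over the endpoints, $\Pm^\star = \int\Pi_{0,1}(\dd x_0,\dd x_1)\,\Sm_{|x_0,x_1}$, where $\Sm_{|x_0,x_1}$ is the law of the MA-fBB $Z_{|x_0,x_1}$. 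Since $Z$ is a linear-Gaussian Markov process, $\Sm^1_{1|t}(\cdot\mid z)$ is Gaussian with variance $\sigma^2_{1|t}$ and mean $\mu_{1|t}(z)$ affine in $z$; comparing \cref{eq:dynamics-pinned-process}--\cref{eq:def_u} with Doob's $h$-transform (the alternative derivation of \Cref{prop:mafbm_bridge}) identifies the pinned drift as $GG^{T}\nabla_z\log\Sm^1_{1|t}(x_1\mid\cdot)$, i.e.\ under $\Sm_{|x_0,x_1}$ the bridge solves $\dd Z_t = \bigl(FZ_t + GG^{T}\nabla_z\log\Sm^1_{1|t}(x_1\mid Z_t)\bigr)\dd t + G\,\dd\B_t$.

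The core of the argument is a Markovian projection. Set $\mathcal G_t := \mathcal F^{Z^\star}_t\vee\sigma(X^\star_1)$ and $\mathcal F_t := \mathcal F^{Z^\star}_t$, noting $X^\star_0$ is $\mathcal F_0$-measurable. By (ii), under $\Pm^\star$ the process $Z^\star$ is, with respect to $\mathcal G$, an It\^o process with constant diffusion matrix $GG^{T}$ and drift $FZ^\star_t + GG^{T}\nabla_z\log\Sm^1_{1|t}(X^\star_1\mid Z^\star_t)$, driven by a $\mathcal G$-Brownian motion. Projecting onto the subfiltration $\mathcal F$ via the innovations/optional-projection argument (as in the proof of \citet[Proposition 3]{debortoli2023augmentedbridgematching}) leaves the quadratic variation unchanged and replaces the drift by its $\mathcal F_t$-conditional expectation, so $Z^\star$ solves $\dd Z^\star_t = \bigl(FZ^\star_t + GG^{T}\mathbb{E}_{\Pm^\star}[\nabla_z\log\Sm^1_{1|t}(X^\star_1\mid Z^\star_t)\mid\mathcal F_t]\bigr)\dd t + G\,\dd\W_t$ for an $\mathcal F$-Brownian motion $\W$. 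It then remains to collapse the conditioning on the whole past $\mathcal F_t$ to conditioning on $(Z^\star_0,Z^\star_t)$: because the reference $Z$ is Markov and, by the $h$-transform structure of $\Pm^\star$, the Radon--Nikodym density of $\Pm^\star$ against the reference restricted to $\mathcal F_t$ factorises through $Z^\star_t$ (given $Z^\star_0$), the $\mathcal F_t$-conditional law of $X^\star_1$ under $\Pm^\star$ depends on the path only through $(Z^\star_0,Z^\star_t)$; hence $\mathbb{E}_{\Pm^\star}[\nabla_z\log\Sm^1_{1|t}(X^\star_1\mid Z^\star_t)\mid\mathcal F_t]$ reduces to the conditional expectation $\mathbb{E}_{\Pm^\star_{1|0,t}}[\,\cdot\mid Z^\star_0,Z^\star_t]$ appearing in \cref{eq:z_star}. (Gaussianity makes this drift explicit: it equals $FZ^\star_t + GG^{T}\nabla_z\mu_{1|t}(Z^\star_t)\,\bigl(\mathbb{E}_{\Pm^\star_{1|0,t}}[X^\star_1\mid Z^\star_0,Z^\star_t]-\mu_{1|t}(Z^\star_t)\bigr)/\sigma^2_{1|t}$, i.e.\ \cref{eq:def_u} with $x_1$ replaced by the conditional mean of $X^\star_1$.)

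The previous paragraph exhibits $\Pm^\star$ as a weak solution of \cref{eq:z_star}. Invoking \Cref{as:assumption_on_h}, which supplies the regularity (local Lipschitz continuity and at most linear growth) of the map $z\mapsto GG^{T}\mathbb{E}_{\Pm^\star_{1|0,t}}[\nabla_z\log\Sm^1_{1|t}(X^\star_1\mid z)\mid z_0,z]$ missing so far, and using that the remaining drift $Fz$ is affine and the diffusion coefficient $G$ constant, a standard existence-and-uniqueness theorem for It\^o SDEs yields pathwise uniqueness; by Yamada--Watanabe this upgrades the weak solution to a pathwise unique strong solution with uniqueness in law, so every solution of \cref{eq:z_star} has distribution $\Pm^\star$, and feature (i) gives $(X^\star_0,X^\star_1)\sim\Pi_{0,1}$. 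I expect the main obstacle to be the projection step: one has to verify the integrability conditions under which the optional projection of the $\mathcal G$-drift is well defined and the innovations process is genuinely an $\mathcal F$-Brownian motion, and that the projected drift inherits enough regularity for Yamada--Watanabe -- which is precisely what \Cref{as:assumption_on_h} is designed to secure -- whereas the $h$-transform identification of the pinned drift and the deduction of coupling preservation are routine.
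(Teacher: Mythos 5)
Your argument is correct in outline but follows a genuinely different route from the paper's. You establish weak existence by viewing $Z^\star$ under $\Pm^\star$ as an It\^o process with respect to the enlarged filtration $\mathcal F^{Z^\star}_t\vee\sigma(X^\star_1)$ and then passing to the natural filtration via the innovations/optional-projection argument, finally collapsing the conditioning to $(Z^\star_0,Z^\star_t)$ through the Bayes/$h$-transform factorisation. The paper instead never invokes the innovations theorem: it defines $h(z_0,t,z)=\mathbb{E}_{\Sm_{1|t}}[h_1(z_0,X_1)\mid Z_0=z_0,Z_t=z]$ with $h_1$ built from the Radon--Nikod\'ym density $\dd\Pm_{1|0}/\dd\Qm_{1|0}$, verifies that $\Pm^\star_{|0}(\,\cdot\,|z_0)$ solves the local martingale problem for the generator with drift $Fz+GG^T\nabla_z\log h(z_0,t,z)$ via the change-of-measure lemma of Palmowski--Rolski, and converts this into a weak solution by Kallenberg's Theorem 18.7; only at the very end is $\nabla_z\log h$ identified with the conditional expectation in \cref{eq:z_star}. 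From that point both proofs coincide: \Cref{as:assumption_on_h} ($h$ bounded away from zero and in $C^2_b$) yields global Lipschitz continuity of $\nabla_z\log h$, hence pathwise uniqueness and a strong solution, and coupling preservation is inherited from \Cref{lm:aug_coupling}. The trade-off is exactly the one you flag yourself: your route requires verifying integrability of the $\mathcal G$-drift and that the innovations process is an $\mathcal F$-Brownian motion, conditions that \Cref{as:assumption_on_h} is not phrased to deliver directly, whereas the martingale-problem route consumes that assumption wholesale and avoids the projection machinery; conversely, your route makes the probabilistic meaning of the drift (an optional projection of the bridge drift) more transparent and would generalise more readily to settings where the density $\dd\Pm/\dd\Qm$ is not available in the form needed for the change-of-measure lemma.
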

Given a data point $X^{\star}_0=x_0\sim\Pi_0$, and assuming we could simulate the coupling preserving process $Z^\star$, we could sample from the coupling $\Pi_{0,1}$ by simulating the SDE in \cref{eq:z_star} forward in time on $[0,1]$ to arrive at a sample $X^\star_1 = x_1$. As $X^\star$ preserves the coupling, it follows that $(x_0,x_1)$ is drawn from $\Pi_{0,1}$. However, the expectation in the drift of $Z^{\star}$ is intractable and hence we approximate this expectation by a time-dependent neural network $u^{\theta}$. We now define  \textit{Fractional Diffusion Bridge Models (FDBM)} for paired data translation as the stochastic process $Z^{\theta}$ associated with the path measure $\Pd^{\theta}$ solving 
\begin{align}
\label{eq:paired_nn_param}
  \dd Z^{\theta}_t &= FZ^{\theta}_t\dd t + GG^{T}u^{\theta}(t,X_0,Z^{\theta}_t) + G\dd B_t,\quad Z^{\theta}_0=(X_0,0,\dots,0), \\
  \label{eq:paired_input_transform}
  u^{\theta}_i(t,x_0,z)&=[1,\omega_1\zeta_1(t,1),\dots,\omega_K\zeta_K(t,1)]^{T}\tilde{u}^{{\theta}}_i(t,x_0,\mu_{1|t}(z)),\quad u^\theta=(u^{\theta}_1,\dots,u^\theta_d),
\end{align}
where $\tilde{u}^{{\theta}} = (\tilde{u}^{{\theta}}_1,\dots,\tilde{u}^{{\theta}}_d)$ is a time-dependent neural network that takes the starting value $x_0$ and the mean $\mu_{1|t}(z)$ of the conditional terminal $X_1|(Z_t=z)$ as an input. 
Note that the output dimensionality of the neural network $\tilde{u}^{\theta}$, trained in the following, correspond to the data dimension $d$. It is only scaled via \cref{eq:paired_input_transform} to obtain $u^{\theta}_t$, which has the output dimensionality of the augmented space. Hence, for FDBM, we can employ exactly the same model architectures as in ABM and simply transform the network input and output according to \cref{eq:paired_input_transform}. As a result, replacing BM with MA-fBM incurs minimal additional computational cost compared to ABM, as shown in \Cref{sec:computational cost}. To train FDBM for paired data translation we derive in \Cref{adx:ours-paired} the KL-divergence $\kl(\Pd^\star|\Pd^{\theta})$, which yields the loss function
\vspace{-1mm}
\begin{equation}
\label{eq:loss-paired}
    \mathcal{L}^{\text{paired}}_{\mathrm{FDBM}}(\theta):=\int^1_0 \mathbb{E}_{\Pm^\star}\left[\left\Vert \frac{X^{\star}_1-\mu_{1|t}(Z^{\star}_t)}{\sigma^2_{1|t}}-\tilde{u}^{\theta}(t,X_0,\mu_{1|t}(Z^{\star}_t))\right\Vert^2\right]\dd t.
\vspace{-1mm}
\end{equation}
To compute the above loss during training, we first sample $(x_0,x_1)\sim\Pi_{0,1}$ and $t\sim\mathcal{U}[0,1]$, and then sample $z_t\sim\mathbb{S}_{t|X_0,X_1}(\,\cdot\,|x_0,x_1)$. This is justified since $\Pd^{\star} = \Pi_{0,1}\mathbb{S}_{|X_0,X_1}$ by Corollary  \ref{cor:contructed_path_measure}.

To provide a first proof of concept of FDBM in the paired data setting, and in particular to illustrate the practical implications of \Cref{prop:coupling_preserving}, we replicate the toy experiment from \citet[Figure 1]{debortoli2023augmentedbridgematching}. Initial samples from a Gaussian centered at $(-2, -2)$ are paired with a Gaussian centered at $(2, 2)$, and samples from a Gaussian centered at $(-2, 2)$ are paired with one centered at $(2, -2)$. In \Cref{fig:toy_coupling}, we observe that this coupling is not preserved by SBALIGN, in contrast to ABM. Consistent with \Cref{prop:coupling_preserving}, FDBM preserves the intended coupling while offering a broader range of trajectories. In the rough regime ($H = 0.2$), trajectories explore a larger portion of the space, whereas in the smooth regime ($H = 0.9$), nearly straight-line paths emerge.

\paragraph{Unpaired data translation via optimal transport} For unpaired data translation, the goal is again to transport $\Pi_0$ to $\Pi_1$, but the training data consist of unpaired samples from $\Pi_0$ and $\Pi_1$ without a given coupling. The dynamic formulation of Entropic Optimal Transport (EOT) seeks the transport plan between $\Pi_0$ and $\Pi_1$ as the solution to the Schrödinger Bridge (SB) problem \cite{leonard2013survey}, which induces the corresponding optimal coupling. In the SB problem, the reference process defines the underlying stochastic dynamics that regularize the transport, determining how probability mass evolves between $\Pi_0$ and $\Pi_1$. We replace in the following the BM commonly used as a reference process in the formulation of SB problems in machine learning \cite{bortoli2021diffusion,vargas2021solving,peluchetti2023nondenoisingforwardtimediffusions,wu2022diffusionbased,liu2023learning,peluchetti2023diffusion,shi2023diffusion,debortoli2024schrodingerbridgeflowunpaired}
Let $X=\sqrt{\varepsilon}\MAfbm$ be our scaled MA-fBM reference process associated with the non-Markovian path measure $\Qm$. We seek a solution to the dynamic Schrödinger Bridge problem
\begin{equation}
\label{eq:dynproblem}
    \Tm^{SB} = \argmin_{\mathbb{T}\in\mathcal{P}(\mathcal{C}^d)}
    \left\{
    \kl(\mathbb{T}|\Qm)\,\,;\,\,\mathbb{T}_0 = \Pi_0,\,\,\mathbb{T}_1 = \Pi_1
    \right\}.
\vspace{-1mm}
\end{equation}
We assume that $\Tm^{SB}$ denotes a solution to \cref{eq:dynproblem}, inducing the coupling $\Pi^{SB}_{0,1} := \Tm^{SB}_{0,1}$. Assuming that $\mathbb{P}:=\Pi^{SB}_{0,1}\Sm_{X_0,X_1}$ is absolutely continuous with respect to $\Qm$ and under \Cref{as:assumption_on_h}, we can, via Proposition \ref{prop:coupling_preserving}, construct the $\Pi^{SB}_{0,1}$-coupling preserving path measure $\Pd^\star$ associated to the process $Z^\star=(X^\star,Y^\star)$ following the dynamics \cref{eq:z_star}. On the other hand, letting $\mathbb{S}$ be the path measure associated with the augmented reference process $Z$, we define using the marginals of $\Pm^\star$ the SB problem on the augmented space via
\vspace{-1mm}
\begin{equation}
\label{eq:augmented_dynproblem}
    \Vm^{SB} = \argmin_{\Vm\in\mathcal{P}(\mathcal{C}^{d\cdot(K+1)})}
    \left\{
    \kl(\Vm|\mathbb{S})\,\,;\,\,\Vm_0 = \Pm^{\star}_0,\,\,
    \Vm_1 =\Pm^{\star}_1
    \right\}.
  \vspace{-1mm}
\end{equation}
Since $Z$ is a Markov process, the path measure solving the lifted SB problem in \cref{eq:augmented_dynproblem} is associated with a Markovian process \cite{leonard2013survey}, whereas $Z^\star$ in \cref{eq:z_star} is non-Markovian due to its dependency on $X_0$ in the drift function. Motivated by this observation, we generalize in the following the definition of a reciprocal class \cite{leonard2014reciprocal,shi2023diffusion} and the notation of a Markovian projection \cite{gyngy1986mimicking,peluchetti2023nondenoisingforwardtimediffusions,shi2023diffusion} to our setting of a scaled MA-fBM reference process. We define the augmented reciprocal class $\mathcal{R}_{a}(\Sm)$ of $\Sm$ as the set of path measures $\Vm$ on the augmented space whose marginals can be sampled by first drawing $(x_0, x_1) \sim \Vm_{X_0,X_1}$ and then sampling $z_t \sim \Sm_{t|X_0,X_1}(\cdot \mid x_0, x_1)$.
\begin{definition}
We say that $\Vm\in\mathcal{P}(\mathcal{C}^{d\cdot(K+1)})$ is in the augmented reciprocal class  $\mathcal{R}_{a}(\Sm)$ of $\Sm$ if 
  \vspace{-2mm}
\begin{equation}
    \Vm= \int_{\mathbb{R}^d\times\mathbb{R}^d} \Sm_{|X_0,X_1}(\,\cdot\,|x_0,x_1) \dd \Vm_{X_0,X_1}(x_0,x_1)=:\Vm_{X_1,X_0}\Sm_{|X_0,X_1}.
  \vspace{-1mm}
\end{equation}
For any $\Vm\in\mathcal{P}(\mathcal{C}^{d\cdot(K+1)})$ we define the augmented reciprocal projection by 
  \vspace{-1mm}
\begin{equation}
\mathrm{proj}_{\mathcal{R}_{a}(\Sm)}(\Vm):=\Vm_{X_0,X_1}\Sm_{|X_0,X_1}.
  \vspace{-1mm}
\end{equation}
\end{definition}

\begin{figure}[t]
    \centering
     \begin{subfigure}[b]{0.45\textwidth}
        \centering
        \includegraphics[width=0.45\textwidth]{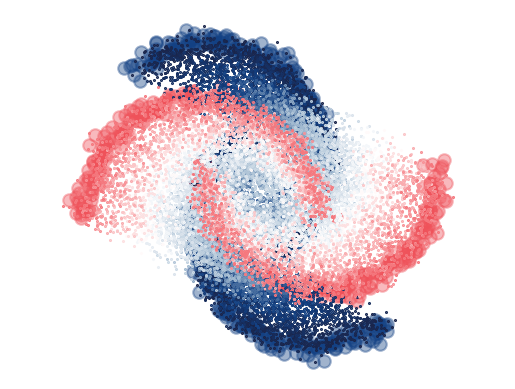}
        \includegraphics[width=0.45\textwidth]{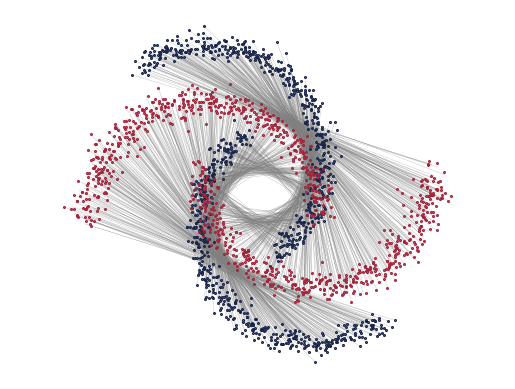}
        \caption{\scriptsize ABM$(\text{WSD}:0.015\pm 0.019)$}  
        %\caption{\scriptsize ABM}  
    \end{subfigure}
    \begin{subfigure}[b]{0.45\textwidth}
        \centering
        \includegraphics[width=0.45\textwidth]{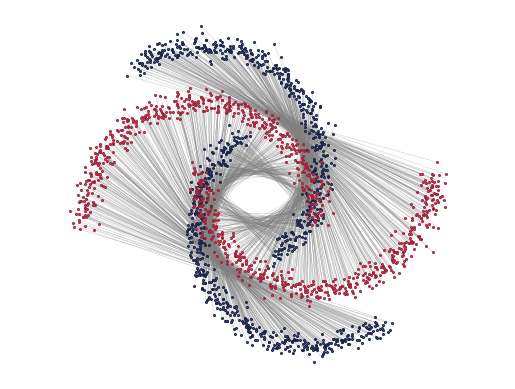}
        \includegraphics[width=0.45\textwidth]{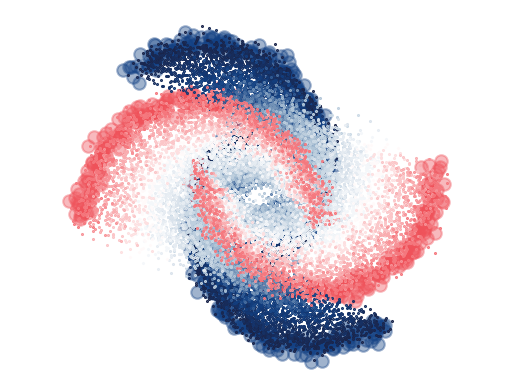}
        \caption{\scriptsize $\text{FDBM} (H=0.7; \text{WSD}$: $\bm{0.012}\pm 0.002)$} 
        %\caption{\scriptsize $\text{FDBM}(H=0.7)$} 
    \end{subfigure}
    \begin{subfigure}[b]{0.45\textwidth}
        \centering
        \includegraphics[width=0.45\textwidth]{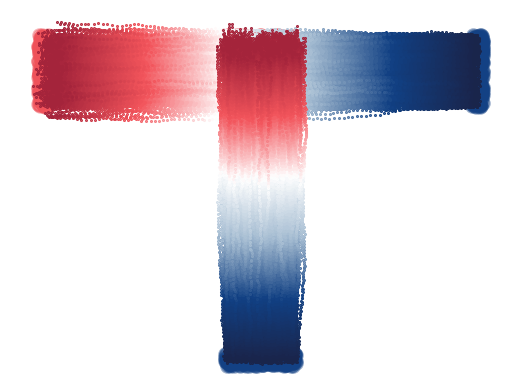}
        \includegraphics[width=0.45\textwidth]{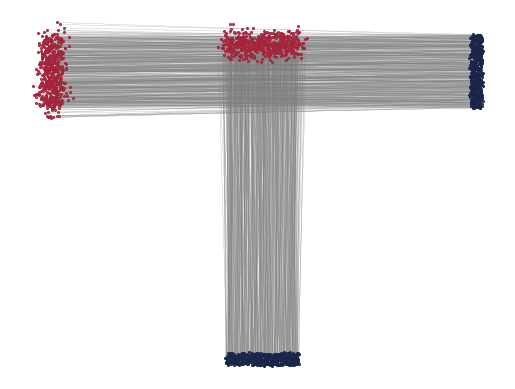}
        %\caption{\scriptsize ABM with averaged $\text{WSD}=0.082\pm 0.028$}
        \caption{\scriptsize ABM ($\text{WSD}$: $0.082\pm 0.028$)}
    \end{subfigure}
    \begin{subfigure}[b]{0.45\textwidth}
        \centering
        \includegraphics[width=0.45\textwidth]{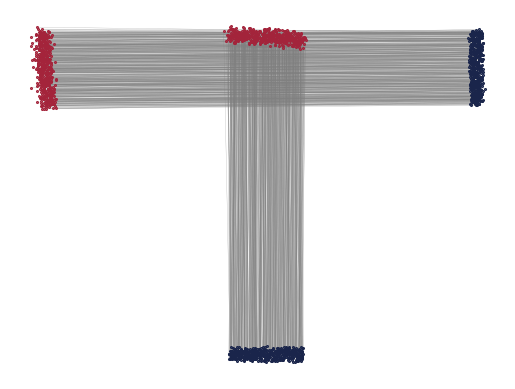}
        \includegraphics[width=0.45\textwidth]{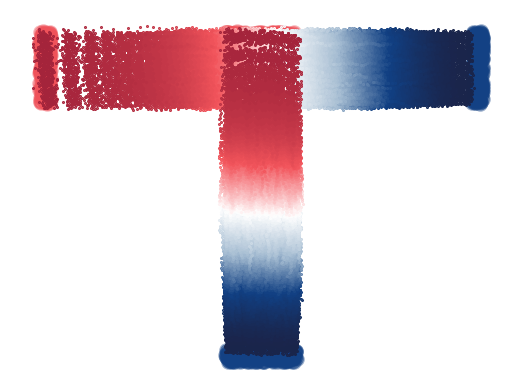}
        \caption{\scriptsize FDBM ($H=0.2$; $\text{WSD}$: $\bm{0.048}\pm 0.039$)}
    \end{subfigure}
    %%%%% phantom figure for legend %%%
    \begin{subfigure}[t]{\textwidth}
        \centering
        \vspace{-1.75cm}\includegraphics[width=0.75cm]{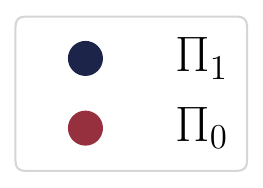}
    \end{subfigure}
    %%%%%%%%%%%%%%%%
    \vspace{-0.75cm}
    \caption{\small Qualitative comparison on \textit{Moons} and \textit{T-Shape}. Plots and datasets design follow
    \citet{somnath2023aligned}.}
    \label{fig:aligned_toy_qualitative_new}
    %\vspace{-3mm}
\end{figure} 
Since we know that the solution to the lifted SB problem in \cref{eq:augmented_dynproblem} is a Markovian measure, we project any element of the augmented reciprocal class to a Markovian path measure by the following definition.
\begin{definition}
For $\Vm\in\mathcal{P}(\mathcal{C}^{d\cdot(K+1)})$ with $\Vm\in\mathcal{R}_{a}(\Sm)$
we define the augmented Markovian projection $\mathrm{proj}_{\mathcal{M}_{a}}(\Vm)$ by the path measure associated to $M=(M^1,M^2,\dots M^{K+1})$ solving for $M^1_0\sim\Vm_{M^1_0}$
%\vspace{-1mm}
\begin{equation}
    \dd  M_t = F  M_t \dd t + GG^T\mathbb{E}_{\Vm_{1|t}}\left[\nabla_{m_t}\log\Sm^{1}_{1|t}(M^1_1|M_t)|M_t\right]\dd t+G\dd B_t,\quad M_0=(M^{1}_0,0_{K}).
  \vspace{-1mm}
\end{equation}
\end{definition}
Finally, we define FDBM for unpaired data translation as a stochastic process $Z^\theta$ associated with the path measure $\Pd^\theta$ solving
\begin{align}
\label{eq:unpaired_nn_param}
  \dd Z^\theta_t &= FZ^\theta_t\dd t + GG^{T}v^{\theta}(t,Z^\theta_t) + G\dd B_t,\quad Z^\theta_0=(X_0,0,\dots,0), \\
  v^{\theta}_i(t,z)&=[1,\omega_1\zeta_1(t,1),\dots,\omega_K\zeta_K(t,1)]^{T}\tilde{v}^{{\theta}}(t,\mu_{1|t}(z))_i,\quad v^\theta=(v^\theta_1,\dots,v^\theta_d),
\end{align}
where, in contrast to the paired setting in \cref{eq:paired_nn_param}, we do not provide the starting value $X_0$ as an input to the neural network $v^{{\theta}}_t$. We conjecture that the results of \citet{peluchetti2023diffusion} and \citet{shi2023diffusion} generalize to our setting, such that the path measure solving the lifted SB problem in \cref{eq:augmented_dynproblem} is the only Markovian path measure in the augmented reciprocal class $R_{a}(\Sm)$ \textit{and} that a solution to the lifted SB problem give in its first marginal a solution to the SB problem in \cref{eq:dynproblem}. Following \citet{debortoli2024schrodingerbridgeflowunpaired} we define for our scaled MA-fBM reference process a flow of path measures $(\tilde{\Pm}^{s},\hat{\Pm}^s)_{s\geq 0}$ recursively by
\begin{equation}
    \hat{\mathbb{P}}^0 = (\Pi_0 \otimes \Pi_1)\Sm_{|X_0,X_1},\quad \partial_s \hat{\mathbb{P}}^s=\mathrm{proj}_{\mathcal{R}_{a}(\Sm)}(\mathrm{proj}_{\mathcal{M}_a(\Sm)}(\hat{\mathbb{P}}^s)) - \hat{\mathbb{P}}^s,\quad \tilde{\mathbb{P}}^s = \mathrm{proj}_{\mathcal{M}_{a}(\Sm)}(\hat{\mathbb{P}}^s),
\end{equation}
and propose the generalized loss function
%\begin{scriptsize}
\tiny
\vspace{-1mm}
\begin{equation}
\label{eq:paired_loss}
    \mathcal{L}^{\mathrm{unpaired}}_{\mathrm{FDBM}}(\theta,\tilde{\mathbb{P}}) 
    =\int^1_0\int_{(\mathbb{R}^{d\cdot(K+1)})}\int_{(\mathbb{R}^{d})^2}\left\Vert \tilde{v}^{\theta}(t,\mu_{1|t}(z_t))-\frac{x_1- \mu_{1|t}(z)}{\sigma^{2}_{1|t}}\right\Vert^2\dd\tilde{\Pm}_{X_0,X_0}(x_0,x_1)\dd \Sm_{t|X_0,X_1}(z_t|x_0,x_1)\dd t. 
\vspace{-1mm}
\end{equation}
%\end{scriptsize}
\normalsize
We define $\alpha$-Iterative Markovian Fitting ($\alpha$-IMF) with respect to a scaled MA-fBM reference process using the loss function in \cref{eq:paired_loss}, following \citet[Algorithm 1]{debortoli2024schrodingerbridgeflowunpaired} with a two-stage training procedure consisting of pretraining and finetuning. As discussed in \Cref{sec:loss_reg}, simulating the time reversal of \cref{eq:unpaired_nn_param} is generally intractable, since the terminal value of the noise process depends on information from the initial distribution $\Pi_0$. We therefore adopt the forward-forward training strategy described in \citet[Appendix I]{debortoli2024schrodingerbridgeflowunpaired}, and mitigate error accumulation through the loss scaling proposed in \Cref{sec:loss_reg}.

We emphasize that we do not claim convergence of the resulting algorithm to the solution of the Schrödinger bridge problem in \cref{eq:dynproblem}. Empirically, we observe that the finetuning stage with an MA-fBM reference process performs reliably only in regimes close to $H=0.5$. We hypothesize that this limitation arises from discrepancies between the Schrödinger bridge transforming $\Pi_0 \to \Pi_1$ and the Schrödinger bridge  transformation $\Pi_1 \to \Pi_0$. See \Cref{sec:theory-unpaired} for more details on challenges and limitations of FDBM in the unpaired data setting.

\section{Experiments}
\label{sec:experiments}
We evaluate the performance of FDBM on both \textbf{paired} and \textbf{unpaired} data translation tasks; see \Cref{sec:metrics} for a detailed description of the evaluation metrics. In the paired setting, we first show in a proof-of-concept on synthetic data that the alignment of training data is preserved and then predict conformational changes in proteins. In the unpaired setting, we consider image-to-image translation across visually distinct domains. 
Detailed architectural specifications, compute resources, training protocols, and dataset descriptions are provided in \Cref{app:implementation_paired,app:implementation_unpaired}, additional experiments are reported in \Cref{app:experiments}, and an additional use case on cell differentiation is presented in \Cref{sec:cell_differentiation}.

\vspace{-1mm}
\subsection{Experiments on paired data translation}

\paragraph{Synthetic data}
We evaluate FDBM on the \textit{Moons} and \textit{T-Shape} datasets introduced by \citet{somnath2023aligned}, and depicted in \Cref{fig:aligned_toy_marginals},
where the goal is to transport the initial distribution (blue) to the target distribution (red) while preserving training data alignment. Quantitative performance is assessed using the Wasserstein-1 distance (WSD) between the generated and true target distributions, averaged across the two data dimensions and over ten training trials, where for each training trial $10{,}000$ trajectories are sampled for evaluation. We first do in \Cref{tab:toy} an ablation on the best performing diffusion coefficient $\sqrt{\varepsilon}$ for our baseline ABM, where we find the best performance for $\sqrt{\varepsilon}=0.8$ on Moons and $\sqrt{\varepsilon}=0.2$ on T-shape. In \Cref{tab:toy_abm_vs_fdbm} we observe that FDBM improves the quantiative performance on both datasets. 
For the Moons dataset, optimal performance is achieved for $\sqrt{\varepsilon}=0.8$ in the smoother regime with $H \in \{0.6, 0.7\}$, suggesting benefits from more regular trajectories. Conversely, for the T-shape dataset, rougher dynamics with $H = 0.2$ and $\sqrt{\varepsilon}=0.1$ yield the lowest WSD. In \Cref{fig:aligned_toy_qualitative_new}, we observe that both ABM and FDBM preserve the training data alignment, with FDBM showing qualitatively better performance on T-Shape.

\begin{SCtable}[][tb]
    \centering
    \caption{
    %Conformational changes results on the D3PM Test Set. Results marked with an asterisk ($^{\star}$) are obtained from the specified reference. Scores of FDBM and ABM \cite{debortoli2023augmentedbridgematching} are averaged over $5$ training trials.
    D3PM Conformational changes, results marked with an asterisk ($^{\star}$) are obtained from the specified reference. 
    %Scores of FDBM and ABM \cite{debortoli2023augmentedbridgematching} are averaged over $5$ training trials.
    Metrics for FDBM and ABM are averaged over $5$ training trials.
    }
    \label{tab:d3pm-full}
    %\setlength{\tabcolsep}{3pt}
    % let's avoid scalebox<1 unless we really need that extra 1mm space
    %\begin{scriptsize}
    \scalebox{0.625}{
        \begin{tabular}{@{}llllllllllll@{}}
                \toprule
                % HEADING: the Hurst index
            \multicolumn{1}{c}{\multirow{3}{*}{\textbf{D3PM Test Set \cite{somnath2023aligned}}}}            
            & \multicolumn{3}{c}{RMSD(\AA) $\downarrow$} && \multicolumn{3}{c}{\% $\text{RMSD(\AA})<\tau$ $\uparrow$} && \multicolumn{3}{c}{$\Delta$ RMSD(\AA) $\uparrow$}\\
            \cmidrule{2-4} \cmidrule{6-8} \cmidrule{10-12} 
    		        &  Median & Mean & Std &&  $\tau=2$ & $\tau=5$ & $\tau=10$ && Median & Mean & Std\\
                \midrule
                EGNN$^{\star}$ \cite{somnath2023aligned}
                &$19.99$ %Median
                &$21.37$ %Mean
                &$8.21$ %Std
                &
                &$1\%$  %\tau = 2
                &$1\%$ %\tau = 5
                &$3\%$ %\tau = 10
                &
                & -
                & -
                & -
                \\
                $\text{SBALIGN}^{\star}_{(10,10)}$ \cite{somnath2023aligned} 
                &$3.80$ %Median
                &$4.98$ %Mean
                &$3.95$ %Std
                &
                &$0\%$  %\tau = 2
                &$69\%$ %\tau = 5
                &$93\%$ %\tau = 10
                &
                & -
                & -
                & -
                \\
                $\text{SBALIGN}^{\star}_{(100,100)}$ \cite{somnath2023aligned} 
                &$3.81$ %Median
                &$5.02$ %Mean
                &$3.96$ %Std
                &
                &$0\%$ %\tau = 2
                &$70\%$ %\tau = 5
                &$93\%$ %\tau = 10
                &
                & -
                & -
                & -
                \\
                $\text{SBALIGN}^{\star}$ \cite{minan2025sesame}
                &$3.67$ %Median
                &$4.82$ %Mean
                &$3.93$ %Std
                &
                &$0\%$ %\tau = 2
                &$71\%$ %\tau = 5
                &$93\%$ %\tau = 10
                &
                & 1.30 %Delta Median
                & 1.92 %Delta Mean
                & 2.59 %Delta Std
                \\
                Sesame$^{\star}$ \cite{minan2025sesame}
                &$2.87$ %Median
                &$3.65$ %Mean
                &$2.95$ %Std
                &
                &$38\%$ %\tau = 2
                &$82\%$ %\tau = 5
                &$96\%$ %\tau = 10
                &
                & 2.15 %Delta Median
                & 3.11 %Delta Mean
                & 4.26 %Delta Std
                \\
                %\cdashlinelr{1-8}
            \hdashline
                $\text{ABM}$  \cite{debortoli2023augmentedbridgematching}  (retrained)
                &$2.40$ %Median
                &$3.49$ %Mean
                &$3.54$ %Std
                &
                &$43\%$ %\tau = 2
                &$84\%$ %\tau = 5
                &$96\%$ %\tau = 10
                &
                & 2.43 %Delta Median
                & 3.35 %Delta Mean
                & 4.29 %Delta Std
                \\
            \hdashline
                %\cdashlinelr{1-8}
                % $\text{FDBM}(H=0.4,\varepsilon=0.2)$ (\textbf{ours})
                % &$2.24$ %Median
                % &$3.39$ %Mean
                % &$3.57$ %Std
                % &
                % &$45\%$ %\tau = 2
                % &$84\%$%\tau = 5
                % &$97\%$ %\tau = 10
                % \\
                $\text{FDBM}(H=0.3)$ (\textbf{ours})
                &$2.33$ %Median
                &$3.42$ %Mean
                &$3.42$ %Std
                &
                &$43\%$  %\tau = 2
                &$85\%$ %\tau = 5
                &$97\%$ %\tau = 10
                &
                & \textbf{2.52} %Delta Median
                & \textbf{3.49} %Delta Mean
                & 4.39 %Delta Std
                \\
                $\text{FDBM}(H=0.2)$ (\textbf{ours})
                &\bm{$2.12$} %Median
                &\bm{$3.34$} %Mean
                &$3.59$ %Std
                &
                &\bm{$48\%$} %\tau = 2
                &\bm{$86\%$} %\tau = 5
                &$96\%$ %\tau = 10
                &
                & 2.44 %Delta Median
                & 3.39 %Delta Mean
                & 4.28 %Delta Std
                \\
                $\text{FDBM}(H=0.1)$ (\textbf{ours})
                &$2.20$ %Median
                &$3.44$ %Mean
                &$3.57$ %Std
                &
                &$46\%$ %\tau = 2
                &$83\%$%\tau = 5
                &\bm{$97\%$} %\tau = 10
                &
                & 2.47 %Delta Median
                & 3.45 %Delta Mean
                & 4.29 %Delta Std
                \\
                \bottomrule
        \end{tabular}
    %\vspace{-1mm}
    }
    %\end{scriptsize}
\end{SCtable}

\noindent\textbf{Conformational changes in proteins.} 
Following the training and evaluation setup of \citet{somnath2023aligned}, we use their curated subset of the D3PM dataset \cite{d3pm_dataset} to evaluate the ability of FDBM to predict 3D ligand-bound (holo) structures from given 3D ligand-free (apo) unbound protein conformations. Performance is quantified using the root-mean-square deviation (RMSD) over carbon atom coordinates. To assess whether a predicted structure is closer to the target holo conformation than to the initial apo conformation, we further compute the $\Delta\text{RMSD}$, where positive values indicate better performance \cite{zhang2023bending}. We first optimize our baseline ABM with respect to the diffusion coefficient $\sqrt{\varepsilon}$ and find that a low value of $\sqrt{\varepsilon} = 0.2$ yields the best performance for ABM (see \Cref{tab:d3pm-ablation}). We then use the same training configuration and a diffusion coefficient of $\sqrt{\varepsilon} = 0.2$, to train ABM and FDBM five times and report the averaged scores over sampled trajectories from these trials. We first observe in \Cref{tab:d3pm-full} that ABM outperforms SBALIGN \cite{somnath2023aligned} across all variants and metrics, and Sesame \cite{minan2025sesame} in all but one metric, highlighting the strength of our baseline. 
For our FDBM, we find in \Cref{tab:d3pm-full} that all configurations in the rough regime ($H = 0.3, 0.2, 0.1$) of MA-fBM achieve equal or better performance across all but one metric compared to the best-performing baseline, ABM. The best overall performance for the $\Delta\text{RMSD}$ metric is achieved for $H=0.3$, indicating that FDBM generated structures are closer to the target holo conformations-—relative to their apo starting points-—than those produced by ABM or Sesame. For $H=0.2$ and $H=0.3$, FDBM matches or exceeds ABM and Sesame across all evaluated metrics. In particular, an RMSD below $2$\AA\ is commonly used as a threshold for correct bound structure prediction \cite{chang2008empirical} and structural discernibility \cite{somnath2023aligned,minan2025sesame}. Accordingly, the proportion of predictions falling below this threshold is a direct indicator of the model's ability to generate physically realistic conformations. FDBM increases the proportion of correct and discernible predictions ($RMSD<2$\AA\ ) from $43\%$ with ABM to $48\%$, while also improving the median RMSD from $2.40$\AA\ to $2.12$\AA\ . This indicates that, in the rough regime of MA-fBM, FDBM produces on average a slightly higher fraction of near-native structures compared to ABM.

\subsection{Unpaired data translations}
% We can cut 1 or 2 rows for these figures if we need more space, but I'd like to keep the figure, as it gains the readers trust in the alignment of our method 
\begin{figure}[tb]
    \vspace{-5mm}   
    \centering
    \begin{subfigure}[t]{0.49\linewidth}
        \captionsetup{width=0.975\linewidth} % adjust as needed
        \centering
        \includegraphics[width=\linewidth]{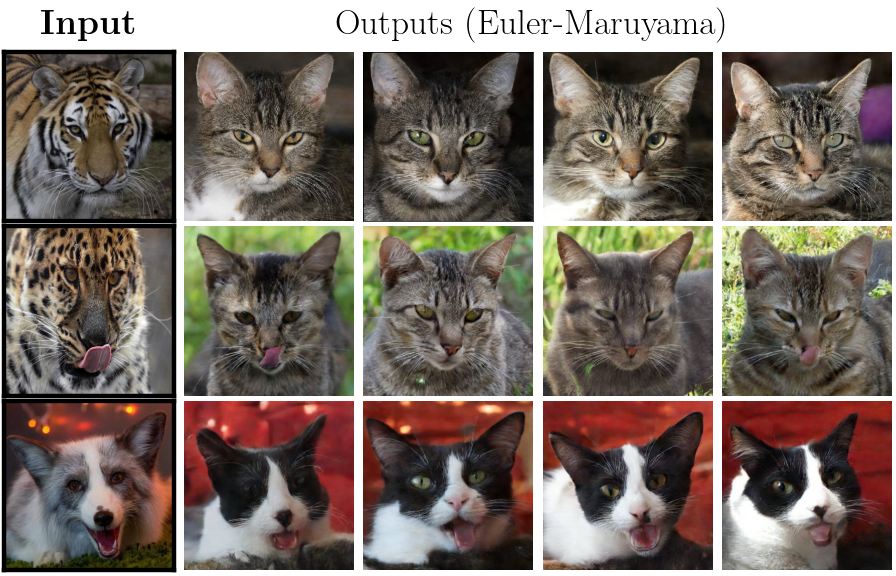}
        \caption{FDBM (H=0.4, K=5; FID: 30.11) for AFHQ-512} %\textit{(FID=30.11)}.}
    \end{subfigure}
    \begin{subfigure}[t]{0.49\linewidth}
        \captionsetup{width=0.975\linewidth} % adjust as needed
        \centering
        \includegraphics[width=\linewidth]{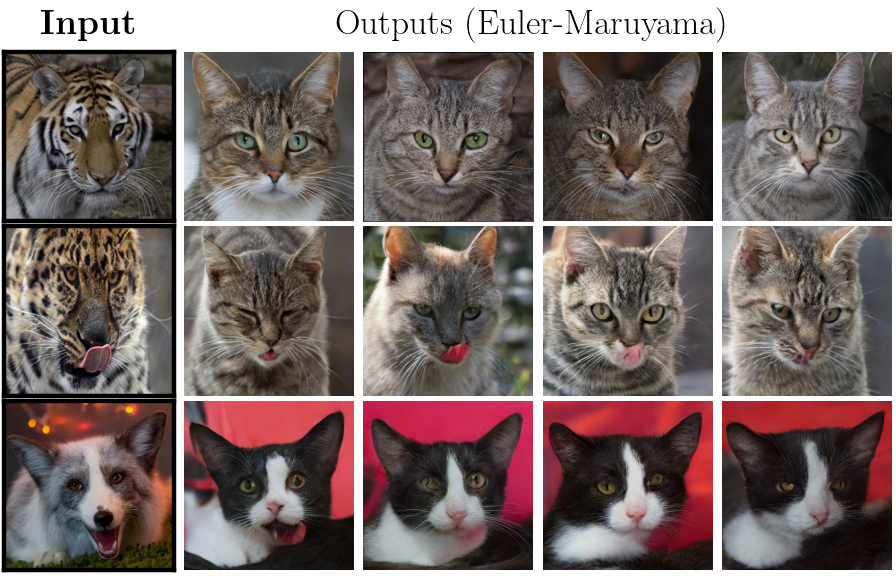}
        \caption{FDBM (H=0.6, K=5; FID: 19.42) for AFHQ-256} %\textit{(FID=19.42)}.}
    \end{subfigure}
    \begin{subfigure}[t]{0.49\linewidth}
        \captionsetup{width=0.975\linewidth} % adjust as needed
        \centering
        \includegraphics[width=\linewidth]{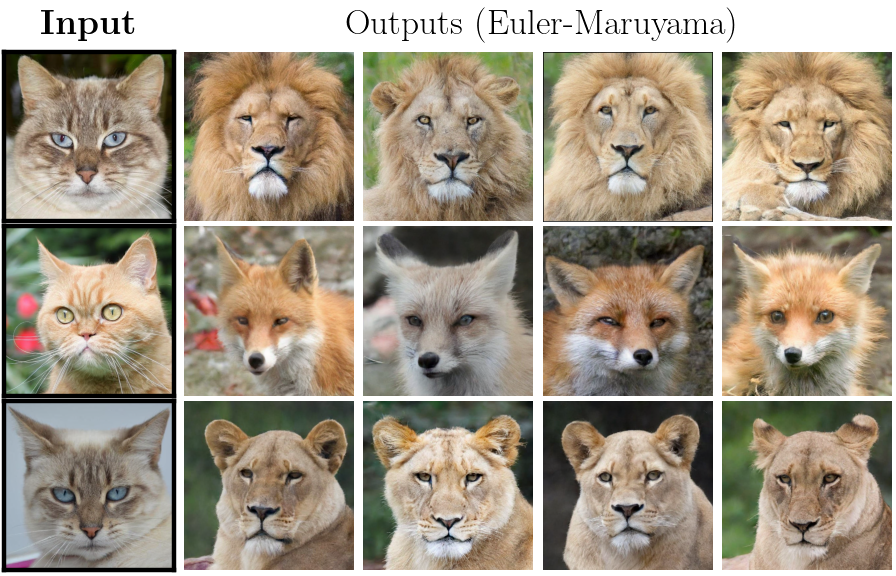}
        \caption{FDBM (H=0.4, K=5; FID: 14.27) for AFHQ-512} %\textit{(FID=14.27)}.}
    \end{subfigure}
    \begin{subfigure}[t]{0.49\linewidth}
        \captionsetup{width=0.975\linewidth} % adjust as needed
        \centering
        \includegraphics[width=\linewidth]{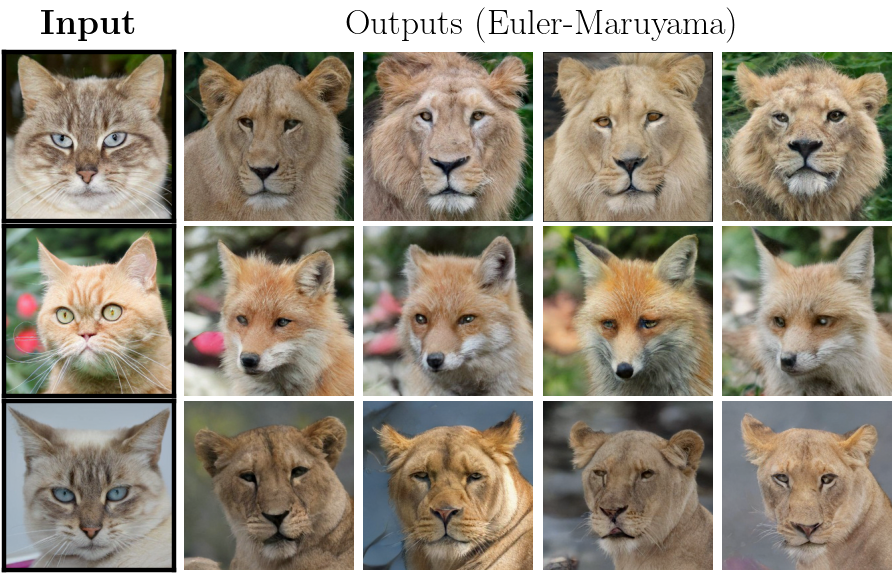}
        \caption{FDBM (H=0.6, K=5; FID: 11.62) for AFHQ-256} %\textit{(FID=11.62)}.}
    \end{subfigure}
    \caption{
    %Exemplary samples of FDBM (\textbf{ours}) after pertaining for the  wild $\rightarrow$ cat process ((a) and (b)) and the cat $\rightarrow$ wild process ((c) and (d)). We used a DiT-L/2 architecture for samples of the AFHQ-512 and AFHQ-256 datasets. The left-most column displays inputs and the trailing columns display multiple samples obtained from the Euler-Maruyama method. We selected inputs with common and uncommon poses and background features.
    Exemplary FDBM samples (\textbf{ours}) for wild $\rightarrow$ cat (a, b) and cat $\rightarrow$ wild (c, d) using DiT-L/2 on AFHQ-512 and AFHQ-256. Left: inputs; right: Euler--Maruyama samples (distinct seeds). %\vspace{-5mm}
    }
    \label{fig:512_examples}
    %\vspace{-5mm}
\end{figure}
% \begin{figure}[tb]
%     \centering
%     % \begin{subfigure}[t]{0.49\linewidth}
%     %     \centering
%     %     \includegraphics[width=\linewidth]{}
%     %     \caption{AFHQ-256 wild $\rightarrow$ cat \textit{(FID=19.42)}.}
%     % \end{subfigure}
%     \begin{subfigure}[t]{0.49\linewidth}
%         \captionsetup{width=0.925\linewidth} % adjust as needed
%         \centering
%         \includegraphics[width=\linewidth]{}
%         \caption{FSBFLow (H=0.4, K=5) for AFHQ-512 \textit{(FID=14.27)}.}
%     \end{subfigure}
%     \begin{subfigure}[t]{0.49\linewidth}
%         \captionsetup{width=0.925\linewidth} % adjust as needed
%         \centering
%         \includegraphics[width=\linewidth]{}
%         \caption{FSBFLow (H=0.6, K=5) for AFHQ-256 \textit{(FID=11.62)}.}
%     \end{subfigure}
%     \caption{Exemplary samples of FSBFlow (\textbf{ours}) for the  cat $\rightarrow$ wild process after pretraining using a DiT-L/2 architecture for samples of the AFHQ-256 and AFHQ-512 datasets. The left-most column displays inputs and the trailing columns display multiple samples obtained from the Euler-Maruyama method. We selected inputs with common and uncommon poses and background features. \textcolor{red}{todo: combine figures}}
%     \label{fig:256_examples}
% \end{figure}

Unpaired data translation is evaluated for the cat and wild subsets of the AFHQ dataset \citep{choi2020starganv2}. Experiments range from low-resolution pixel space ($32\times32$) to high-resolution latent space settings ($256\times256$, $512\times512$) \citep{rombach2022high}. Following the regime in \citet{debortoli2024schrodingerbridgeflowunpaired}, we report Fr\'{e}chet Inception Distance (FID) \cite{Heusel2017} and Learned Perceptual Image Patch Similarity (LPIPS) \cite{Zhang2018} scores. Given the sensitivity of metrics---especially at low resolutions where pixel-level perturbations dominate---each configuration is evaluated at ten distinct seeds, with mean and standard deviation (or error bands) reported.
To ensure comparability, pixel data is normalized by the standard deviation of AFHQ-32, while latent representations are scaled using the standard deviation of the latent space. This harmonization enables consistent settings for $\varepsilon$ in both domains, leading to consistent performance trends (\Cref{fig:ablation_AFHQ-32-eps,fig:ablation_AFHQ-256-eps}).
We use a Diffusion Transformer (DiT) \citep{peebles2023scalable} backbone, where DiT-B/2 is used for ablations and DiT-L/2 for final evaluations. Pretraining is conducted for 100K steps, followed by 4K finetuning steps, samplings follow the Euler--Maruyama method \citep{CohenElliott}. We compare to SBFlow and adopt an SBFlow-optimized entropic regularization parameter for FDBM experiments. Further, we evaluate Hurst indices $H \in \{0.1, 0.2, \ldots, 0.9\}$ and the number of OU processes $K \in \{1, \ldots, 6\}$ to analyze sensitivity in sparse (AFHQ-32) and dense (AFHQ-256 and AFHQ-512) features.

\begin{table}[tb]
    \tiny
    \setlength{\tabcolsep}{1.5pt}
    \centering
    \vspace{-2mm}
    \caption{
    % Results for AFHQ-32 and AFHQ-256, averaged over 10 evaluation runs, the standard deviations of evaluation runs are listed next to the scores. Boldface highlights the best scores and those within one standard deviation of the best result (separately for pretaining and online fine-tuning).
    Results for AFHQ-32 and AFHQ-256 (10 runs average). Standard deviations are reported beside each score. Bold indicates the best result and those within one standard deviation.\vspace{-4mm}
    }
    \begin{subtable}{1\linewidth}
        \centering
        \caption{AFHQ-32 results with hyperparameters $\varepsilon=1$ and $H=0.5, K=5$.}
        \begin{tabular}{l c @{\hspace{4.5mm}} cc cc @{\hspace{4.5mm}} cc cc}
        \toprule
            \multicolumn{1}{c}{\multirow{3}{*}{Method}} 
            & \multicolumn{1}{c}{\multirow{3}{*}{Architecture}}
            & \multicolumn{4}{c}{Pretraining}
            & \multicolumn{4}{c}{Online Finetuning}
            \\
            && \multicolumn{2}{c}{cats $\rightarrow$ wild}
            & \multicolumn{2}{c}{cats $\leftarrow$ wild}
            & \multicolumn{2}{c}{cats $\rightarrow$ wild}
            & \multicolumn{2}{c}{cats $\leftarrow$ wild}\\
        \cmidrule{3-10}              
            &&FID $\downarrow$ & LPIPS $\downarrow$           
            & FID $\downarrow$ & LPIPS $\downarrow$
            &FID $\downarrow$ & LPIPS $\downarrow$           
            & FID $\downarrow$ & LPIPS $\downarrow$\\
            \midrule
            SBFlow  & DiT-B/2
                &$59.04$ \scalebox{0.75}{$\pm 1.14$}& $0.104$ \scalebox{0.75}{$ \pm 0.001$}& $74.36$ \scalebox{0.75}{$ \pm 1.02$}& $\mathbf{0.151}$ \scalebox{0.75}{$ \pm 0.001$} 
                & $43.85$ \scalebox{0.75}{$\pm 0.48$}& $0.083$ \scalebox{0.75}{$ \pm 0.001$}& $64.77$ \scalebox{0.75}{$ \pm 0.78$}& $0.107$ \scalebox{0.75}{$ \pm 0.000$} \\
            SBFlow  & DiT-L/2
                &$50.68$ \scalebox{0.75}{$\pm 0.72$}& $0.106$ \scalebox{0.75}{$ \pm 0.001$}& $71.77$ \scalebox{0.75}{$ \pm 0.77$}& $0.152$ \scalebox{0.75}{$ \pm 0.001$} 
                & $33.92$ \scalebox{0.75}{$\pm 0.59$}& $0.091$ \scalebox{0.75}{$ \pm 0.000$}& $54.10$ \scalebox{0.75}{$ \pm 0.72$}& $0.098$ \scalebox{0.75}{$ \pm 0.001$} \\
            \hdashline
             FDBM (\textbf{ours})  & DiT-B/2  
                & $40.21$ \scalebox{0.75}{$\pm 1.18$}& $\mathbf{0.097}$ \scalebox{0.75}{$ \pm 0.001$}& $\mathbf{45.74}$ \scalebox{0.75}{$ \pm 0.69$}& $0.154$ \scalebox{0.75}{$ \pm 0.002$} 
                & $25.66$ \scalebox{0.75}{$\pm 0.81$}& $\mathbf{0.073}$ \scalebox{0.75}{$ \pm 0.001$}& $28.33$ \scalebox{0.75}{$ \pm 0.35$}& $\mathbf{0.078}$ \scalebox{0.75}{$ \pm 0.001$} \\
             FDBM (\textbf{ours})  & DiT-L/2 
                &$\mathbf{35.99}$ \scalebox{0.75}{$\pm 0.72$}& $0.101$ \scalebox{0.75}{$ \pm 0.001$}& $48.84$ \scalebox{0.75}{$ \pm 0.75$}& $0.165$ \scalebox{0.75}{$ \pm 0.002$}
                & $\mathbf{20.26}$ \scalebox{0.75}{$\pm 0.59$}& $0.079$ \scalebox{0.75}{$ \pm 0.001$}& $\mathbf{26.79}$ \scalebox{0.75}{$ \pm 0.50$}& $0.085$ \scalebox{0.75}{$ \pm 0.001$}\\
        \bottomrule
        \end{tabular}
        
        \label{tab:results-32}
    \end{subtable}
    \begin{subtable}{1\linewidth}
        \centering
        \caption{AFHQ-256 results with hyperparameters $\varepsilon=1$ and $H=0.6, K=5$.}
        \begin{tabular}{l c @{\hspace{4.5mm}} cc cc @{\hspace{4.5mm}} cc cc}
        \toprule
            \multicolumn{1}{c}{\multirow{3}{*}{Method}} 
            & \multicolumn{1}{c}{\multirow{3}{*}{Architecture}}
            & \multicolumn{4}{c}{Pretraining}
            & \multicolumn{4}{c}{Online Finetuning}
            \\
            && \multicolumn{2}{c}{cats $\rightarrow$ wild}
            & \multicolumn{2}{c}{cats $\leftarrow$ wild}
             & \multicolumn{2}{c}{cats $\rightarrow$ wild}
             & \multicolumn{2}{c}{cats $\leftarrow$ wild}
            \\
        \cmidrule{3-10}              
            &&FID $\downarrow$ & LPIPS $\downarrow$           
            & FID $\downarrow$ & LPIPS $\downarrow$
             &FID $\downarrow$ & LPIPS $\downarrow$           
             & FID $\downarrow$ & LPIPS $\downarrow$
            \\
            \midrule
            SBFlow & DiT-B/2
                & $15.67$ \scalebox{0.75}{$\pm 0.65$}& $0.578$ \scalebox{0.75}{$ \pm 0.002$}& 
                    $30.75$ \scalebox{0.75}{$ \pm 0.88$}& $0.594$ \scalebox{0.75}{$ \pm 0.001$}
                & $\mathbf{17.50}$ \scalebox{0.75}{$\pm 0.87$}& $\mathbf{0.528}$ \scalebox{0.75}{$ \pm 0.001$}& $\mathbf{25.86}$ \scalebox{0.75}{$ \pm 0.32$}& $\mathbf{0.537}$ \scalebox{0.75}{$ \pm 0.001$}
                \\
            SBFlow & DiT-L/2
                & $16.62$ \scalebox{0.75}{$\pm 0.83$}& $0.604$ \scalebox{0.75}{$ \pm 0.001$}& $33.96$ \scalebox{0.75}{$ \pm 0.87$}& $0.600$ \scalebox{0.75}{$ \pm 0.001$}
                & $\mathbf{16.98}$ \scalebox{0.75}{$\pm 0.53$}& $0.560$ \scalebox{0.75}{$ \pm 0.001$}& $27.82$ \scalebox{0.75}{$ \pm 0.41$}& $0.547$ \scalebox{0.75}{$ \pm 0.001$} 
                \\
            \hdashline
             FDBM (\textbf{ours}) & DiT-B/2  
                & $16.77$ \scalebox{0.75}{$\pm 0.71$}& $\mathbf{0.530}$ \scalebox{0.75}{$ \pm 0.002$}
                    & $\mathbf{19.14}$ \scalebox{0.75}{$ \pm 0.38$}& $\mathbf{0.551}$ \scalebox{0.75}{$ \pm 0.001$} 
                & -- & -- & -- & --
                \\
             FDBM (\textbf{ours}) & DiT-L/2 
                & $\mathbf{11.62}$ \scalebox{0.75}{$\pm 0.73$}& $0.548$ \scalebox{0.75}{$ \pm 0.002$}& $\mathbf{19.42}$ \scalebox{0.75}{$ \pm 0.41$}& $0.561$ \scalebox{0.75}{$ \pm 0.002$}
                & -- & -- & -- & --
                \\
        \bottomrule
        \end{tabular}
        
        \label{tab:results-256}
    \end{subtable}
    \label{tab:results}
    %\vspace{-4.5mm}
\end{table}
%\vspace{-3mm}
\begin{figure}[tb]
    \centering
    % AFHQ-32
    \begin{subfigure}[t]{0.31\linewidth}
        \captionsetup{width=0.925\linewidth} % adjust as needed
        \centering
        \includegraphics[width=\linewidth]{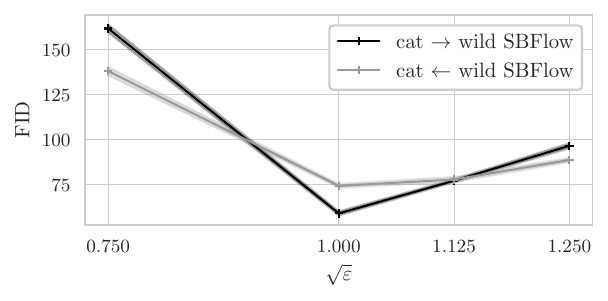}
        % \caption{AFHQ-32 pretraining: $\sqrt{\varepsilon}$ ablation for SBFlow.}
        \caption{AFHQ-32 -- $\sqrt{\varepsilon}$ in SBFlow}
        \label{fig:ablation_AFHQ-32-eps}
    \end{subfigure}
    \begin{subfigure}[t]{0.31\linewidth}
        \captionsetup{width=0.925\linewidth} % adjust as needed
        \centering
        \includegraphics[width=\linewidth]{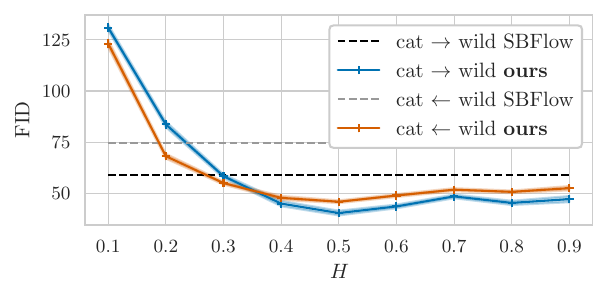}
        \caption{AFHQ-32 -- $H$ in FDBM}
        \label{fig:ablation_AFHQ-32-H}
    \end{subfigure}
    \begin{subfigure}[t]{0.31\linewidth}
        \captionsetup{width=0.925\linewidth} % adjust as needed
        \centering
        \includegraphics[width=\linewidth]{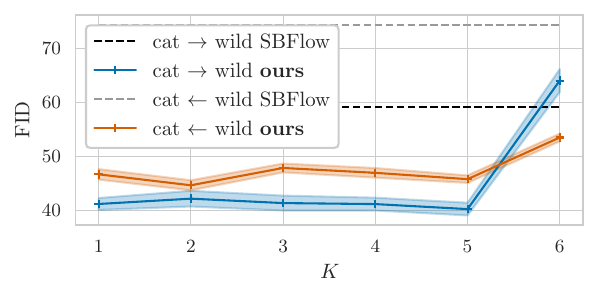}
        \caption{AFHQ-32 -- $K$ in  FDBM}
        \label{fig:ablation_AFHQ-32-K}
    \end{subfigure}
    % AFHQ-256
    \begin{subfigure}[t]{0.31\linewidth}
        \captionsetup{width=0.925\linewidth} % adjust as needed
        \centering
        \includegraphics[width=\linewidth]{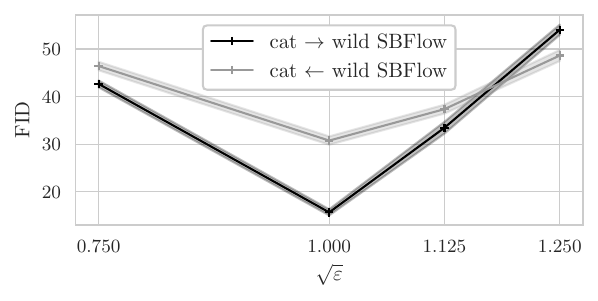}
        \caption{AFHQ-256 -- $\sqrt{\varepsilon}$ in SBFlow}
        \label{fig:ablation_AFHQ-256-eps}
    \end{subfigure}
    \begin{subfigure}[t]{0.31\linewidth}
        \captionsetup{width=0.925\linewidth} % adjust as needed
        \centering
        \includegraphics[width=\linewidth]{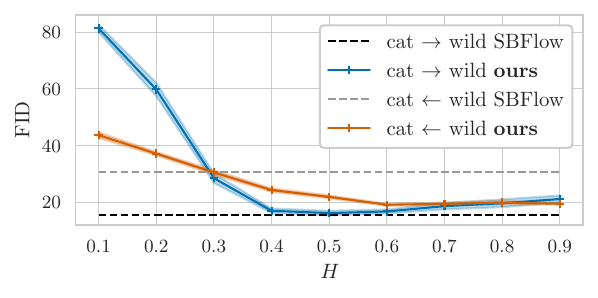}
        \caption{AFHQ-256 -- $H$ in FDBM}
        \label{fig:ablation_AFHQ-256-H}
    \end{subfigure}
    \begin{subfigure}[t]{0.31\linewidth}
        \captionsetup{width=0.925\linewidth} % adjust as needed
        \centering
        \includegraphics[width=\linewidth]{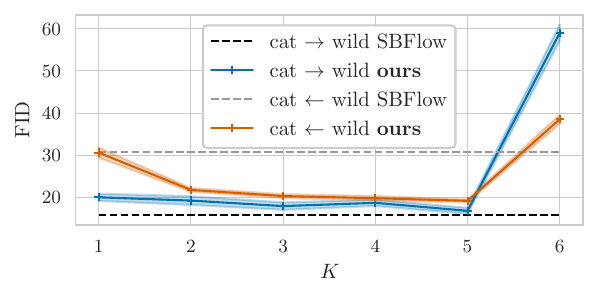}
        % \caption{AFHQ-256 pretraining: $K$ ablation for  FDBM (\textbf{ours}) with H=0.6.}
        \caption{AFHQ-256 -- $K$ in FDBM}
        \label{fig:ablation_AFHQ-256-K}
    \end{subfigure}
    \caption{
    % AFHQ-32 ((a) to (c)) and AFHQ-256 ((d) to (f)) pretraining ablation results using DiT-B/2. Results are averaged over 10 evaluation runs, with standard deviation indicated by semi-transparent error bands. The scores of the SBFlow baseline are annotated as horizontal, dashed lines in sub-figures (b) and (c), as well as (e) and (f) for respective datasets. We used $H=0.5$ in (c) and $H=0.6$ in (f).
    DiT-B/2 ablation for AFHQ-32 (a-c) and AFHQ-256 (d-f); (c): $H=0.5$, and (f): $H=0.6$. We show error bands with averages over 10 runs. SBFlow baselines are marked in (b, c, e), and (f). %\vspace{-5mm}
    }
    \label{fig:ablation_AFHQ-32}
\end{figure}
% \begin{figure}[tb]
%     \centering
%     \begin{subfigure}[t]{0.31\linewidth}
%         \captionsetup{width=0.925\linewidth} % adjust as needed
%         \centering
%         \includegraphics[width=\linewidth]{figures/bl_256.pdf}
%         \caption{Pretraining: $\sqrt{\varepsilon}$ ablation for SBFlow.}
%         \label{fig:ablation_AFHQ-256-eps}
%     \end{subfigure}
%     \begin{subfigure}[t]{0.31\linewidth}
%         \captionsetup{width=0.925\linewidth} % adjust as needed
%         \centering
%         \includegraphics[width=\linewidth]{figures/ours_256_H.pdf}
%         \caption{Pretraining: $H$ ablation for FSBFlow (\textbf{ours}).}
%         \label{fig:ablation_AFHQ-256-H}
%     \end{subfigure}
%     \begin{subfigure}[t]{0.31\linewidth}
%         \captionsetup{width=0.925\linewidth} % adjust as needed
%         \centering
%         \includegraphics[width=\linewidth]{figures/ours_256_K.pdf}
%         \caption{Pretraining: $K$ ablation for FSBFlow (\textbf{ours}) with H=0.5.}
%         \label{fig:ablation_AFHQ-256-K}
%     \end{subfigure}
%     \caption{AFHQ-256 pretraining ablation results using DiT-B/2. Results are averaged over 10 evaluation runs, with standard deviation indicated by semi-transparent error bands. \textcolor{red}{todo: combine figures}}
%     \label{fig:ablation_AFHQ-256}
% \end{figure}

\paragraph{Results for unpaired data translation}
The ablation study reveals stable generation performance for $H \geq 0.4$ and $K \leq 5$, with instabilities and accuracy degradation observed for $K>5$ and $H<0.3$, see \Cref{fig:ablation_AFHQ-32-K,fig:ablation_AFHQ-256-K,fig:ablation_AFHQ-32-H,fig:ablation_AFHQ-256-H}. Our method remains stable for high dimensional data, such as AFHQ-512 even for $0.4 \leq H < 0.5$ (see \Cref{fig:512_examples}).  Across various configurations, our method consistently outperforms the SBFlow pretraining- and online finetuning baseline (see \Cref{tab:results}, as well as \Cref{fig:ablation_AFHQ-32}). 
Notably, with $K=5$ we do not recover BM, as we fix $\gamma_1,\dots,\gamma_K$, even when $H=0.5$. MA-fBM with $H=0.5$ and $K=5$ is non-Markovian, though its distribution is empirically close to BM. This subtle differences may be the reason why FDBM performs better than SBFLow on AFHQ when $H=0.5$ and $K=5$.
\citet{debortoli2024schrodingerbridgeflowunpaired} propose a finetuning method for processes driven by BM, which can yield significant improvements over their proposed pretraining for natural images. The online finetuning assumes the bidirectional processes to transition on the same bridge with matching pairings and respective terminal distributions. In our framework, we can not assume a shared Schrödinger bridge for the transformation $\Pi_0\to\Pi_1$ and $\Pi_1\to\Pi_0$. In general, two distinct bridges are learned. Improvements during fine-tuning were observed only for MA-fBM with $H = 0.5$, likely because the forward ($\Pi_0\to\Pi_1$) and backward bridge $(\Pi_1\to\Pi_0$) are close---though not identical---to the Brownian case with a bidirectional bridge. However, this effect does not generalize to other $H$, and developing a principled finetuning strategy for FDBMs distinct bridges remains an important direction for future work. \Cref{tab:results} shows that our method can significantly improve the fidelity of generated samples, while maintaining data alignment. \Cref{fig:512_examples} highlights that we can obtain cohesive data alignment without online finetuning for $H=0.4$ and $H=0.6$ at scale. See \Cref{app:experiments}
and in particular Figures 5 and 6 for samplings at scale.

\vspace{-4mm}\section{Conclusion}\vspace{-3mm}
We introduced Fractional Diffusion Bridge Models (FDBM), a new generative framework that extends diffusion bridges beyond the Markovian assumptions by incorporating a Markovian approximate fractional Brownian motion to retain computational tractability while preserving long-range dependencies or roughness that are absent in Brownian generative models. Our fractional generative diffusion bridge is coupling-preserving in the paired case and generalizes the Schrödinger bridge formulation for unpaired settings. In the paired regime, FDBM improved the near-native structures of predicted protein conformations potentially by capturing non-local dependencies; in the unpaired regime, it achieved superior quality in image translation scaling robustly across high-dimensional domains and image resolutions. %, outperforming Brownian and Schrödinger-bridge baselines.

% \paragraph{Limitations and future work}
FDBM opens a broader avenue for generative modeling, bridging fractional stochastic dynamics and machine learning, and poses a foundation for learning from the correlated, memory-rich phenomena in real-world. Future work includes theoretical guarantees for fractional Schrödinger bridges, fine-tuning of asymmetric bridges, and extensions to manifold-valued fractional processes. 

\addtocontents{toc}{\protect\setcounter{tocdepth}{2}}
%%%%%%%%%%%%%%%%%%%%%%%%%%%%%%%%%%%%%%%%%%%%%%%%%%%%%%%%%%%%%%%%%%%%%%%%%%%%%%%
%%%%%%%%%%%%%%%%%%%%%%%%%%%%%%%%%%%%%%%%%%%%%%%%%%%%%%%%%%%%%%%%%%%%%%%%%%%%%%%
% Acknowledgement
%%%%%%%%%%%%%%%%%%%%%%%%%%%%%%%%%%%%%%%%%%%%%%%%%%%%%%%%%%%%%%%%%%%%%%%%%%%%%%%
%%%%%%%%%%%%%%%%%%%%%%%%%%%%%%%%%%%%%%%%%%%%%%%%%%%%%%%%%%%%%%%%%%%%%%%%%%%%%%%
\begin{ack}
This research received funding from the Flemish Government under the `Onderzoeksprogramma Artificiële Intelligentie (AI) Vlaanderen' programme. Furthermore it was
supported by Flanders Make under the SBO project CADAIVISION.
This work also received funding from imec-PROSPECT project ADAPT (`Affinity and Developability through AI for Protein Therapeutics').
This work was also supported by the German Research Foundation (DFG) through research unit DeSBi [KI-FOR 5363] (project ID: 459422098).
T. Birdal acknowledges support from the Engineering and Physical Sciences Research Council [grant EP/X011364/1].
T. Birdal was supported by a UKRI Future Leaders Fellowship [grant number MR/Y018818/1] as well as a Royal Society Research Grant RG/R1/241402.
\end{ack}

%%%%%%%%%%%%%%%%%%%%%%%%%%%%%%%%%%%%%%%%%%%%%%%%%%%%%%%%%%%%%%%%%%%%%%%%%%%%%%%
%%%%%%%%%%%%%%%%%%%%%%%%%%%%%%%%%%%%%%%%%%%%%%%%%%%%%%%%%%%%%%%%%%%%%%%%%%%%%%%
% References
%%%%%%%%%%%%%%%%%%%%%%%%%%%%%%%%%%%%%%%%%%%%%%%%%%%%%%%%%%%%%%%%%%%%%%%%%%%%%%%
%%%%%%%%%%%%%%%%%%%%%%%%%%%%%%%%%%%%%%%%%%%%%%%%%%%%%%%%%%%%%%%%%%%%%%%%%%%%%%%

\bibliographystyle{unsrtnat}
\bibliography{main}

%%%%%%%%%%%%%%%%%%%%%%%%%%%%%%%%%%%%%%%%%%%%%%%%%%%%%%%%%%%%%%%%%%%%%%%%%%%%%%%
%%%%%%%%%%%%%%%%%%%%%%%%%%%%%%%%%%%%%%%%%%%%%%%%%%%%%%%%%%%%%%%%%%%%%%%%%%%%%%%
% Appendix
%%%%%%%%%%%%%%%%%%%%%%%%%%%%%%%%%%%%%%%%%%%%%%%%%%%%%%%%%%%%%%%%%%%%%%%%%%%%%%%
%%%%%%%%%%%%%%%%%%%%%%%%%%%%%%%%%%%%%%%%%%%%%%%%%%%%%%%%%%%%%%%%%%%%%%%%%%%%%%%
\clearpage
\renewcommand{\contentsname}{Appendix of Fractional Diffusion Bridge Models}
\tableofcontents

\clearpage
\appendix

\clearpage  
\section{Notational conventions}
\label{adx:notation}
\begin{table}[h]
\centering
\renewcommand{\arraystretch}{1.5}
\begin{tabular}{@{}l|l@{}}
    % $[0,T]$&Time horizon with terminal time $T>0$\\
    $\mathbb{R}^{m}$ & $m\in\mathbb{N}$ dimensional Euclidean space\\
    $\mathcal{B}\left(\mathbb{R}^{m}\right)$ & Borel-$\sigma$-algebra on $\mathbb{R}^{m}$\\
    $S=(S_{t})_{t\in[0,T]}$&Stochastic process taking values in $\mathbb{R}^m$\\
    $\mathcal{C}^{m}$ & Set of continuous functions (paths) $\mathcal{C}^{m}=C([0,1],\mathbb{R}^{m})$ from the unit time interval $[0,1]$ to $\mathbb{R}^m$\\
    $\mathcal{B}\left(\mathcal{C}^{m}\right)$ & Borel-$\sigma$-algebra on $\mathcal{C}^{m}$\\
    $\mathcal{P}(\mathcal{C}^{m})$ & Set of probability measures on $(\mathcal{C}^{m},\mathcal{B}\left(\mathcal{C}^{m}\right))$\\
    $\Pd\in \mathcal{P}(\mathcal{C}^{m})$ & Path measure \\
    $\Pd_{S_{t_1},\dots S_{t_n}}$ & Distribution of $(S_{t_1},\dots S_{t_n})$ under the path measure $\Pd$\\
    $\Pd_{t_1,\dots t_n}$ & Path measure $\Pd$ is associated with a process $S$ and $\Pd_{t_1,\dots t_N}$ denotes the distribution of $(S_{t_1},\dots S_{t_n})$ \\
    $\Pd_{t_1,\dots t_n|r_{1},\dots r_{l}}$ & Conditional distribution of $(S_{t_1},\dots S_{t_n})$ given $(S_{r_1},\dots S_{r_l})$\\
    $d\in\mathbb{N}$ & Data dimension\\
    $\Pi_0$, $\Pi_1$ & Source and target distribution on $\mathbb{R}^d$\\
    $\Pi_{0,1}$ & Joint (coupling) distribution on $\mathbb{R}^d\times\mathbb{R}^{d}$\\
	$B$ & (Multidimensional) standard Brownian motion\\
    $\B^{H}$ & (Multidimensional) Riemann-Liouville (Type \RomanNumeralCaps{2}) fractional Brownian motion (fBM)\\
    $\Y^{\gamma}$ & (Multidimensional) Ornstein–Uhlenbeck (OU) process with speed of mean reversion $\gamma\in\mathbb{R}$\\
    $K,k\in\mathbb{N}$& Number of augmenting processes $K$ and $1\leq k \leq K$\\
    $\gamma_1,...,\gamma_K$& Geometrically spaced grid \\
    $\omega_1,...,\omega_K$& Approximation coefficients \\
    $\MAfbm$ & (Multidimensional) Markov-approximate fractional Brownian motion (MA-fBM) \\
    $X$ & Scaled MA-fBM reference process $X=\sqrt{\varepsilon}\MAfbm$ with $\varepsilon>0$\\
    $\Qm$ & Path measure of the reference process \\
    $\Pi_{0,1}\Qm_{|0,1}$ & Mixture of bridge measures $\int_{\mathbb{R}^d \times \mathbb{R}^d} \Qm_{|0,1}(\cdot | x_0, x_1) \dd \Pi_{0,1}(x_0, x_1)$ \\
    $Y^{k}$ & Augmenting process $Y^{k}=Y^{\gamma_{k}}$ \\
    $Y$ & Stacked augmenting OU processes $Y=(Y^{1},\dots,Y^{K})$ taking values in $\mathbb{R}^{dK}$\\
    $Z$ & Augmented process $Z=(X,Y)$ on $\mathbb{R}^{d\cdot(K+1)}$\\
    $\mathbb{S}$ & Path measure of the augmented process $Z$\\
    $F$ & Drift matrix $F\in\mathbb{R}^{d(K+1), d(K+1)}$ of the augmented forward process\\
    $G$ & Diffusion vector $G\in\mathbb{R}^{d(K+1)}$ of the augmented forward process\\
    $Z_{|X_0,X_1}$ & Partially pinned process $Z|(X_0,X_1)$\\
    $\mathbb{S}_{|X_0,X_1}$ & Path measure associated with the partially pinned process $Z_{|X_0,X_1}$\\
    $\Tm^{SB}$ & Solution to the dynamic Schrödinger bridge problem \\
    $\Vm$ & Path measure on the augmented path space $\mathcal{P}(\mathcal{C}^{d(K+1)})$ \\
  \end{tabular}
\end{table}	

\clearpage \newpage
\section{Mathematical framework of fractional diffusion bridge models}
\label{sec:mathematical_framework}
In this section, we present the mathematical details of Fractional Diffusion Bridge Models (FDBM). The main contribution of this section is the proof of \Cref{prop:coupling_preserving} in \Cref{adx:ours-paired}, which generalizes the construction of a coupling-preserving stochastic process by \citet{debortoli2023augmentedbridgematching} to our fractional noise setting.

\paragraph{Notation}
For any $m\in\mathbb{N}$, we equip the Euclidean space $\mathbb{R}^m$ with its Borel-$\sigma$-algebra $\mathcal{B}(\mathbb{R}^m)$.
Below, $m$ will typically be either equal to $1$, $d$, $dK$, or $d(K+1)$.
Next, we write $\mathcal{C}^m = C([0,1],\mathbb{R}^m)$ for the set of continuous functions (or continuous ``paths'') from the unit time interval $[0,1]$ to $\mathbb{R}^{m}$ and equip this set with its Borel-$\sigma$-algebra $\mathcal{B}(\mathcal{C}^m)$ where open sets are understood with respect to the topology of uniform convergence. The set of probability measures on $(\mathcal{C}^{m},\mathcal{B}(\mathcal{C}^m))$ is denoted by $\mathcal{P}(\mathcal{C}^{m})$, and we refer to the elements of this set as path measures.
If $X$ is a stochastic process and $\Pm\in\mathcal{P}(\mathcal{C}^m)$ denotes the distribution of $X$, we subsequently say that the path measure $\Pm$ is associated with the process $X$.
Observe that any $\Pm\in\mathcal{P}(\mathcal{C}^m)$ is associated with some stochastic process $X$, as we may take the space $(\mathcal{C}^{m},\mathcal{B}(\mathcal{C}^m),\Pm)$ as our probability space and let $X$ be the canonical process given by
\begin{equation}
  X_t(\omega) := \omega_t,\qquad t\in[0,1], \omega\in\mathcal{C}^m.
\end{equation}
Given a path measure $\Pm\in\mathcal{P}(\mathcal{C}^m)$ associated with a process $X$ and time points $t_1,\dots,t_n\in[0,1]$ for some $n\in\mathbb{N}$, we write $\Pm_{t_1,\dots,t_n}$ for the joint distribution of $(X_{t_1},\dots,X_{t_n})$, that is
\begin{equation}
  \Pm_{t_1,\dots,t_n} := \Pm(\{\omega\in\mathcal{C}^m : (\omega(t_1),\dots,\omega(t_n))\in\,\cdot\,\}).
\end{equation}
In particular, $\Pm_{t}$ denotes the marginal distribution of $X(t)$ for any $t\in[0,1]$.
Moreover, given $s_1,\dots,s_\ell\in[0,1]$ and $x_{s_1},\dots,x_{s_\ell}\in\mathbb{R}^m$, we write $
\Pm_{t_1,\dots,t_n|s_1,\dots,s_\ell}$ and $\Pm_{t_1,\dots,t_n|s_1,\dots,s_\ell}(\,\cdot\,|x_{s_1},\dots,x_{s_\ell})$ for the (regular) conditional distribution of $(X_{t_1},\dots,X_{t_n})$ given $(X_{s_1},\dots,X_{s_\ell})$ and $\{X_{s_1}=x_{s_1},\dots,X_{s_\ell}=x_{s_\ell}\}$, respectively.
In the same spirit, we write $\Pm_{|s_1,\dots,s_\ell}$ and $\Pm_{|s_1,\dots,s_\ell}(\,\cdot\,|x_{s_1,\dots,x_{s_\ell}})$ for the (regular) conditional distribution of the process $X$ given $(X_{s_1},\dots,X_{s_\ell})$ and $\{X_{s_1}=x_{s_1},\dots,X_{s_\ell}=x_{s_\ell}\}$, respectively.

\subsection{A Markov approximate fractional Brownian bridge.}
We fix a $d$-dimensional Brownian motion $B$ and define the Riemann--Liouville (Type \RomanNumeralCaps{2}) fractional Brownian motion (fBM) \cite{levy1953random} with Hurst index $H\in(0,1)$ via
\begin{equation}
   \fbm_{t} := \frac{1}{\Gamma(H+\frac{1}{2})}\int^{t}_{0}(t-s)^{H-\frac{1}{2}}\dd \B_s,\quad t\geq 0.
\end{equation}
For a given Hurst index $H\in(0,1)$, we consider a Markovian approximation of fBM \cite{HARMS2019,daems2023variational}.
For $K\in\mathbb{N}$ and geometrically-spaced speed of mean reversion parameters $\gamma_1,\dots,\gamma_K>0$, we consider Ornstein--Uhlenbeck (OU) processes of the form
\begin{equation}
  Y^k_t := \int_0^t e^{-\gamma_k(t-s)}\dd B_s,\qquad t\in[0,1],\, k=1,\dots,K.
\end{equation}
With this, for a given scaling parameter $\varepsilon>0$ and suitably chosen approximation weights $\omega_1,\dots,\omega_K\in\mathbb{R}$, the process $X := \sqrt{\varepsilon}\MAfbm$ defined in terms of the weighted superposition
\begin{equation}
  \MAfbm := \sum_{k=1}^K \omega_k Y^k
\end{equation}
of the OU processes is a scaled Markovian approximation of fBM (MA-fBM). While the choice of approximation coefficients in \citet{harms2021} enables strong convergence to fBM with high polynomial order in $K$ for $H<0.5$, we opt for the computationally more efficient method proposed by \citet{daems2023variational}. This method selects the $L^2(\mathbb{P})$ optimal approximation coefficients for a given Hurst index $H\in(0,1)$ and a given $K\in\mathbb{N}$ by minimizing 
\begin{equation}
    (\omega_1,\dots,\omega_K)= \argmin_{\omega_1,\dots,\omega_K}\left\{\int^{T}_{0}\mathbb{E}\left[\left(\fbm_{t}-\MAfbm_{t}\right)^{2}\right]\dd t\right\}.
\end{equation}
Following \cite[Proposition 5]{daems2023variational}, the so defined optimal approximation coefficients $\omega=(\omega_1,\dots,\omega_K)$ solve the system $A\omega = b$, where $A$ and $b$ are given in closed form \cite[eq. (19), eq. (21)]{daems2023variational} and hence we choose $\omega := A^{-1}b$. Note that these optimal approximation coefficients depend on the Hurst index $H\in(0,1)$ and the number of OU processes $K\in\mathbb{N}$, since the matrix 
$A$ and the vector $b$ are functions of these parameters. We subsequently refer to $X=\sqrt{\varepsilon}\MAfbm$ as the reference process, $Y:=(Y^1,\dots,Y^K)$ as the vector of OU processes, and $Z:=(X,Y)$ as the augmented reference process, respectively. Note that the dynamics of the augmented reference process are given by \cite{daems2023variational,nobis2024generative}
\begin{equation}
\label{eq:augrefdyn}
    \dd Z_t = F Z_t \dd t + G \dd B_t
\end{equation}
for a matrix $F\in\mathbb{R}^{d(K+1),d(K+1)}$ and avector $G\in\mathbb{R}^{d(K+1)}$ \cite{daems2023variational,nobis2024generative}. 
The path measure associated with the reference process $X$ is denoted by $\Qm\in\mathcal{P}(\mathcal{C}^d)$, whereas the path measure associated with the augmented reference process is denoted by $\Sm \in\mathcal{P}(\mathcal{C}^{d(K+1)})$ and we write $\Sm^{1}_{1|t}$ for the conditional distribution of $X_1|Z_t$. Note that the reference process $X$, as well as its corresponding path measure $\mathbb{Q}$ is non-Markovian and becomes Markovian only after augmenting it with the OU processes, resulting in the Markovian augmented reference process $Z$. 
\begin{proposition}
\label{prop:explicit_mafbm_non_markovian}
The reference process $X$ is for $K>1$ and all $H\in(0,1)$ the non-Markovian process
\begin{equation}
  X_{t+s} = X_t + \sum_{k=1}^{K}\omega_k \zeta_k(t,t+s)Y^{k}_t + \sqrt{\varepsilon}\sum_{k=1}^{K}\omega_k\int^{t+s}_{t}e^{-\gamma_k(t+s-u)}\dd B_u,
\end{equation}
where for each $k=1,\dots,K$, and $t,s\in[0,1]$ with $t+s\leq 1$
\begin{equation}
  \zeta_k(t,t+s) := -\sqrt{\varepsilon}\gamma_k\int_{t}^{t+s} e^{-\gamma_k(u-t)}\dd u = \sqrt{\varepsilon}(e^{\gamma_k s} - 1).
\end{equation}
In particular, we see that $\Sm^1_{t+s|t}(\,\cdot\,|z)$ is Gaussian and hence for $d=1$, with $s = 1-t$,
\begin{equation}
\label{eq:S_11}
  \nabla_z \log \Sm^1_{1|t}(x_1|z) = [1,\omega_1\zeta_1(t,1),\dots,\omega_K\zeta_K(t,1)]^{T}\frac{x_1-\mu_{1|t}(z)}{\sigma^{2}_{1|t}},
\end{equation}
where 
\begin{equation}
  \mu_{1|t}(z) := x + \sum_k \omega_k y_k \zeta_k(t, 1),\quad z=(x,y_1,..,y_K)
\end{equation}
denotes the conditional mean and 
\begin{equation}\label{eq:sigma_1|t}
  \sigma^{2}_{1|t} := \varepsilon\sum_{k,\ell=1}^{K}\frac{\omega_k\omega_\ell}{\gamma_{k}+{\gamma}_{\ell}}\Bigl(1-e^{-(1-t)({\gamma}_{k} + {\gamma}_{\ell})}\Bigr)
\end{equation}
the conditional variance of the reference process $X_1|(Z_t=z)$.
\end{proposition}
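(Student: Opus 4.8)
# Proof Proposal for Proposition \ref{prop:explicit_mafbm_non_markovian}

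The plan is to derive the claimed representation directly from the defining dynamics $\dd Z_t = FZ_t\,\dd t + G\,\dd B_t$, or equivalently from the explicit integral forms of the OU processes, and then read off the conditional law of $X_1$ given $Z_t$ by a Gaussian computation. First I would recall that each OU process admits the semigroup identity $Y^k_{t+s} = e^{-\gamma_k s}Y^k_t + \int_t^{t+s} e^{-\gamma_k(t+s-u)}\,\dd B_u$, which follows immediately from its integral definition by splitting the integral $\int_0^{t+s} = \int_0^t + \int_t^{t+s}$ and pulling out the factor $e^{-\gamma_k s}$. Multiplying by $\sqrt{\varepsilon}\,\omega_k$, summing over $k$, and using $X = \sqrt{\varepsilon}\sum_k \omega_k Y^k$ gives $X_{t+s} = \sqrt{\varepsilon}\sum_k \omega_k e^{-\gamma_k s} Y^k_t + \sqrt{\varepsilon}\sum_k \omega_k \int_t^{t+s} e^{-\gamma_k(t+s-u)}\,\dd B_u$. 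To match the stated form I would write $\sqrt{\varepsilon}\,e^{-\gamma_k s} = \sqrt{\varepsilon} + (\sqrt{\varepsilon}e^{-\gamma_k s} - \sqrt{\varepsilon})$; the term $\sqrt{\varepsilon}\sum_k\omega_k Y^k_t$ reassembles into $X_t$, while the correction is $\sum_k \omega_k(\sqrt{\varepsilon}e^{-\gamma_k s} - \sqrt{\varepsilon})Y^k_t$. Here I would need to double-check the sign convention in $\zeta_k$: the paper writes $\zeta_k(t,t+s) = -\sqrt{\varepsilon}\gamma_k\int_t^{t+s}e^{-\gamma_k(u-t)}\,\dd u = \sqrt{\varepsilon}(e^{\gamma_k s}-1)$, and reconciling this with the $e^{-\gamma_k s}$ appearing naturally in the OU semigroup is the one place requiring care — it amounts to tracking whether the relevant coefficient multiplying $Y^k_t$ is $\sqrt{\varepsilon}(e^{-\gamma_k s}-1)$ or its counterpart, and I would simply follow the convention fixed in \Cref{prop:mafbm_bridge} so that $u(t,z)$ there is consistent.

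Next, to establish non-Markovianity for $K>1$, I would argue that the future increment $X_{t+s} - X_t$ conditional on $X_t$ alone still depends on the vector $Y_t = (Y^1_t,\dots,Y^K_t)$ through the term $\sum_k \omega_k\zeta_k(t,t+s)Y^k_t$, and that this vector is not a deterministic function of $X_t$ when $K>1$ (the map $y\mapsto \sqrt{\varepsilon}\sum_k\omega_k y_k$ has a nontrivial kernel). More precisely, one can exhibit times $t_1 < t_2 < t_3$ such that the conditional law of $X_{t_3}$ given $(X_{t_1},X_{t_2})$ differs from that given $X_{t_2}$ alone, because conditioning on two past values gives extra information about $Y_{t_2}$ that the single value $X_{t_2}$ does not pin down. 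Since all processes involved are jointly Gaussian, this reduces to checking that a certain conditional covariance does not collapse, which is a finite-dimensional linear-algebra statement using the distinctness of the $\gamma_k$ and non-degeneracy of the weights $\omega_k$.

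Finally, for the Gaussian formulas \eqref{eq:S_11}–\eqref{eq:sigma_1|t}, I would specialize to $d=1$ and $s = 1-t$. From the representation, $X_1 = X_t + \sum_k\omega_k\zeta_k(t,1)Y^k_t + \sqrt{\varepsilon}\sum_k\omega_k\int_t^1 e^{-\gamma_k(1-u)}\,\dd B_u$, so conditionally on $Z_t = z = (x,y_1,\dots,y_K)$ the first two terms are the constant $\mu_{1|t}(z) = x + \sum_k\omega_k y_k\zeta_k(t,1)$ and the stochastic integral is independent of $\mathcal F_t \supseteq \sigma(Z_t)$ with mean zero. Its variance is $\varepsilon\sum_{k,\ell}\omega_k\omega_\ell\int_t^1 e^{-\gamma_k(1-u)}e^{-\gamma_\ell(1-u)}\,\dd u$ by the Itô isometry, and evaluating $\int_t^1 e^{-(\gamma_k+\gamma_\ell)(1-u)}\,\dd u = \frac{1-e^{-(1-t)(\gamma_k+\gamma_\ell)}}{\gamma_k+\gamma_\ell}$ gives exactly $\sigma^2_{1|t}$ as in \eqref{eq:sigma_1|t}. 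Hence $\Sm^1_{1|t}(\,\cdot\,|z) = \mathcal N\!\bigl(\mu_{1|t}(z),\sigma^2_{1|t}\bigr)$, and differentiating $\log$ of the Gaussian density in $z$ — noting that $\mu_{1|t}$ depends on $z$ affinely with gradient $[1,\omega_1\zeta_1(t,1),\dots,\omega_K\zeta_K(t,1)]^T$ and $\sigma^2_{1|t}$ does not depend on $z$ — yields the score formula \eqref{eq:S_11}. The main obstacle is not conceptual but bookkeeping: keeping the sign and scaling conventions for $\zeta_k$ aligned with \Cref{prop:mafbm_bridge} throughout, and confirming that the stochastic integral tail really is independent of $\sigma(Z_t)$, which follows since the OU processes and $X$ are adapted to the Brownian filtration and the tail integral is over $(t,1]$.
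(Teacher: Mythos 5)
Your derivation of the representation and of the conditional law is correct, and for those parts you take a slightly more direct route than the paper: you use the OU semigroup identity $Y^k_{t+s}=e^{-\gamma_k s}Y^k_t+\int_t^{t+s}e^{-\gamma_k(t+s-u)}\,\dd B_u$ and resum, whereas the paper starts from the integrated SDE $X_{t+s}=-\sum_k\omega_k\gamma_k\int_0^{t+s}Y^k_r\,\dd r+\sum_k\omega_k B_{t+s}$ and rearranges via the stochastic Fubini theorem; the two computations are equivalent and land on the same formula. You are also right to flag the sign convention: evaluating $-\sqrt{\varepsilon}\gamma_k\int_t^{t+s}e^{-\gamma_k(u-t)}\,\dd u$ gives $\sqrt{\varepsilon}(e^{-\gamma_k s}-1)$, which is what the paper's own proof produces (and what its appendix on sampling uses), so the closed form $\sqrt{\varepsilon}(e^{\gamma_k s}-1)$ in the statement is a typo in the exponent. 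The Gaussian part — conditional mean from the $Z_t$-measurable terms, variance of the tail integral by It\^o isometry, and the score as the gradient of the affine map $z\mapsto\mu_{1|t}(z)$ divided by $\sigma^2_{1|t}$ — matches the paper's argument exactly.

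The one genuine gap is the non-Markovianity claim. Your argument stops at the observation that the increment depends on $Y_t$, which is not a function of $X_t$, and then asserts that ``one can exhibit times $t_1<t_2<t_3$'' for which the conditional laws differ, reducing this to ``a certain conditional covariance does not collapse.'' That reduction is the entire content of the claim, and you do not carry it out; the heuristic alone does not rule out the possibility that the extra information in $(X_{t_1},X_{t_2})$ about $Y_{t_2}$ happens not to affect the conditional law of $X_{t_3}$. The paper avoids this computation altogether by writing $\MAfbm_t=\int_0^t\kappa(t,u)\,\dd B_u$ with $\kappa(t,u)=\sum_k\omega_k e^{-\gamma_k(t-u)}$ and invoking Hida's canonical-representation criterion: such a Gaussian process is Markov if and only if $\kappa(t,u)=f(t)g(u)$ factorizes, and a sum of distinct exponentials with at least two nonzero weights cannot factorize. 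If you want to keep your route, you must actually exhibit the three times and verify the non-degeneracy of the relevant Gaussian conditional covariance (using that the $\gamma_k$ are distinct and at least two $\omega_k$ are nonzero); otherwise the factorization criterion is the cleaner way to close the argument.
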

\begin{proof}
For the scaled MA-fBM we have for $s,t\in[0,1]$ with $t+s\leq 1$ by the Stochastic Fubini Theorem \cite{HARMS2019}
\begin{small}
\begin{align}
    X_{t+s} &= 
    -\sum_{k=1}^{K}\omega_k \gamma_k \int^{t}_{0}Y^{k}_r dr + \sum_{k=1}^{K}\omega_k B_{t} - \sum_{k=1}^{K}\omega_k \gamma_k \int^{t+s}_{t}Y^{k}_r dr + \sum_{k=1}^{K}\omega_k (B_{t+s}-B_t) \\
    &= \MAfbm_t - \sum_{k=1}^{K}\omega_k \gamma_k \int^{t+s}_{t}[e^{-\gamma_k(r-t)}Y^{k}_t+\int^{r}_{t}e^{-\gamma_k(r-u)}dB_u]dr + \sum_{k=1}^{K}\omega_k (B_{t+s}-B_t) \\
    &= \MAfbm_t + \sum_{k=1}^{K}\omega_k\left[ Y^{k}_t(e^{-\gamma_k s}-1)+(B_{t+s}-B_t)-\gamma_k\int^{t+s}_{t}\int^{r}_{t}e^{-\gamma_k(r-u)}dB_u dr\right] \\
    &= \MAfbm_t + \sum_{k=1}^{K}\omega_k\left[ Y^{k}_t(e^{-\gamma_k s}-1)+(B_{t+s}-B_t)-\gamma_k\int^{t+s}_{t}\int^{t+s}_{u}e^{-\gamma_k(r-u)}dr dB_u\right] \\
    &= \MAfbm_t + \sum_{k=1}^{K}\omega_k Y^{k}_t(e^{-\gamma_k s}-1) + \sum_{k=1}^{K}\omega_k\int^{t+s}_{t}[1-\gamma_k\int^{t+s}_{u}e^{-\gamma_k(r-u)}dr]dB_u \\
    &=X_t + \sum_{k=1}^{K}\omega_k \zeta_k(t,t+s)Y^{k}_t + \sqrt{\varepsilon}\sum_{k=1}^{K}\omega_k\int^{t+s}_{t}e^{-\gamma_k(t+s-u)}\dd B_u.
\label{eq:mafbm_explicit}
\end{align}
\end{small}
Additionally, we calculate via \cref{eq:mafbm_explicit} with $s=1-t$ and $z=(x,y_1,..,y_K)$ the conditional mean 
\begin{equation}
  \mu_{1|t}(z) := \mathbb{E}\left[X_1|Z_t=z_t\right] = x + \sum_k \omega_k y_k \zeta_k(t, 1),\quad z=(x,y_1,..,y_K)
\end{equation}
and the conditional variance 
\begin{align}
    \sigma^{2}_{1|t} := \text{Cov}\left(X_1,X_1|Z_t = z\right)
    % &= \sum_{k,l} \omega_k \omega_l \int_t^{1}
    % \left( \zeta_k(s, 1) + \sqrt{\varepsilon} \right)^2
    % \dd s\\
    = \varepsilon\sum_{k,\ell=1}^{K}\frac{\omega_k\omega_\ell}{\gamma_{k}+{\gamma}_{\ell}}\Bigl(1-e^{-(1-t)({\gamma}_{k} + {\gamma}_{\ell})}\Bigr),
\end{align}
where we use It\^{o}'s isometry.
To see that $X$ is non-Markovian, we note that the future $X_{t+s}$ depends not only on $X_t$ but also on $Y^1_t,...,Y^K_t$ which depend on the path of $B$ up to time $t$. For a more precise argument, we have by definition
\begin{small}
\begin{equation}
    \MAfbm_t = 
    \int^{t}_{0}\sum_{k=1}^{K}\omega_k e^{-\gamma_k(t-u)}\dd B_u
\end{equation}
\end{small}
and note that a process $\hat{X}=(\hat{X}_t)_{t\in[0,1]}$ with
\begin{equation}
    \hat{X}_t = \int^{t}_{-\infty}\kappa(t,u)dB_u
\end{equation}
is a Markov process, if and only if we can find functions $f$ and $g$ such that \cite[Theorem II.1]{hida1960canonical} 
\begin{equation}
    \kappa(t,u) = f(t)g(u).
\end{equation}
Since we have $\gamma_1\not=\gamma_2\not=\dots\not=\gamma_K$ 
for the defined MA-fBM, functions $f$ and $g$ satisfying
\begin{equation}
    \sqrt{\varepsilon}\sum_{k=}^{K}\omega_k e^{-\gamma_k(t-u)} = f(t)g(u)
\end{equation}
exist for $K>1$ if and only if $\omega_j\not=0$ for at most one $1\leq j\leq K$. Hence, MA-fBM---and therefore our reference process $X$---is not a Markov-process for $K>1$ and any choice of $H\in(0,1)$. 
\end{proof}

To define a stochastic bridge with respect to $X$ connecting two given points $x_0\in\mathbb{R}^{d}$ and $x_1\in\mathbb{R}^{d}$, observe that we only have to steer the first dimension of the augmented reference process $Z=(X,X)$ towards $x_1$, while the terminal values $Y$ are not required to attain a specific value. 

\begin{proposition}[Markov approximation of a fractional Brownian bridge \cite{daems2025efficient}] \label{prop:apx-partially-pinned-dynamics}Let $X=\sqrt{\varepsilon}\MAfbm$ be a scaled MA-fBM, $\varepsilon>0$ and $Z=(X,Y)$ the augmented reference process. The partially pinned process $Z_{|x_0,x_1} := Z|(X_{0} = x_{0},X_1 = x_1)$ associated to the path measure $\Sm_{|x_0,x_1}$ follows the dynamics 
\begin{align}
    \label{eq:app_dynamics-pinned-process}
    \dd Z_{|x_0,x_1}(t) &= F Z_{|x_0,x_1}(t)\dd t  + G G^{T} u(t, Z_{|x_0,x_1}(t))\dd t + G\dd B_t,\\
    u_i(t, z) &= \left[1,\omega_1\zeta_1(t,1),...,\omega_K\zeta_K(t,1)\right]^{T}\left[\frac{x_1- \mu_{1|t}(z)}{\sigma^{2}_{1|t}}\right],\quad u=(u_1,\dots,u_d)
\end{align}
\end{proposition}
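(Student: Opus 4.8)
The asserted drift decouples across the $d$ spatial coordinates (the matrix $F$, the vector $G$ and the noise all act block-diagonally over these coordinates), so it suffices to prove the statement for $d=1$ and then stack $d$ independent copies to obtain $u=(u_1,\dots,u_d)$. For $d=1$ the strategy is \emph{Doob's $h$-transform}: $Z=(X,Y)$ is a linear Gaussian Markov process, so conditioning it on the terminal value $X_1=x_1$ amounts to adding a gradient-of-log-density drift, and \Cref{prop:explicit_mafbm_non_markovian} already provides the relevant conditional Gaussian law $X_1\mid(Z_t=z)$ in closed form.

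\textbf{Setup.} Recall from \cref{eq:augrefdyn} that $Z$ solves $\dd Z_t = FZ_t\,\dd t + G\,\dd B_t$. Since $\dd X_t=\sqrt{\varepsilon}\sum_k\omega_k\,\dd Y^k_t$ does not involve $X$ itself, the $X$-row of $FZ$ depends only on the coordinates $Y^1,\dots,Y^K$; consequently, replacing the initial condition $Z_0=0_{K+1}$ by $Z_0=(x_0,0_K)$ yields again a process solving the same linear SDE, and this is precisely the reference process conditioned on $\{X_0=x_0\}$. It therefore remains to condition this Markov process on the event $\{X_1=x_1\}$, where $X_1=[1,0,\dots,0]\,Z_1$ is a fixed linear functional of the terminal state.

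\textbf{The $h$-transform and identification of the drift.} Let $h(t,z)$ denote the conditional Lebesgue density of $X_1$ given $Z_t=z$; it exists and is smooth on $[0,1)\times\mathbb{R}^{K+1}$ because $Z$ is Gaussian with a nondegenerate conditional law of $X_1$ for $t<1$, and by the Markov property together with the tower rule $h$ is space-time harmonic for the generator $\mathcal{L}$ of $Z$, i.e. $(\partial_t+\mathcal{L})h=0$ on $[0,1)\times\mathbb{R}^{K+1}$. By the standard theory of conditioned Markov processes (equivalently, in this linear-Gaussian setting, a direct Girsanov change of measure with density $h(t,Z_t)/h(0,Z_0)$), under the conditional law $\Sm_{|x_0,x_1}$ the process $Z_{|x_0,x_1}$ solves $\dd Z_t = FZ_t\,\dd t + GG^{T}\nabla_z\log h(t,Z_t)\,\dd t + G\,\dd B_t$. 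Now invoke \Cref{prop:explicit_mafbm_non_markovian}: $X_1\mid(Z_t=z)\sim\mathcal{N}\!\big(\mu_{1|t}(z),\sigma^2_{1|t}\big)$ with $\mu_{1|t}(z)=x+\sum_k\omega_k y_k\zeta_k(t,1)$ affine in $z=(x,y_1,\dots,y_K)$ and variance $\sigma^2_{1|t}$ from \cref{eq:sigma_1|t} independent of $z$. Hence $\log h(t,z)=-\tfrac{(x_1-\mu_{1|t}(z))^2}{2\sigma^2_{1|t}}+c(t)$ and, since $\nabla_z\mu_{1|t}(z)=[1,\omega_1\zeta_1(t,1),\dots,\omega_K\zeta_K(t,1)]^{T}$, we obtain $\nabla_z\log h(t,z)=\tfrac{x_1-\mu_{1|t}(z)}{\sigma^2_{1|t}}[1,\omega_1\zeta_1(t,1),\dots,\omega_K\zeta_K(t,1)]^{T}=u(t,z)$, which is exactly \cref{eq:S_11}. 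This yields the claimed dynamics for $d=1$; stacking $d$ independent copies gives the general case.

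\textbf{Well-posedness and the terminal condition — the main obstacle.} On each subinterval $[0,1-\delta]$ the drift $z\mapsto FZ+GG^{T}u(t,z)$ is affine in $z$ with coefficients continuous in $t$ and with $\sigma^2_{1|t}$ bounded away from $0$, so there is a pathwise-unique strong solution there, and patching gives a unique solution on $[0,1)$. The delicate point is the singular endpoint: by \cref{eq:sigma_1|t}, $\sigma^2_{1|t}\downarrow 0$ as $t\uparrow 1$, so the drift blows up, exactly as for the classical Brownian bridge in \cref{eq:bb}. One must show, using the explicit Gaussian covariance structure of $Z$, that $Z_t$ converges almost surely as $t\uparrow 1$ with $X_1=x_1$, so that the solution extends continuously to $[0,1]$ and its law agrees with the partially pinned measure $\Sm_{|x_0,x_1}$; this limiting argument is where Gaussianity is essential and can be read off from the explicit bridge construction in \cite{daems2025efficient,daems2025phdthesis}. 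Alternatively, one sidesteps the endpoint entirely by \emph{defining} $Z_{|x_0,x_1}$ through the $h$-transform path measure on $[0,1)$ and verifying it coincides with $\Sm_{|x_0,x_1}$ on cylinder sets via the finite-dimensional Gaussian conditionals, which is a routine linear-algebra check.
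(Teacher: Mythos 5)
Your proposal is correct and follows essentially the same route as the paper: both apply Doob's $h$-transform with $h(t,z)=\Sm^1_{1|t}(x_1|z)$, verify its space-time harmonicity via the Markov property of $Z$, and then read off the drift from the explicit Gaussian conditional law $X_1\mid(Z_t=z)\sim\mathcal{N}(\mu_{1|t}(z),\sigma^2_{1|t})$ of \Cref{prop:explicit_mafbm_non_markovian}. Your added care about the singular endpoint $t\uparrow 1$ and the coordinate-wise stacking for $d>1$ goes slightly beyond what the paper writes out, but does not change the argument.
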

\begin{proof}
\citet{daems2025efficient} use a Gaussian expression for the reference process to construct the posterior SDE that is steered towards $x_1$.
% The prior (or reference process) is:
% \begin{align}
%     p(X(T)|z) &= {\cal{N}}(m_z, \sigma^2) \\
%     m_z &= x + \sum_k \omega_k y_k \zeta_k(t,T) \\
%     \sigma^2 &= \sum_{k,l} \omega_k \omega_l \int_t^T \left( \zeta_k(s, T) + g(s) \right) \left( \zeta_l(s, T) + g(s) \right) \dd s.
% \end{align}
% This leads to \cite{daems2025efficient}
% \begin{equation}
%     \dd \tilde{Z}(t) = \left(F(t) \tilde{Z}(t)+G(t)  u(\tilde{Z}(t), t) \tilde{Z}(t) \right) \dd t+G(t) \dd W(t),
% \end{equation}
% with
% \begin{align}
%     u(z, t) &= G(t)^\top\nabla_z \log {\cal{N}}(X_T;m_z,\sigma_2) \\
%     &= G(t)^\top(\nabla_z m_z)^\top \frac{1}{\sigma^2} (X_T - m_z) \\
%     &= G(t)^\top[1, \cdots, \omega_k \zeta_k(t, T), \cdots]^\top \frac{X_T-x-\sum_k \omega_k y_k \zeta_k(t,T)}{\sum_{k,l} \omega_k \omega_l \int_t^T \left( \zeta_k(s, T) + g(s) \right) \left( \zeta_l(s, T) + g(s) \right) \dd s}.
% \end{align}
We derive for a fixed data pair $(x_0,x_1)$ the dynamics of the partially pinned process
% @Gabriel hier nochmal checken worauf wir pinnen
$Z_{|x_0,x_1} = Z|(X_1=x_0, X_1=x_1)$ 
using Doob's $h$-transform \cite{appliedSDEs}, resulting in the same dynamics as in \citet{daems2025efficient}. Towards that goal, we define the transform
\begin{equation}
\label{eq:h_transform_partially}
    h:[0,1]\times \mathbb{R}^{d(K+1)} \to [0,1],\quad (t,z) \mapsto \Sm^1_{1|t}(x_1|z),
\end{equation}
where $\Sm^1_{1|t}$ satisfies 
\begin{equation}
    \Pd(X_1\in A|Z_t=z) = \int_{A}\Sm^1_{1|t}(x|z)dx, \quad A\subset\mathbb{R}^d.
\end{equation}
Denote by $\Sm_{t}(z)=\Sm_{t}(x,y)$ the density of $Z_t$ such that
\begin{equation}
    \Sm^1_1(x) = \int_{\mathbb{R}^{dK}}\Sm_1(x,y)\dd y
\end{equation}
and write $\Sm_{t+s|s}(\tilde{z}|z)$for the transition density of $Z$ from time $t$ to $t+s$. To show that $h$ defined in \cref{eq:h_transform_partially} satisfies the space-time regularity property we
mimic the proof of \cite[Theorem 7.11]{appliedSDEs}. We write with Bayes rule
\begin{align}
    \Sm_{t+s|t,x_1}(\tilde{z}|z,x_1) = \frac{\Sm^1_{1|t+s,t}(x_1|\tilde{z},z)\Sm_{t+s|t}(\tilde{z}|z)}{\Sm^1_{1|t}(x_1|z)} = \frac{\Sm^1_{1|t+s}(x_1|\tilde{z})\Sm_{t+s|t}(\tilde{z}|z)}{\Sm^1_{1|t}(x_1|z)}
\end{align}
% \begin{align}
%     p(z(t+s)|z(t),x_1) & = \frac{q(x_1| z(t+s),z(t))p(z(t+s)|z(t))}{q(x_1|z(t))}\\
%     &=\frac{q(x_1| z(t+s))p(z(t+s)|z(t))}{q(x_1|z(t))},
% \end{align}
where we use for the second equation that $Z$ is a Markov process. Hence, equivalently
\begin{equation}
\label{eq:equiv_bayes}
    \Sm_{t+s|t}(\tilde{z}|z) \Sm^1_{1|t+s}(x_1|\tilde{z})= \Sm_{t+s|t,x_1}(\tilde{z}|z,x_1)\Sm^1_{1|t}(x_1|z)
\end{equation}
% \begin{equation}
% \label{eq:equiv_bayes}
%     p(z(t+s)|z(t))q(x_1| z(t+s)) = p(z(t+s)|z(t),x_1)q(x_1|z(t)),
% \end{equation}
such that
\begin{align}
    \int_{\mathbb{R}^{d(K+1})}\Sm_{t+s|s}(\tilde{z}|z)h(t+s,z)\dd \tilde{z} &=\int_{\mathbb{R}^{d(K+1)}}\Sm_{t+s|s}(\tilde{z}|z)\Sm^1_{1|t}(x_1|z)\dd \tilde{z} \\
    &=  \int_{\mathbb{R}^{d(K+1)}}\Sm_{t+s|t,x_1}(\tilde{z}|z,x_1)\Sm^1_{1|t}(x_1|z)\dd \tilde{z} \\
    &=  \Sm^1_{1|t}(x_1|z)\underbrace{\int_{\mathbb{R}^{d(K+1)}}\Sm_{t+s|t,x_1}(\tilde{z}|z,x_1)\dd \tilde{z}}_{=1} \\
    &=h(t,z)
\end{align}
% \begin{align}
%     \int_{\mathbb{R}^{K+1}}p(z(t+s)|z(t))h(t+s,z)\dd z(t+s) &= \int_{\mathbb{R}^{K+1}}p( z(t+s)|z(t))q(x_1|z(t+s))\dd z(t+s) \\
%     &\stackrel{(\ref{eq:equiv_bayes})}{=} \int_{\mathbb{R}^{K+1}}p(z(t+s)|z(t),x_1)q(x_1|z(t))\dd z(t+s) \\
%     & =q(x_1|z(t))\underbrace{\int_{\mathbb{R}^{K+1}}p(z(t+s)|z(t),x_1)\dd z(t+s)}_{=1}\\&\stackrel{(\ref{eq:h_transform_partially})}{=}h(t,z).
% \end{align}
Hence, by \citet[eq. (7.73) - eq. (7.78)]{appliedSDEs}, we conclude that the partially pinned process $Z_{|x_0,x_1}$ satisfies
\begin{equation}
    \dd Z_{|x_0,x_1}(t) = F Z_{|x_0,x_1}(t)\dd t + GG^{T}\nabla_{z}\log \Sm^1_{1|t}(x_1|Z_{|x_0,x_1}(t))\dd t + G\dd B_t.
\end{equation}
Moreover, from \cref{eq:S_11}, we obtain $\nabla_z\log\Sm^1_{1|t} =\left(\left[\nabla_{z}\log \Sm^1_{1|t}\right]_1,\dots, \left[\nabla_{z}\log \Sm^1_{1|t}\right]_d\right)$ with
\begin{equation}
  \left[\nabla_{z}\log \Sm^1_{1|t}\right]_i(x_1|z) = [1,\omega_1\zeta_1(t,1),\dots,\omega_K\zeta_K(t,1)]^{T}\left[\frac{x_1-\mu_{1|t}(z)}{\sigma^{2}_{1|t}}\right]_i =:u(t,z)_i.
\end{equation}
\end{proof}
% Note that, throughout this work and by a slight abuse of notation, $\tilde{\Qm}^{H}_{|0,1}$ denotes the partially pinned process $Z|(X_0=x_0,X_1=x_1)$, and \textit{not} the path measure of the (fully) pinned process $Z|(Z_0=z_0,Z_1=z_1)$. 
See \Cref{fig:MAfBM} for a visualization of $1d$-trajectories and \Cref{fig:intuition} for 2$d$-trajecotires of the above defined Markov approximate fractional Brownian bridge (MA-fBB).

\begin{figure}[tb] %{l}{0.33\linewidth}
    \centering
    \begin{subfigure}[t]{0.315\linewidth}
        \captionsetup{width=0.925\linewidth} % adjust as needed
        \centering
        \includegraphics[width=\linewidth]{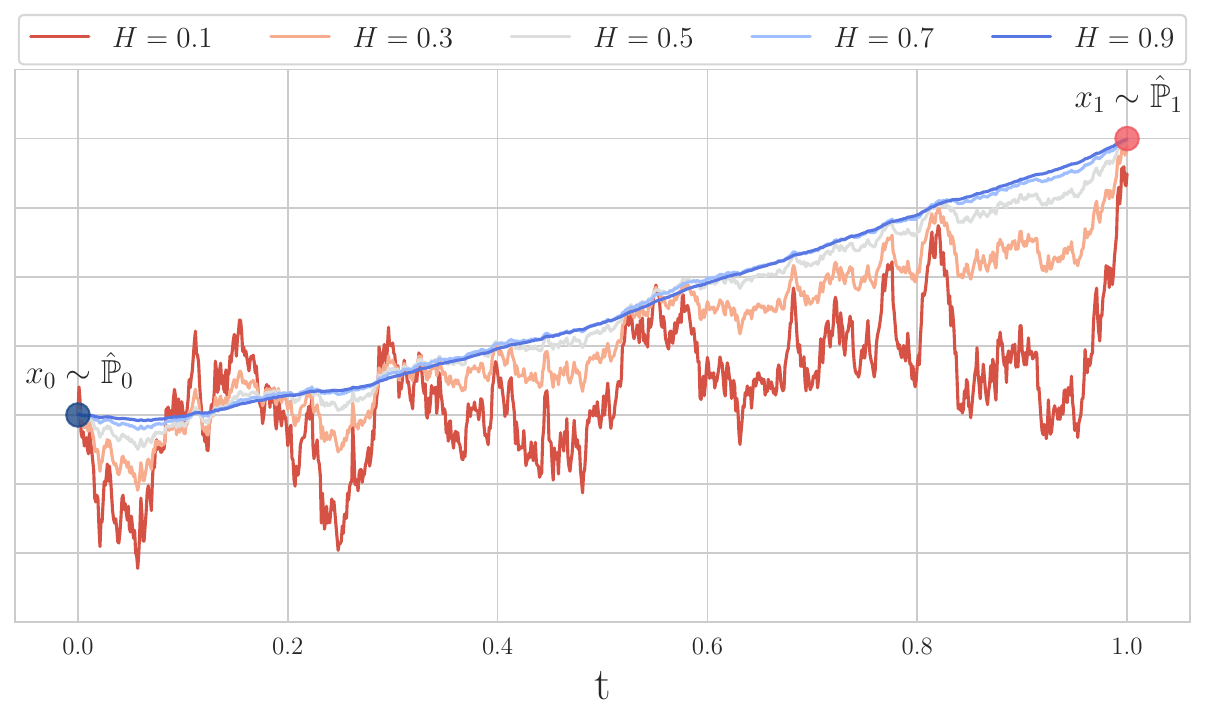}
        \caption{Trajectories from the approximate $1d$-fractional Brownian bridge for different Hurst indices.}
        \label{fig:hurst-path}
    \end{subfigure}
    \begin{subfigure}[t]{0.33\textwidth}
        \captionsetup{width=0.925\linewidth} % adjust as needed
        \centering
        \includegraphics[width=\linewidth]{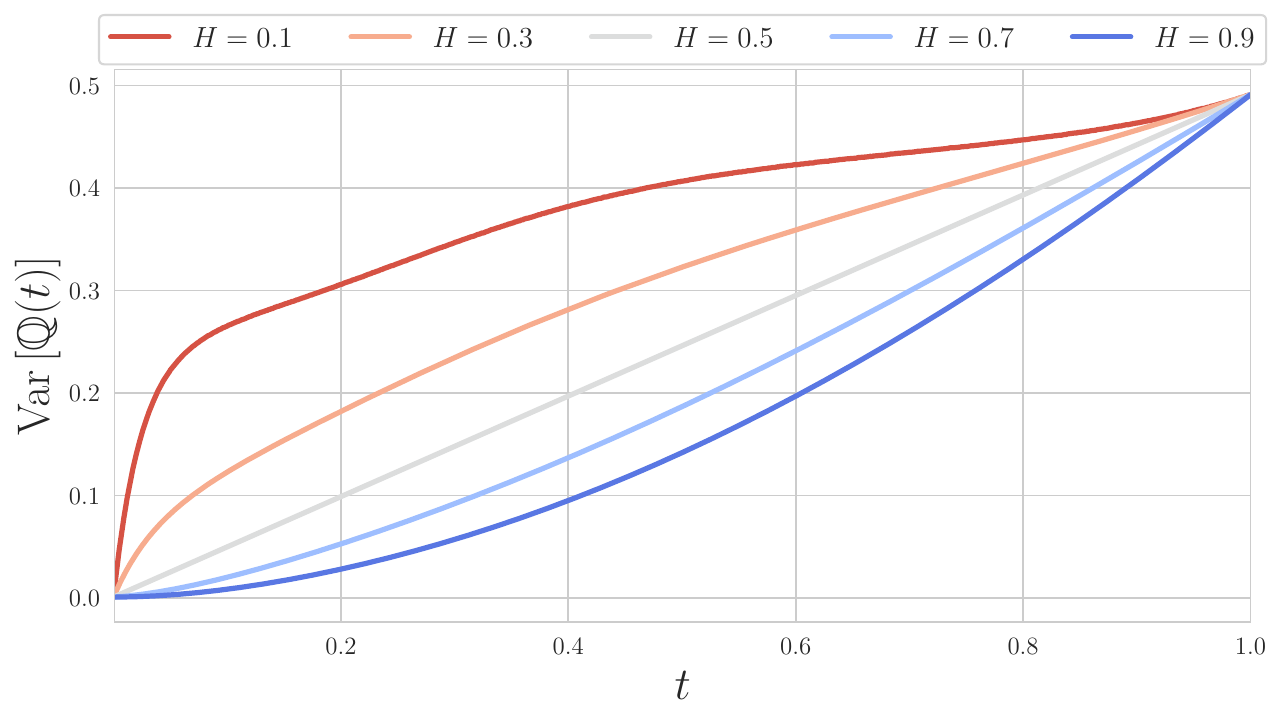}
        \caption{%should be norm -> unnom -> OU dyn.\\
        Variance of MA-fBM with normalized terminal variance.}
        \label{fig:MAfBM-Var}
    \end{subfigure}
    % \begin{subfigure}[t]{0.3\textwidth}
    %     \centering
    %     \includegraphics[width=\linewidth]{figures/fig_variance_unnorm.pdf}
    %     \caption{%should be norm -> unnom -> OU dyn.\\
    %     Variance of the unnormalized \\reference process $X$ \\with $g(t)\equiv 1$, recovering \\MA-fBM.}
    %     \label{fig:MAfBM-Var}
    % \end{subfigure}
    \begin{subfigure}[t]{0.34\textwidth}
        \captionsetup{width=0.925\linewidth} % adjust as needed
        \centering
        \includegraphics[width=\linewidth]{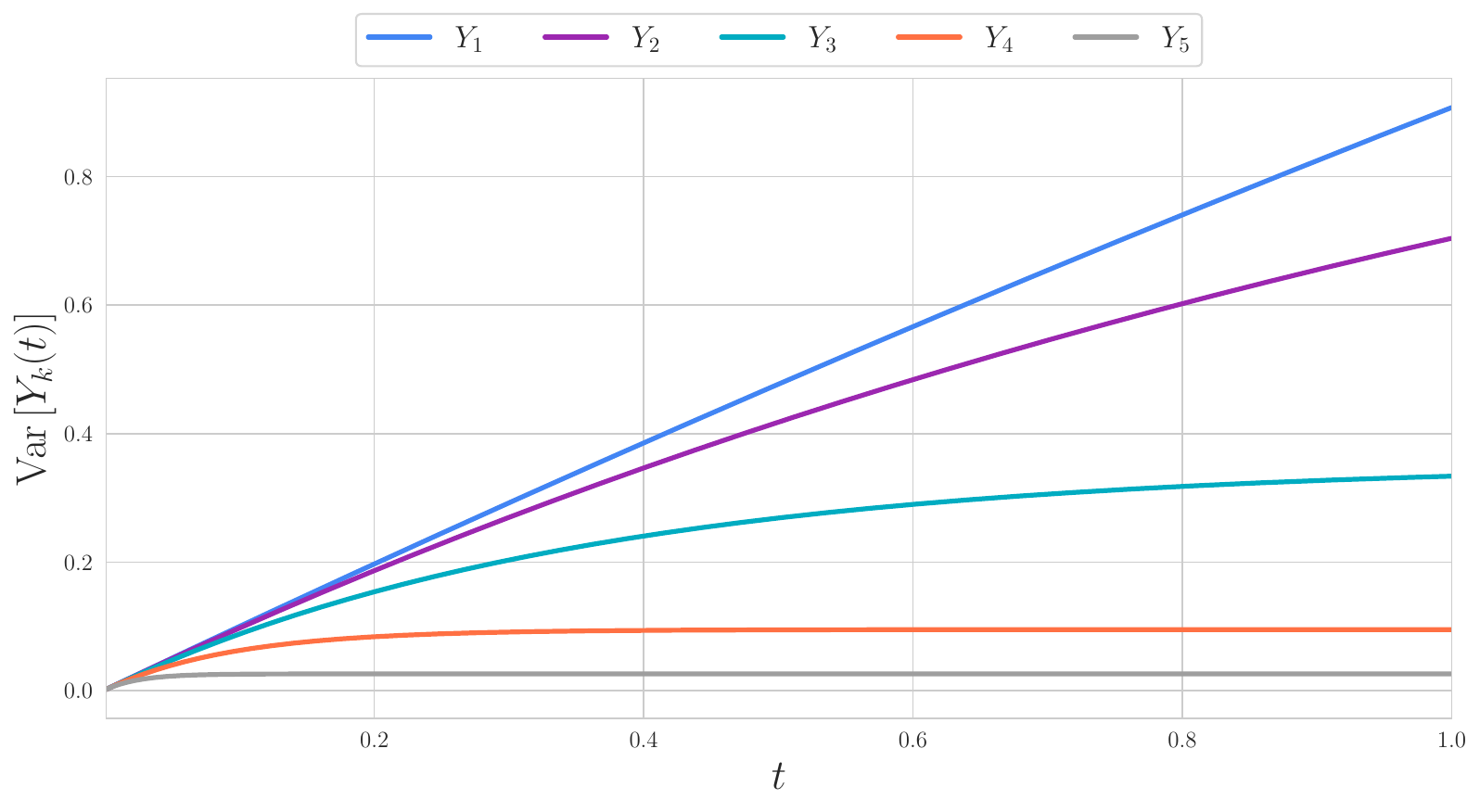}
        \caption{Variance of the augmenting OU processes $Y_1,...,Y_{5}$ approximating fBM as a weighted sum.}
        \label{fig:MAfBM-Var-Y}
    \end{subfigure}
        \caption{%should be norm -> unnom -> OU dyn.\\
        Evolution of variance in MA-fBM.}
        \label{fig:MAfBM}
\end{figure}

\subsection{Theoretical framework for paired training data} \label{adx:ours-paired}

Fix a probability measure $\Pi_{0,1}$ on $\mathbb{R}^{d}\times\mathbb{R}^{d}$, which we refer to as the coupling measure.
The marginals of this measure are denoted by $\Pi_0$ and $\Pi_1$, respectively, which means that
\[
  \Pi_0(A) := \int_{A\times\mathbb{R}^d} \dd\Pi_{0,1}(x_0,x_1)
  \quad\text{and}\quad
  \Pi_1(A) := \int_{\mathbb{R}^d\times A} \dd\Pi_{0,1}(x_0,x_1),
  \qquad A\in\mathcal{B}(\mathbb{R}^d).
\]
Our goal is to construct a stochastic process $X^\star$ that preserves the coupling in the sense that $(X^\star_0,X^\star_1)\sim\Pi_{0,1}$, and that $X^\star$ solves a stochastic differential equation (SDE).
If that is achieved, we can sample from the coupling $\Pi_{0,1}$ by first sampling $X^\star_0=x_0\sim\Pi_0$ according to the first marginal of $\Pi_{0,1}$, and then simulating the SDE forward in time on $[0,1]$ to arrive at a sample $X^\star_1 = x_1$.
As $X^\star$ preserves the coupling, it follows that $(x_0,x_1)$ is drawn from $\Pi_{0,1}$.
Recall that $\Qm_{|0,1}(\,\cdot\,|x_0,x_1)\in\mathcal{P}(\mathcal{C}^d)$ denotes the path measure of the reference process $X$ conditioned on $(X_0,X_1)=(x_0,x_1)\in\mathbb{R}^d\times\mathbb{R}^{d}$.
We define a new path measure $\Pm\in\mathcal{P}(\mathcal{C}^d)$ by integrating $(x_0,x_1)$ with respect to $\Pi_{0,1}$, that is
\begin{equation}
  \Pm := \int_{\mathbb{R}^d\times\mathbb{R}^d} \Qm_{|0,1}(\,\cdot\,|x_0,x_1) \dd \Pi_{0,1}(x_0,x_1).
\end{equation}
To wit, the process $X^\star$ associated with $\Pm$ is the reference process conditioned on $(X^*_0,X^*_1)\sim\Pi_{0,1}$.
Indeed, this is seen immediately as for any Borel sets $A_0,A_1\subset\mathbb{R}^d$ we have
\begin{align}
  &\Pm(\{\omega\in\mathcal{C}^d : \omega(0)\in A_0, \omega(1)\in A_1\}) \notag\\
  & = \int_{\mathbb{R}^d\times\mathbb{R}^d} \Qm_{|0,1}(\{\omega\in\mathcal{C}^d : \omega(0)\in A_0, \omega(1)\in A_1\}|x_0,x_1) \dd \Pi_{0,1}(x_0,x_1)\\
  &= \int_{\mathbb{R}^d\times\mathbb{R}^d} \mathbbm{1}_{A_0}(x_0)\mathbbm{1}_{A_1}(x_1) \dd \Pi_{0,1}(x_0,x_1)\\
  &= \Pi_{0,1}(A_0\times A_1).
\end{align}
A key assumption for establishing the existence of an SDE
% scope of this work assumption we have to impose in order to show that there exists an SDE 
whose solution $X^\star$ has distribution $\Pm$ is that $\Pm$ is absolutely continuous with respect to $\Qm$.

\begin{assumption}
\label{as:assumption_on_p}
The path measure $\Pm\in\mathcal{P}(\mathcal{C}^d)$ is absolutely continuous with respect to the path measure $\Qm\in\mathcal{P}(\mathcal{C}^d)$ of the reference process $X$.
In particular, there exists a density
\begin{equation}
  \frac{\dd\Pm}{\dd\Qm} : \mathcal{C}^d \to [0,\infty).
\end{equation}
\end{assumption}

The density $\dd\Pm / \dd\Qm$ allows us to lift the measure $\Pm$ to a path measure $\Pm^{\star}$ on the augmented path space $\mathcal{C}^{d(K+1)}$ via the Radon--Nikodým density
\begin{equation}\label{eq:density}
  \frac{\dd\Pm^{\star}}{\dd\Sm}(\omega) := \frac{\dd\Pm}{\dd\Qm},\qquad \omega=(\omega_X,\omega_Y)\in\mathcal{C}^{d(K+1)}.
\end{equation}
As a first step, we show that $\Pm^{\star}$ still preserves the coupling.

\begin{lemma}\label{lm:aug_coupling}
For any Borel sets $A_0,A_1\in\mathbb{R}^d$, it holds that
\begin{equation}
  \Pm^\star(\{\omega\in\mathcal{C}^{d(K+1)} : \omega(0)\in A_0\times\mathbb{R}^{dK},\omega(1)\in A_1\times\mathbb{R}^{dK}\}) = \Pi_{0,1}(A_0\times A_1).
\end{equation}
In other words, $\Pm^\star$ preserves the coupling $\Pi_{0,1}$.
\end{lemma}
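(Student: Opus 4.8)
The plan is to deduce the claim from the corresponding identity for the unaugmented measures, namely $\Pm(\{\omega\in\mathcal{C}^d : \omega(0)\in A_0,\,\omega(1)\in A_1\}) = \Pi_{0,1}(A_0\times A_1)$, which the excerpt has already established. The bridge between the two statements is the observation that both the Radon--Nikod\'{y}m density defining $\Pm^\star$ and the cylinder event in the statement depend on an augmented path only through its $X$-block.

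Concretely, I would first introduce the coordinate projection $\pi : \mathcal{C}^{d(K+1)} \to \mathcal{C}^d$, $(\omega_X,\omega_Y)\mapsto\omega_X$, and record two elementary facts. First, $\pi_\ast\Sm = \Qm$: the path measure $\Sm$ is the law of the augmented reference process $Z = (X,Y)$, and $\Qm$ is by definition the law of its first block $X$. Second, by \cref{eq:density} the density factors through $\pi$, i.e.\ $\frac{\dd\Pm^\star}{\dd\Sm} = \big(\frac{\dd\Pm}{\dd\Qm}\big)\circ\pi$. These two facts together also confirm that $\Pm^\star$ is a probability measure, since $\Pm^\star(\mathcal{C}^{d(K+1)}) = \int \big(\frac{\dd\Pm}{\dd\Qm}\big)\circ\pi\,\dd\Sm = \int \frac{\dd\Pm}{\dd\Qm}\,\dd\Qm = \Pm(\mathcal{C}^d) = 1$ under \Cref{as:assumption_on_p}.

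Then, fixing Borel sets $A_0,A_1\subset\mathbb{R}^d$ and setting $E_0 := \{\omega\in\mathcal{C}^d : \omega(0)\in A_0,\,\omega(1)\in A_1\}$, I would note that the event $E$ in the statement equals $\pi^{-1}(E_0)$, because membership in $E$ constrains only the $X$-coordinates at times $0$ and $1$; hence $\mathbbm{1}_E = \mathbbm{1}_{E_0}\circ\pi$. Chaining the definition of $\Pm^\star$, the factorization of the density, the change-of-variables formula for the pushforward $\pi_\ast\Sm = \Qm$, and the definition of the Radon--Nikod\'{y}m derivative then gives
\begin{align*}
\Pm^\star(E)
&= \int_{\mathcal{C}^{d(K+1)}} \mathbbm{1}_E \,\frac{\dd\Pm^\star}{\dd\Sm}\,\dd\Sm
= \int_{\mathcal{C}^{d(K+1)}} \Big(\mathbbm{1}_{E_0}\,\frac{\dd\Pm}{\dd\Qm}\Big)\circ\pi\,\dd\Sm \\
&= \int_{\mathcal{C}^d} \mathbbm{1}_{E_0}\,\frac{\dd\Pm}{\dd\Qm}\,\dd\Qm
= \Pm(E_0) = \Pi_{0,1}(A_0\times A_1),
\end{align*}
where the last equality is exactly the computation already recorded just above the statement.

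There is no substantial obstacle here: the argument is a routine change of variables under a measurable map. The only two points deserving a line of justification are the identification $\pi_\ast\Sm = \Qm$ (immediate from $Z=(X,Y)$) and the factorization $\mathbbm{1}_E = \mathbbm{1}_{E_0}\circ\pi$ (immediate from the form of the cylinder set), after which the displayed chain of equalities is mechanical.
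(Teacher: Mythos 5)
Your proposal is correct and follows essentially the same route as the paper's proof: both reduce the claim to the already-established identity $\Pm(E_0)=\Pi_{0,1}(A_0\times A_1)$ by observing that the density and the cylinder event depend only on the $X$-block, and then integrating out the $Y$-component. The paper phrases this last step via the disintegration theorem ($\dd\Sm(\omega)=\dd\Qm^{y}(\omega_Y|\omega_X)\dd\Qm(\omega_X)$ with the inner integral equal to $1$), whereas you phrase it as the change-of-variables formula for the pushforward $\pi_\ast\Sm=\Qm$ — a purely cosmetic difference.
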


\begin{proof}
Any $\omega\in\mathcal{C}^{d(K+1)}$ decomposes uniquely into a pair $\omega=(\omega_X,\omega_Y)$ with $\omega_X\in\mathcal{C}^d$ and $\omega_Y\in\mathcal{C}^{dK}$.
Next, we subsequently write $\Qm^{y}(\,\cdot\,|\omega_X)$ for the (regular) conditional distribution of the OU process $Y$ conditional on the path of the reference process $X$ being $\omega_X\in\mathcal{C}^d$.
Using the disintegration theorem, it therefore follows that
\begin{align}
  &\Pm^\star(\{\omega\in\mathcal{C}^{d(K+1)} : \omega(0)\in A_0\times\mathbb{R}^{dK},\omega(1)\in A_1\times\mathbb{R}^{dK}\})\\
  &=\Pm^\star(\{\omega\in\mathcal{C}^{d(K+1)} : \omega_X(0)\in A_0,\omega_X(1)\in A_1\})\\
  &= \int_{\mathcal{C}^{d(K+1)}} \frac{\dd\Pm}{\dd\Qm}(\omega_X)\mathbbm{1}_{A_0}(\omega_X(0))\mathbbm{1}_{A_1}(\omega_X(1))\dd \Sm(\omega)\\
  &= \int_{\mathcal{C}^{d}} \frac{\dd\Pm}{\dd\Qm}(\omega_X)\mathbbm{1}_{A_0}(\omega_X(0))\mathbbm{1}_{A_1}(\omega_X(1)) \int_{\mathcal{C}^{dK}}\dd\Qm^{y}(\omega_Y|\omega_X)\dd\Qm(\omega_X)\\
  &= \int_{\mathcal{C}^{d}} \frac{\dd\Pm}{\dd\Qm}(\omega_X)\mathbbm{1}_{A_0}(\omega_X(0))\mathbbm{1}_{A_1}(\omega_X(1))\dd\Qm(\omega_X)\\
  &= \Pm(\{\omega\in\mathcal{C}^d : \omega(0)\in A_0, \omega(1)\in A_1\})\\
  &= \Pi_{0,1}(A_0\times A_1),
\end{align}
showing that $\Pm^\star$ preserves the coupling $\Pi_{0,1}$.
\end{proof}

For any $z_0\in\mathbb{R}^{d(K+1)}$, we subsequently denote by
\begin{equation}
  \frac{\dd\Pm^\star_{1|0}}{\dd\Sm_{1|0}}(\,\cdot\,|z_0) : \mathbb{R}^{d(K+1)} \to [0,\infty)
\end{equation}
the density of $\Pm^\star_{1|0}(\,\cdot\,|z_0)$ with respect to $\Sm_{1|0}(\,\cdot\,|z_0)$.
In the same spirit, given $x_0\in\mathbb{R}^d$, we write
\begin{equation}
  \frac{\dd\Pm_{1|0}}{\dd\Qm_{1|0}}(\,\cdot\,|x_0) : \mathbb{R}^{d} \to [0,\infty)
\end{equation}
for the density of $\Pm_{1|0}(\,\cdot\,|x_0)$ with respect to $\Qm_{1|0}(\,\cdot\,|x_0)$.
By \cref{eq:density}, it follows that
\begin{equation}
  \frac{\dd\Pm^\star_{1|0}}{\dd\Sm_{1|0}}(z_1|z_0) = \mathbbm{1}_{\{0\}}(y_0)\frac{\dd\Pm_{1|0}}{\dd\Qm_{1|0}}(x_1|x_0)
\end{equation}
for all $z_0=(x_0,y_0),z_1=(x_1,y_1)\in\mathbb{R}^{d(K+1)}$.
Now introduce two functions
\begin{equation}
  h_1 : \mathbb{R}^{d(K+1)}\times\mathbb{R}^d \to [0,\infty),
  \qquad (z_0,x_1)\mapsto h_1(z_0,x_1) := \mathbbm{1}_{\{0\}}(y_0)\frac{\dd\Pm_{1|0}}{\dd\Qm_{1|0}}(x_1|x_0)
\end{equation}
and, with this, $h : \mathbb{R}^{d(K+1)}\times[0,1]\times\mathbb{R}^{d(K+1)} \to [0,\infty)$ given by
\begin{equation}\label{eq:h_function}
  h(z_0,t,z) := \int_{\mathbb{R}^{d(K+1)}} h_1(z_0,x_1)\Sm_{1|t}(\dd z_1|z),\quad (z_0,t,z)\in\mathbb{R}^{d(K+1)}\times[0,1]\times\mathbb{R}^{d(K+1)}.
\end{equation}
Observe that
\begin{equation}\label{eq:x_dependence}
  h(z_0,t,z) = \mathbb{E}_{\Sm_{1|t}}[h_1(z_0,X_1)|Z_0=z_0,Z_t=z] = \mathbb{E}_{\Sm^1_{1|t}}[h_1(z_0,X_1)|Z_0=z_0,Z_t=z],
\end{equation}
where $\Sm^1_{1|t}$ denotes the conditional distribution of $X_1$ given $Z_t$. In particular, $h(z_0,1,z)=h_1(z_0,x)$ whenever $z=(x,y)$. In what follows, we enforce the following assumptions on $h$.

\begin{assumption}
\label{as:assumption_on_h}
The function $h$ defined in \cref{eq:h_function} is jointly measurable.
Moreover, for all fixed $z_0\in\mathbb{R}^{d(K+1)}$, the mapping $(t,z)\mapsto h(z_0,t,z)$ satisfies
\begin{equation}
  \inf\{ h(z_0,t,z) : (t,z)\in[0,1]\times\mathbb{R}^{d(K+1)}\} > 0
\end{equation}
and is a member of $C^2_b([0,1]\times\mathbb{R}^{d(K+1)},[0,\infty))$, the space of bounded and twice continuously differentiable functions with bounded first- and second-order derivatives.
\end{assumption}

Under these assumptions, it is possible to show that the coupling preserving augmented measure $\Pm^\star$ is the distribution of a solution of a stochastic differential equation.

\begin{proposition}
\label{prop:appx-coupling-preserving}
The SDE
\begin{equation}
  \dd Z^{\star}_t = F Z^{\star}_t \dd t + GG^{T}\mathbb{E}_{\Pm^{\star}_{1|0,t}}[\nabla_{z}\log \Sm^1_{1|t}(X^\star_1|Z^{\star}_t)|Z^{\star}_0,Z^{\star}_t] \dd t  + G\dd B_t,\quad 
  Z^\star_0 = (X_0,0\dots 0),
\end{equation}
admits a pathwise unique strong solution $Z^\star=(X^\star,Y^\star)$ with distribution $\Pm^\star$.
In particular, $X^\star$ preserves the coupling $\Pi_{0,1}$, that is, $(X^\star_0,X^\star_1)\sim\Pi_{0,1}$.
\end{proposition}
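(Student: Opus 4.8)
The plan is to realize $\Pm^\star$ as a Doob $h$-transform of the augmented reference measure $\Sm$, read off the drift of the associated SDE by Girsanov's theorem, and then invoke standard Lipschitz SDE theory for strong well-posedness. It is convenient to argue conditionally on the initial vector $Z_0=z_0=(x_0,0_{dK})$, treating $z_0$ as a fixed parameter; the statement for the random initialization $Z^\star_0=(X_0,0,\dots,0)$ then follows by integrating over $Z_0\sim\Pi_0\otimes\delta_{0_{dK}}$, which is exactly the initial law forced by \Cref{lm:aug_coupling}.

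First I would identify the likelihood-ratio process. Using the density identity $\tfrac{\dd\Pm^\star_{1|0}}{\dd\Sm_{1|0}}(z_1|z_0)=h_1(z_0,x_1)$ recorded just before \Cref{as:assumption_on_h} together with the Markov property of $Z$ under $\Sm$, one checks that
\begin{equation}
  H_t := h(z_0,t,Z_t) = \mathbb{E}_{\Sm}\bigl[\,h_1(z_0,X_1)\,\big|\,Z_0=z_0,\,Z_t\,\bigr]
\end{equation}
is, conditionally on $Z_0=z_0$, a version of $\tfrac{\dd\Pm^\star|_{\mathcal F_t}}{\dd\Sm|_{\mathcal F_t}}$ and hence an $\Sm$-martingale. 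By \cref{eq:x_dependence} it satisfies $H_1=h_1(z_0,X_1)$; by \Cref{as:assumption_on_h} it is strictly positive and bounded, so it is a genuine martingale; and $H_0=h(z_0,0,z_0)=\int\tfrac{\dd\Pm_{1|0}}{\dd\Qm_{1|0}}(x_1|x_0)\,\Qm_{1|0}(\dd x_1|x_0)=1$.

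Next I would differentiate. Since $Z$ solves $\dd Z_t=FZ_t\,\dd t+G\,\dd B_t$ under $\Sm$ and $(t,z)\mapsto h(z_0,t,z)$ belongs to $C^2_b$, Itô's formula applies to $H_t=h(z_0,t,Z_t)$; because $H$ is a martingale, uniqueness of the semimartingale decomposition forces its finite-variation part to vanish, leaving
\begin{equation}
  \dd H_t = H_t\,\bigl(\nabla_z\log h(z_0,t,Z_t)\bigr)^{T} G\,\dd B_t,
\end{equation}
so $H$ is the stochastic exponential of $\int_0^{\cdot}\bigl(\nabla_z\log h(z_0,s,Z_s)\bigr)^{T} G\,\dd B_s$. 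Girsanov's theorem then yields that $B^\star_t:=B_t-\int_0^t G^{T}\nabla_z\log h(z_0,s,Z_s)\,\dd s$ is a $\Pm^\star$-Brownian motion, and substituting back shows that under $\Pm^\star$ the canonical process obeys $\dd Z_t=FZ_t\,\dd t+GG^{T}\nabla_z\log h(z_0,t,Z_t)\,\dd t+G\,\dd B^\star_t$. To match this with the drift in \cref{eq:z_star}, I would differentiate under the integral sign in $h(z_0,t,z)=\int h_1(z_0,x_1)\,\Sm^1_{1|t}(x_1\mid z)\,\dd x_1$ — legitimate because $\Sm^1_{1|t}(\cdot\mid z)$ is Gaussian with mean $\mu_{1|t}(z)$ and variance $\sigma^2_{1|t}>0$ by \Cref{prop:explicit_mafbm_non_markovian}, so its density is smooth in $z$ with Gaussian tails — to get $\nabla_z h(z_0,t,z)=\int h_1(z_0,x_1)\,\Sm^1_{1|t}(x_1\mid z)\,\nabla_z\log\Sm^1_{1|t}(x_1\mid z)\,\dd x_1$ and hence, dividing by $h(z_0,t,z)$,
\begin{equation}
  \nabla_z\log h(z_0,t,z)=\mathbb{E}_{\Pm^\star_{1|0,t}}\bigl[\,\nabla_z\log\Sm^1_{1|t}(X_1\mid Z_t)\,\big|\,Z_0=z_0,\,Z_t=z\,\bigr],
\end{equation}
where the last equality uses that the $h$-transform makes $\Pm^\star_{1|0,t}(\dd x_1\mid z_0,z)=\tfrac{h_1(z_0,x_1)}{h(z_0,t,z)}\,\Sm^1_{1|t}(\dd x_1\mid z)$, itself a consequence of $\tfrac{\dd\Pm^\star}{\dd\Sm}|_{\mathcal F_1}=h_1(z_0,X_1)$ and the Markov property of $Z$. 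This exhibits $\Pm^\star$ as a weak solution of \cref{eq:z_star}.

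Finally, for pathwise uniqueness and strong existence I would note that the drift $z\mapsto Fz+GG^{T}\nabla_z\log h(z_0,t,z)$ is a linear map plus the bounded map $GG^{T}\nabla_z\log h$, whose spatial gradient is bounded since $h\in C^2_b$ is bounded away from $0$ (so $\nabla_z h/h$ is Lipschitz in $z$, uniformly in $t$); hence the drift is globally Lipschitz with linear growth and the diffusion coefficient $G$ is constant. The classical Itô existence–uniqueness theorem then provides a pathwise unique strong solution $Z^\star=(X^\star,Y^\star)$ of \cref{eq:z_star} for each fixed $z_0$, and by uniqueness in law its distribution coincides with the weak solution $\Pm^\star$ constructed above; integrating over $Z_0\sim\Pi_0\otimes\delta_{0_{dK}}$ finishes the proof, and $(X^\star_0,X^\star_1)\sim\Pi_{0,1}$ is then immediate from \Cref{lm:aug_coupling}. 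I expect the crux to be the middle step: verifying rigorously that $h(z_0,\cdot,Z_\cdot)$ is a genuine martingale, that Itô's formula and Girsanov's theorem are applicable under \Cref{as:assumption_on_h}, and that differentiation under the integral sign is valid; the remaining steps are either standard SDE theory or bookkeeping already prepared by \Cref{lm:aug_coupling} and \Cref{prop:explicit_mafbm_non_markovian}.
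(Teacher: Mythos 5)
Your proposal is correct and follows the same overall strategy as the paper: identify the drift of $\Pm^\star$ as $Fz+GG^{T}\nabla_z\log h(z_0,t,z)$, use \Cref{as:assumption_on_h} (boundedness of $h$ away from zero together with $C^2_b$ regularity) to get a globally Lipschitz drift and hence a pathwise unique strong solution, and finally rewrite $\nabla_z\log h$ as the conditional expectation $\mathbb{E}_{\Pm^\star_{1|0,t}}[\nabla_z\log\Sm^1_{1|t}(X^\star_1|Z^\star_t)\mid Z^\star_0,Z^\star_t]$ by differentiating under the integral, exactly as in the paper's appeal to \citet[Proof of Proposition 3]{debortoli2023augmentedbridgematching}. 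The only substantive difference is in how weak existence is established: the paper treats this step as a black box, citing Lemma 3.1 of \citet{palmowski2002changeofmeasures} to show that $\Pm^\star_{|0}(\cdot|z_0)$ solves the local martingale problem for the tilted generator $\mathscr{L}^{z_0}_t$ and then Theorem 18.7 of \citet{kallenberg1997foundations} to convert this into a weak solution of the SDE, whereas you unpack that citation into an explicit computation — showing $H_t=h(z_0,t,Z_t)$ is the bounded, strictly positive density martingale, applying It\^o's formula and the vanishing of the finite-variation part to exhibit $H$ as a stochastic exponential, and invoking Girsanov. Both routes rest on the same exponential change of measure; yours is self-contained, the paper's is shorter. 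Your identification of the crux is also apt: the martingale property of $h(z_0,\cdot,Z_\cdot)$ (which uses that $\dd\Pm/\dd\Qm$ factors through the endpoints because $\Pm$ is a mixture of bridges) and the validity of differentiation under the integral are precisely where \Cref{as:assumption_on_h} and the Gaussianity of $\Sm^1_{1|t}$ from \Cref{prop:explicit_mafbm_non_markovian} are needed.
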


\begin{proof}
For $z_0\in\mathbb{R}^{d(K+1)}$ and $t\in[0,1]$, consider the linear differential operator $\mathscr{L}^{z_0}_t$ mapping functions $\varphi\in C^{2}_b(\mathbb{R}^{d(K+1)})$ to
\begin{equation}
  \mathscr{L}^{z_0}_t\varphi(z) = \langle F z + (GG^{T})\nabla_z \log h(z_0,t,z),\nabla\varphi(z)\rangle + \frac{1}{2}tr(GG^{T}\nabla^{2}\varphi(z)),\quad z\in\mathbb{R}^{d(K+1)}.
\end{equation}
Due to the assumptions imposed on $h$, it follows from Lemma 3.1 in \citet{palmowski2002changeofmeasures} that the local martingale problem associated with the operator $\mathscr{L}^{z_0}_t$ and initial distribution $\delta_{z_0}$ is solved by $\Pm^\star_{|0}(\,\cdot\,|z_0)$.
Thus, by Theorem 18.7 in \citet{kallenberg1997foundations}, it follows that the stochastic differential equation
\begin{equation}
  \dd Z^{\star}(t) = F Z^{\star}(t) \dd t + GG^{T}\nabla_{z}\log h(z_0,t,Z^{\star}_t)\dd t  + G\dd B_t,\qquad Z^\star_0=z_0
\end{equation}
admits a weak solution in $Z^\star_0=z_0$ with associated path measure $\Pm^\star_{|0}(\,\cdot\,|z_0)$.
Next, since $h(z_0,\,\cdot\,)\in C^2_b([0,1]\times\mathbb{R}^{d(K+1)},[0,\infty))$ implies that $(t,z)\mapsto \nabla_z\log h(z_0,t,z)$ is Lipschitz continuous and therefore the solution of the SDE is even strong and pathwise unique.
Finally, it follows that the pathwise unique strong solution $Z^\star$ of
\begin{equation}
  \dd Z^{\star}(t) = F Z^{\star}(t) \dd t + GG^{T}\nabla_{z}\log h(Z^\star_0,t,Z^{\star}_t)\dd t  + G\dd B_t,\qquad Z^\star_0=(X_0,0,\dots,0)
\end{equation}
has distribution $\Pm^{\star}$ as $\Pm^\star_0 = \tilde{\Pi}_0$.
We conclude since
\begin{equation}
  \nabla_{z}\log h(Z^\star_0,t,Z^{\star}_t)
  = \mathbb{E}_{\Pm^\star_{1|0,t}}[\nabla_z\log\Sm^1_{1|t}(X^\star_1|Z^\star_t)|Z^\star_0,Z^\star_t]
\end{equation}
using \cref{eq:x_dependence} and following the arguments in \citet[Proof of Proposition 3]{debortoli2023augmentedbridgematching}.
\end{proof}

In \Cref{prop:appx-coupling-preserving} we constructed the coupling-preserving path measure $\Pm^\star$ associated with the stochastic process we wish to learn. The following corollary establishes that we can obtain samples $z^\star_t \sim \Pm^\star_t$ by first sampling $(x_0, x_1) \sim \Pi_{0,1}$ and subsequently sampling $z_t \sim \Sm_{t \mid X_0, X_1}(\cdot \mid x_0, x_1)$.

\begin{corollary}
\label{cor:contructed_path_measure}
For the coupling-preserving process $Z^\star$ constructed in \Cref{prop:appx-coupling-preserving}, the associated path measure satisfies $\Pm^{\star}=\Pi_{0,1}\Sm_{|X_0,X_1}$.
\end{corollary}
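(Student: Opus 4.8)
The plan is to establish the identity $\Pm^\star=\Pi_{0,1}\Sm_{|X_0,X_1}$ directly at the level of path measures, by disintegrating both sides over the endpoint pair $(X_0,X_1)$ and exploiting that the Radon--Nikod\'{y}m density tilting $\Sm$ into $\Pm^\star$ depends on the trajectory only through its two endpoints. Two facts suffice: (i) the law of $(X_0,X_1)$ under $\Pm^\star$ is $\Pi_{0,1}$, which is exactly \Cref{lm:aug_coupling}; and (ii) the regular conditional law of the full augmented trajectory under $\Pm^\star$ given $(X_0,X_1)=(x_0,x_1)$ agrees, for $\Pi_{0,1}$-almost every $(x_0,x_1)$, with $\Sm_{|X_0,X_1}(\,\cdot\,|x_0,x_1)$. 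Given (i) and (ii), the disintegration theorem (valid since all path spaces here are Polish) yields $\Pm^\star=\int\Sm_{|X_0,X_1}(\,\cdot\,|x_0,x_1)\,\dd\Pi_{0,1}(x_0,x_1)=\Pi_{0,1}\Sm_{|X_0,X_1}$.

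The heart of the argument is (ii), which I would deduce from the observation that $\dd\Pm^\star/\dd\Sm$ is measurable with respect to the $\sigma$-algebra generated by the endpoint map $\omega\mapsto(\omega_X(0),\omega_X(1))$. By \cref{eq:density} we have $(\dd\Pm^\star/\dd\Sm)(\omega)=(\dd\Pm/\dd\Qm)(\omega_X)$, so it is enough to show $\dd\Pm/\dd\Qm$ is a function of $(\omega_X(0),\omega_X(1))$ alone. This is where the mixture-of-bridges structure enters: writing $\Qm_{0,1}$ for the law of $(X_0,X_1)$ under $\Qm$, one has the endpoint disintegration $\Qm=\Qm_{0,1}\Qm_{|0,1}$, whereas $\Pm=\Pi_{0,1}\Qm_{|0,1}$ by construction. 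Since \Cref{as:assumption_on_p} gives $\Pm\ll\Qm$ and $\Pm_{0,1}=\Pi_{0,1}$ was already verified, pushing forward along the endpoint map gives $\Pi_{0,1}\ll\Qm_{0,1}$; testing against a bounded measurable $g$ on $\mathcal{C}^d$ and using that $\omega(0)=x_0$ and $\omega(1)=x_1$ hold $\Qm_{|0,1}(\,\cdot\,|x_0,x_1)$-almost surely then identifies $(\dd\Pm/\dd\Qm)(\omega)=\frac{\dd\Pi_{0,1}}{\dd\Qm_{0,1}}(\omega(0),\omega(1))$, hence $(\dd\Pm^\star/\dd\Sm)(\omega)=\frac{\dd\Pi_{0,1}}{\dd\Qm_{0,1}}(X_0(\omega),X_1(\omega))$.

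With this factorization in hand, (ii) is a standard fact: a change of measure whose density is measurable with respect to a sub-$\sigma$-algebra $\mathcal{G}$ leaves the regular conditional distribution given $\mathcal{G}$ unchanged. Concretely, for bounded measurable $f$ on $\mathcal{C}^{d(K+1)}$ and bounded $\sigma(X_0,X_1)$-measurable $\phi$,
\[
  \mathbb{E}_{\Pm^\star}[f\,\phi]=\mathbb{E}_{\Sm}\!\Bigl[f\,\phi\,\tfrac{\dd\Pm^\star}{\dd\Sm}\Bigr]=\mathbb{E}_{\Sm}\!\Bigl[\phi\,\tfrac{\dd\Pm^\star}{\dd\Sm}\,\mathbb{E}_{\Sm}[f\mid X_0,X_1]\Bigr]=\mathbb{E}_{\Pm^\star}\!\bigl[\phi\,\mathbb{E}_{\Sm}[f\mid X_0,X_1]\bigr],
\]
so $\mathbb{E}_{\Pm^\star}[f\mid X_0,X_1]=\mathbb{E}_{\Sm}[f\mid X_0,X_1]$ $\Pm^\star$-almost surely, and the latter is realised by integration against $\Sm_{|X_0,X_1}$. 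Combining with (i) gives $\mathbb{E}_{\Pm^\star}[f]=\int\mathbb{E}_{\Sm_{|X_0,X_1}(\,\cdot\,|x_0,x_1)}[f]\,\dd\Pi_{0,1}(x_0,x_1)$, which is precisely the asserted identity.

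The only genuinely delicate ingredient is the endpoint disintegration $\Qm=\Qm_{0,1}\Qm_{|0,1}$ and the almost-everywhere identification of $\dd\Pm/\dd\Qm$ with $\tfrac{\dd\Pi_{0,1}}{\dd\Qm_{0,1}}\circ(X_0,X_1)$; everything downstream is routine bookkeeping with regular conditional distributions. In writing it out I would take care that the version of $\Qm_{|0,1}$ used to define $\Pm$ is compatible with the one obtained by disintegrating $\Qm$ over its endpoint marginal — this is exactly the bridge-decomposition structure underlying the construction of $\Pm$ — and that $\Sm$ projects onto $\Qm$ in the $X$-coordinates, so that $\Sm_{0,1}=\Qm_{0,1}$ and $\Sm_{|X_0,X_1}$ is the conditional law invoked in (ii).
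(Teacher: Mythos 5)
Your proposal is correct, but it proves the corollary by a genuinely different route than the paper. You work entirely at the level of path measures: you show that $\tfrac{\dd\Pm^{\star}}{\dd\Sm}$, which by \cref{eq:density} depends only on the $X$-component of the path, in fact factors through the endpoint map as $\tfrac{\dd\Pi_{0,1}}{\dd\Qm_{0,1}}(X_0,X_1)$ (using $\Pm=\Pi_{0,1}\Qm_{|0,1}$, $\Qm=\Qm_{0,1}\Qm_{|0,1}$, and the fact that bridges fix their endpoints almost surely), and then invoke the standard fact that a change of measure with a $\sigma(X_0,X_1)$-measurable density leaves the regular conditional distribution given $(X_0,X_1)$ unchanged; combined with \Cref{lm:aug_coupling} this yields the claim. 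The paper instead argues through the SDE representation of \Cref{prop:appx-coupling-preserving}: it starts from the same disintegration $\Pm^{\star}=\Pi_{0,1}\Pm^{\star}_{|X_0,X_1}$, then observes that conditioning on $X^{\star}_1=x_1$ collapses the conditional-expectation drift $\mathbb{E}_{\Pm^{\star}_{1|0,t}}[\nabla_z\log\Sm^1_{1|t}(X^{\star}_1|Z^{\star}_t)\,|\,Z^{\star}_0,Z^{\star}_t]$ to $\nabla_z\log\Sm^1_{1|t}(x_1|Z^{\star}_t)$, so that the conditioned process solves the partially pinned SDE of \cref{eq:app_dynamics-pinned-process} and hence $\Pm^{\star}_{|X_0,X_1}=\Sm_{|X_0,X_1}$ by uniqueness. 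Your argument is more elementary and self-contained, avoiding both the somewhat informal substitution $X^{\star}_1=x_1$ inside the conditional expectation and any appeal to well-posedness of the conditioned dynamics; the paper's argument has the advantage of making the link to the bridge SDE explicit. The one point to write out carefully in your version, as you note yourself, is the compatibility of the bridge kernel $\Qm_{|0,1}$ used to define $\Pm$ with the disintegration of $\Qm$ over its endpoint marginal, and the identification $\Sm_{X_0,X_1}=\Qm_{0,1}$ on the augmented space.
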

\begin{proof} 
Since $\Pm^\star$ preserves the coupling $\Pi_{0,1}$, we have 
\begin{align}
    \Pm^\star &=  \int_{\mathbb{R}^{d}\times \mathbb{R}^{d}}\Pm^{\star}_{|X_0,X_1}(\cdot|x_0,x_1)\dd\Pm^{\star}_{X_0,X_1}(x_0,x_1)\\
    &= \int_{\mathbb{R}^{d}\times \mathbb{R}^{d}}\Pm^{\star}_{|X_0,X_1}(\cdot|x_0,x_1)\dd\Pi_{0,1}(x_0,x_1) \\
    &= \Pi_{0,1}\Pm^\star_{|X_0,X_1}.
\end{align}
For $X^\star_1=x_1$ we find 
\begin{align}
     \mathbb{E}_{\Pm^\star_{1|0,t}}[\nabla_z\log\Sm^1_{1|t}(X^\star_1|Z^\star_t)|Z^\star_0,Z^\star_t]_{X^\star_1=x_1} &= \mathbb{E}_{\Pm^\star_{1|0,t}}[\nabla_z\log\Sm^1_{1|t}(x_1|Z^\star_t)|\sigma(Z^\star_0,Z^\star_t)]\\
     &=\nabla_z\log\Sm^1_{1|t}(x_1|Z^\star_t),
\end{align}
since $\nabla_z\log\Sm^1_{1|t}(x_1|Z^\star_t)$ is measurable with respect to $\sigma(Z^\star_0,Z^\star_t)$. Therefore $Z^\star_{X^\star_0,X^\star_1}$ solves the SDE in \cref{eq:app_dynamics-pinned-process} of the partially pinned process and we conclude $Z^\star_{X_0,X_1}\stackrel{d}{=}Z_{X_0,X_1}$ such that
\begin{equation}
    \Pm^\star = \Pi_{0,1}\Sm_{X_0,X_1}.
\end{equation}
% Let $p^\star_t(z)$ denote the density of $Z^\star_t$.
% We calculate
% \begin{align}
%     &\mathbb{E}_{\Pm^\star_{1|0,t}}[\nabla_z\log\Sm^1_{1|t}(x_1|Z^\star_t)|Z^\star_0,Z^\star_t]_{Z^\star_0=z_0,Z^\star_t=z_t,X^\star_1=x_1} \\
%     &= \int_{\mathbb{R}^{d(K+1)}} \nabla_z\log\Sm^1_{1|t}(x_1|z_t) p^\star_t(z_t)\dd \Pm^\star_{1|0,t}(z_1|z_0,z_t)
% \end{align}
\end{proof}

Given a data point $X^{\star}_0=x_0\sim\Pi_0$, and assuming we could simulate the coupling preserving process $Z^\star$, we could sample from the coupling $\Pi_{0,1}$ by simulating the SDE in \cref{eq:z_star} forward in time on $[0,1]$ to arrive at a sample $X^\star_1 = x_1$. As $X^\star$ preserves the coupling, it follows that $(x_0,x_1)$ is drawn from $\Pi_{0,1}$. However, the expectation in the drift of $Z^{\star}$ is intractable and hence we approximate this expectation by a time-dependent neural network $u^{\theta}_t$. We now define  \textit{Fractional Diffusion Bridge Models (FDBM)} for paired data translation as the stochastic process $Z^{\theta}$ associated with the path measure $\Pd^{\theta}$ solving 
\begin{align}
\label{eq:paired_nn_param_appendix}
  \dd Z^{\theta}_t &= FZ^{\theta}_t\dd t + GG^{T}u^{\theta}(t,X_0,Z^{\theta}_t) + G\dd B_t,\quad Z^{\theta}_0=(X_0,0,\dots,0), \\
  \label{eq:paired_input_transform}
  u^{\theta}_i(t,x_0,z)&=[1,\omega_1\zeta_1(t,1),\dots,\omega_K\zeta_K(t,1)]^{T}\tilde{u}^{{\theta}}_i(t,x_0,\mu_{1|t}(z)),\quad u^\theta=(u^{\theta}_1,\dots,u^\theta_d),
\end{align}
where $\tilde{u}^{{\theta}} := (\tilde{u}^{{\theta}}_1,\dots,\tilde{u}^{{\theta}}_d)$ is a time-dependent neural network that takes the starting value $x_0$ and the mean $\mu_{1|t}(z)$ of the conditional terminal $X_1|(Z_t=z)$ as an input. Denote
\begin{equation}
    \tilde{v}(t) = \left[1,\omega_1\zeta_1(t,1),\dots,\omega_K\zeta_K(t,1)\right]^T \in \mathbb{R}^{K+1}
\end{equation}
and define
\begin{equation}
    \quad v(t) = \begin{pmatrix}
        \tilde{v}(t) & 0 &\dots &0 \\
        0 & \tilde{v}(t) & \dots &0 \\
        \vdots & \ddots &\ddots &0\\
        0 & \dots &\dots & \tilde{v}(t)\\
    \end{pmatrix} \in\mathbb{R}^{d(K+1),d}.
\end{equation}
Since $t\mapsto \Vert G^{T}\tilde{v}(t)\Vert^2_2$ is continuous, it attains its maximum on the compact interval $[0,1]$. Hence, we find $\Vert G^{T}v(t)\Vert^2_2\leq c$ for some  constant $c>0$. Parameterizing the learnable process $Z^\theta$ associated with the path measure $\Pm^\theta$ according to \cref{eq:paired_nn_param_appendix} we aim to minimize the KL-divergence $\kl(\Pd^\star|\Pd^{\theta})$. We calculate using Girsanov’s theorem (See \citet[eq. (30)]{blessing2025underdamped} for our setting), together with the stochastic Fubini theorem and  Jensen’s inequality
\footnotesize
% \begin{equation}
%     \mathbb{E}_{\mathbb{P}^\star}[\int^1_0 f(X_t)dt] = \int^1_0 \mathbb{E}_{\mathbb{P}^\star}\left[\mathbb{E}_{\mathbb{P}^\star_{1|0,t}}[f(X_1,X_t)|X_0,X_t]\right]dt=\int^1_0 \mathbb{E}_{\mathbb{P}^\star}[f(X_1,X_t)]dt
% \end{equation}
\begin{align}
    \dkl(\mathbb{P}^\star|\Pm^\theta) &= \mathbb{E}_{\Pm^\star_{0,t}}\left[\frac{1}{2}\int^1_0\left\Vert G^T\left\{\mathbb{E}_{\Pm^\star_{1|0,t}}\left[\nabla_z\log\Sm^1_{1|t}(X^\star_1|Z^\star_t)|Z^\star_0,Z^\star_t\right]-u^\theta(t,X_0,Z^\star_t)\right\} \right\Vert^2_2\dd t\right] \\
    &=\frac{1}{2}\int^1_0\mathbb{E}_{\Pm^\star_{0,t}}\left[\Vert G^T\left\{\mathbb{E}_{\Pm^\star_{1|0,t}}[\nabla_z\log\Sm^1_{1|t}(X^\star_1|Z^\star_t)-u^\theta(t,X_0,Z^\star_t)|Z^\star_0,Z^\star_t]\right\} \Vert^2_2\right]\dd t \\
    &\leq\frac{1}{2}\int^1_0\mathbb{E}_{\Pm^{\star}_{0,t}}\left[\left\Vert G^Tv(t)\right\Vert^2_2 \left\Vert \mathbb{E}_{\mathbb{P}^{\star}_{1|t,0}}\left[\frac{X^*_1-\mu_{1|t}(Z^{\star}_t)}{\sigma^2_{1|t}}-\tilde{u}^{\theta}(t,X_0,Z^{\star}_t))\Big|Z^\star_0,Z^\star_t\right]\right\Vert^2_2\right]\dd t \\
    &\leq\frac{c}{2}\int^1_0\mathbb{E}_{\Pm^{\star}_{0,t}}\left[\left\Vert \mathbb{E}_{\mathbb{P}^{\star}_{1|t,0}}\left[\frac{X^*_1-\mu_{1|t}(Z^{\star}_t)}{\sigma^2_{1|t}}-\tilde{u}^{\theta}(t,X_0,Z^{\star}_t))\Big|Z^\star_0,Z^\star_t\right]\right\Vert^2_2\right]\dd t \\
    &\leq\frac{c}{2}\int^1_0\mathbb{E}_{\Pm^{\star}_{0,t}}\left[\mathbb{E}_{\mathbb{P}^{\star}_{1|t,0}}\left[\left\Vert \frac{X^*_1-\mu_{1|t}(Z^{\star}_t)}{\sigma^2_{1|t}}-\tilde{u}^{\theta}(t,X_0,Z^{\star}_t))\right\Vert^2_2\Big|Z^\star_0,Z^\star_t\right]\right]\dd t \\
    &=\frac{c}{2}\int^1_0\mathbb{E}_{\Pm^{\star}}\left[\left\Vert \frac{X^*_1-\mu_{1|t}(Z^{\star}_t)}{\sigma^2_{1|t}}-\tilde{u}^{\theta}(t,X_0,Z^{\star}_t))\right\Vert^2_2\right]\dd t.
    \label{eq:optimal_loss}
\end{align}
\normalsize
Hence, we aim to minimize \Cref{eq:optimal_loss} in order to learn the stochastic process $Z^\star$. During training, the loss is computed by first sampling $(x_0, x_1) \sim \Pi_{0,1}$ and subsequently sampling $z^\star_t \sim \Sm_{t \mid X_0,X_1}(\cdot \mid x_0, x_1)$. This procedure is justified since $\Pm^\star = \Pi_{0,1}\Sm_{|X_0,X_1}$ by Corollary \ref{cor:contructed_path_measure}.

\subsection{Theoretical framework for unpaired data}
\label{sec:theory-unpaired}
Given two unknown distributions $\Pi_0$ and $\Pi_1$ and the reference process $X = \sqrt{\epsilon}\hat{B}_{H}$ we seek to find a solution to the dynamic Schrödinger Bridge problem \cite{schrodinger1931umkehrung,schrodinger1932theorie,leonard2013survey} 
\begin{equation}
\label{eq:app_dynproblem}
    \mathbb{T}^{SB} = \argmin_{\mathbb{T}\in\mathcal{P}(\mathcal{C}^d)}
    \left\{
    \dkl(\mathbb{T}|\Qm)\,\,;\,\,\mathbb{T}_0 = \Pi_0,\,\,\mathbb{T}_0 = \Pi_1
    \right\}.
\end{equation}
By \citet{foellmer1988random}, \citet[Proposition 2.3]{leonard2013survey} there is at most one solution $\mathbb{T}^{SB}$ to the dynamic Schrödinger bridge problem in \cref{eq:app_dynproblem} and if the solution $\mathbb{T}^{SB}$ exists, then $\mathbb{T}^{SB}_{01}$ is the solution to the static Schrödinger bridge problem. Assume there exists a solution $\mathbb{T}^{SB}$ for the above dynamik Schrödinger bridge w.r.t. $\Qm$ such that $\Pi^{SB}_{0,1}:=\Tm^{SB}_{0,1}$ is the solution to the corresponding static Schrödinger bridge problem. By the above \Cref{prop:appx-coupling-preserving} we can construct a process $Z^\star=(X^\star,Y^\star)$ with path measure $\Pm^\star$ and dynamics
\begin{equation}
    \dd Z^*_t = F Z^{*}_t \dd t + GG^{T}\mathbb{E}_{\Pm^{\star}_{1|0,t}}[\nabla_{z}\log \Sm^1_{1|t}(X^{*}_1|Z^{*}_t)|Z^{*}_0,Z^{*}_t] \dd t  + G\dd B(t)
\end{equation}
that preserves the coupling $\Pi^{SB}_{0,1}$. 
In contrast to the setting of paired training data, we have no access to samples of $\Pi^{SB}_{0,1}$. On the other hand, letting $\mathbb{S}$ be the path measure associated with the augmented reference process $Z$, we define using the marginals of $\Pm^\star$ the SB problem on the augmented space via
\vspace{-1mm}
\begin{equation}
\label{eq:augmented_dynproblem_appendix}
    \Vm^{SB} = \argmin_{\Vm\in\mathcal{P}(\mathcal{C}^{d\cdot(K+1)})}
    \left\{
    \kl(\Vm|\mathbb{S})\,\,;\,\,\Vm_0 = \Pm^{\star}_0,\,\,
    \Vm_1 =\Pm^{\star}_1
    \right\}.
  \vspace{-1mm}
\end{equation}
Since $Z$ is a Markov process, the path measure solving the lifted SB problem in \cref{eq:augmented_dynproblem_appendix} is associated with a Markovian process \cite{leonard2013survey}, whereas $Z^\star$ in \cref{eq:z_star} is non-Markovian due to its dependency on $X_0$ in the drift function. Motivated by this observation, we generalize in the following the definition of a reciprocal class \cite{leonard2014reciprocal,shi2023diffusion} and the notation of a Markovian projection \cite{gyngy1986mimicking,peluchetti2023nondenoisingforwardtimediffusions,shi2023diffusion} to our setting of a scaled MA-fBM reference process. We define the augmented reciprocal class $\mathcal{R}_{a}(\Sm)$ below as the set of path measures $\Vm$ on the augmented space whose marginals can be sampled by first drawing $(x_0, x_1) \sim \Vm_{X_0,X_1}$ and then sampling $z_t \sim \Sm_{t|X_0,X_1}(\cdot \mid x_0, x_1)$.
\begin{definition}
We say that $\Vm\in\mathcal{P}(\mathcal{C}^{d\cdot(K+1)})$ is in the augmented reciprocal class  $\mathcal{R}_{a}(\Sm)$ of $\Sm$ if 
  \vspace{-2mm}
\begin{equation}
    \Vm= \int_{\mathbb{R}^d\times\mathbb{R}^d} \Sm_{|X_0,X_1}(\,\cdot\,|x_0,x_1) \dd \Vm_{X_0,X_1}(x_0,x_1)=:\Vm_{X_1,X_0}\Sm_{|X_0,X_1}.
  \vspace{-1mm}
\end{equation}
For any $\Vm\in\mathcal{P}(\mathcal{C}^{d\cdot(K+1)})$ we define the augmented reciprocal projection by 
  \vspace{-1mm}
\begin{equation}
\mathrm{proj}_{\mathcal{R}_{a}(\Sm)}(\Vm):=\Vm_{X_0,X_1}\Sm_{|X_0,X_1}.
  \vspace{-1mm}
\end{equation}
\end{definition}
Since we know that the solution to the lifted SB problem in \cref{eq:augmented_dynproblem_appendix} is a Markovian measure, we project any element of the augmented reciprocal class to a Markovian path measure by the following definition.
\begin{definition}
For $\Vm\in\mathcal{P}(\mathcal{C}^{d\cdot(K+1)})$ with $\Vm\in\mathcal{R}_{a}(\Sm)$
we define the augmented Markovian projection $\mathrm{proj}_{\mathcal{M}_{a}}(\Vm)$ by the path measure associated to $M=(M^1,M^2,\dots M^{K+1})$ solving for $M^1_0\sim\Vm_{M^1_0}$
%\vspace{-1mm}
\begin{equation}
    \dd  M_t = F  M_t \dd t + GG^T\mathbb{E}_{\Vm_{1|t}}\left[\nabla_{m_t}\log\Sm^{1}_{1|t}(M^1_1|M_t)|M_t\right]\dd t+G\dd B_t,\quad M_0=(M^{1}_0,0_{K}).
  \vspace{-1mm}
\end{equation}
\end{definition}
\citet{debortoli2024schrodingerbridgeflowunpaired} introduce a flow of path measures $(\Pm^s,\mathbb{P}^s)_{s\geq0}$ and show that, for a reference process driven by BM, a time discretization of this flow with step size $\alpha\in(0,1]$ yields a family of procedures called $\alpha$-IMF, all of which converge to the Schrödinger bridge. For a reference process driven by MA-fBm, we propose to define a flow of path measures $(\tilde{\Pm}^s,\hat{\Pm}^s)_{s\geq 0}$ recursively by
\begin{equation}
    \hat{\mathbb{P}}^0 = (\Pi_0 \otimes \Pi_1)\Sm_{|X_0,X_1},\quad \partial_s \hat{\mathbb{P}}^s=\mathrm{proj}_{\mathcal{R}_{a}(\Sm)}(\mathrm{proj}_{\mathcal{M}_a(\Sm)}(\hat{\mathbb{P}}^s)) - \hat{\mathbb{P}}^s,\quad \tilde{\mathbb{P}}^s = \mathrm{proj}_{\mathcal{M}_{a}(\Sm)}(\hat{\mathbb{P}}^s),
\end{equation}
Both procedures $\alpha$-IMF and IMF are based on the loss function \cite{shi2023diffusion,debortoli2024schrodingerbridgeflowunpaired}
\begin{scriptsize}
\begin{equation}
    \mathcal{L}(v,\Pm) = \int^t_0 \mathcal{L}_t(v_t,\Pm)\dd t = \int^1_0\int_{(\mathbb{R}^{d})^3}\left\Vert v^{\theta}_t(x_t)-\frac{x_1- x_t}{1-t}\right\Vert^2\dd\Pm(x_0,x_1)\dd \mathbb{Q}_{t|0,1}(x_t|x_0,x_1)\dd t.
\end{equation}
\end{scriptsize}
We propose to replace the above loss function with 
\begin{small}
\begin{equation}
    \mathcal{L}^{\mathrm{unpaired}}_{\mathrm{FDBM}}(\theta,\tilde{\mathbb{P}}) 
    =\int^1_0\int_{(\mathbb{R}^{d\cdot(K+1)})}\int_{(\mathbb{R}^{d})^2}\left\Vert \tilde{v}^{\theta}_t(\mu_{1|t}(z_t))-\frac{x_1- \mu_{1|t}(z)}{\sigma^{2}_{1|t}}\right\Vert^2\dd\tilde{\Pm}^x(x_0,x_1)\dd \Sm_{t|X_0,X_1}(z_t|x_0,x_1)\dd t.
\vspace{-1mm}
\end{equation}
\end{small}
to define $\alpha$-IMF with respect to a scaled MA-fBM reference process. 

\paragraph{Challenges \& Limitations} 
The dynamic Schrödinger bridge problem can be formulated with a scaled fBM as the reference process, since \citet{leonard2013survey} includes non-Markovian processes with continuous paths. To sample paths from the resulting solution, one must draw from a fractional Brownian bridge (fBB). \citet{janak2010fractional} constructs such a bridge by leveraging the fact that fBM is a Gaussian process and additionally derives an integral equation characterizing the fBB \cite[Theorem 5]{janak2010fractional}. However, the drift of the derived bridge involves an integral that is not available in closed form \cite[eq. (17)]{janak2010fractional}, necessitating an approximation of this drift term when sampling from an (approximate) solution to the dynamic Schrödinger bridge problem. Hence, we first approximate fBM using a Markovian approximation \cite{HARMS2019,daems2023variational} to enable simulation---up to discretization error---of the exact bridge, which corresponds to a partially pinned process. We leave the analysis of how well the solution to the thus-defined dynamic Schrödinger bridge problem approximates the solution of the corresponding problem with a scaled fBM as the reference process  for future work. We emphasize that in the unpaired training data setting, we only propose a method for using FDBM and do not prove convergence of the algorithm to the corresponding solution of the dynamic Schrödinger bridge problem. 
% While we show that this solution is a fixed point of Iterative Markov Fitting \cite{shi2023diffusion,peluchetti2023diffusionbridgemixturetransports} adjusted to our setting, we are unable to establish that it is the unique fixed point. 
To the best of our knowledge, the setting of \citet[Theorem 2.14]{leonard2014reciprocal} is not applicable here, as our pinned path measure refers to a partially pinned process, rather than a fully pinned process. As a result, proving the convergence of our method would require an adaptation of \citet[Theorem 2.14]{leonard2014reciprocal}, which is beyond the scope of this work. 
% A convergence analysis of FDBMs in the unpaired data setting is left for future research. 
Additionally we point out that we are only able to simulate the learned bridges forward in time, since the terminal distribution of the augmenting processes of the learned stochastic bridge depends on the initial data distribution, see \Cref{sec:loss_reg} for details.

\subsection{Sampling from partially pinned process}
\label{sec:sampling_partially}
In this section, we derive the marginal distribution of the partially pinned process for any 
$t\in(0,1)$, enabling simulation-free sampling. For $s<t<1$ we know that $(X_t,Y_t,X_1)|(Z_s=z)$ is Gaussian \cite{appliedSDEs} with
\begin{align}
    (X_t,Y^1_t,...,Y^{K}_t,X_1|Z_s=z)^{T} \sim \mathcal{N}\left(\begin{pmatrix}
        \eta_{t|s}(z) \\
        \mu_{1|s}(z)
    \end{pmatrix},
    \begin{pmatrix}
        \Sigma_{t|s} & \Sigma_{12}(t|s)\\
        \Sigma_{21}(t|s) & \sigma^{2}_{1|s}
    \end{pmatrix}\right),
\end{align}
with
\begin{equation}
    \eta_{t|s}(z) = (\eta^1_{t|s}(z),\eta^{2}_{t|s}(z),...,\eta^{K+1}_{t|s}(z))^{T}
\end{equation}
and
\begin{equation}
    \Sig_{12}(t|s) = (cov(X_t,X_1), cov(Y^{1}_1,X_1),...,cov(Y^{K}_{t},X_1))^{T} = \Sig^{T}_{21}(t|s).
\end{equation}
Hence, the process partially pinned at $(x_s,x_{1})$ follows the distribution
\begin{equation}
    Z_t|(X_s=x_s,X_1=x_1) \sim\mathcal{N}(\bar{\eta}_{t|x_s,x_1},\bar{\Sig}_{t|s,1}),
\end{equation}
with
\begin{align}
    \bar{\eta}_{t|x_0,x_s}(z) &= \eta_{t|s}(z) + \frac{1}{\sigma^{2}_{1|s}}\Sig_{12}(t|s)(x_1-\mu_{1|s}(z)) \\
     \label{eq:terminal_y}
     &\stackrel{s=0}{=} (x_0,0,...,0)^{T} + \frac{1}{\sigma^{2}_{1|0}}\Sig_{12}(t|s)(x_1-x_0)
    % \\
    % &\stackrel{s=0}{=}
    % \begin{cases}
    % (x_0,0,...,0)^{T} + \frac{1}{\sigma^{2}_{T|0}}\Sig_{12}(x_T-x_0 - \sum_k \omega_k y_0 \zeta_k(0, T)),\quad &\text{(Type \RomanNumeralCaps{1})}\\
    % (x_0,0,...,0)^{T} + \frac{1}{\sigma^{2}_{T|0}}\Sig_{12}(x_T-x_0),\quad &\text{(Type \RomanNumeralCaps{2})}
    % \end{cases}
\end{align}
and
\begin{equation}
    \bar{\Sig}_{t|s,1} = \Sig_{t|s} - \frac{1}{\sigma^{2}_{1|t}}\Sig_{12}(t|s)\Sig_{21}(t|s) = \Sig_{t|s} - \frac{1}{\sigma^{2}_{1|t}}\Sig_{12}(t|s)\Sig^{T}_{12}(t|s).
\end{equation}
We further calculate for a constant diffusion coefficient $g(t)\equiv g \in\mathbb{R}$ 
\begin{equation}
    \zeta_k(s,t) = \int_{s}^{t} -\gamma_k g(u) e^{-\gamma_k (u-s)} \dd u = -g \gamma_k \int^{t}_{s} e^{-\gamma_k (u-s)}du = g(e^{-\gamma_k(t-s)}-1)
\end{equation}
and
\begin{align}
    \mu_{1|t}(z) = x + \sum_k \omega_k y_k \zeta_k(t, 1)=x + g\sum^{K}_{k=1}\omega_k (e^{-\gamma_k(1-t)}-1)y_{k}.
    % \\
    % \mu^{k}_{1|t}(z)&= e^{-\gamma_k(1-s)}y_k.
\end{align}
Left to calculate are the entries of $\Sig_{t|s}$ and $\Sig_{12}(t|s)$. With $s<t\leq 1$ we calculate
\begin{align}
     \text{Cov}(X_t,X_1|Z_s=z) &= \sum_{i,j=1}^{K}\omega_i\omega_j\int_{s}^{t}
    \left( \zeta_i(u, t) + g \right)
    \left( \zeta_j(u, 1) + g \right)
    \dd u \\
    &= g^{2}\sum_{i,j=1}^{K}\omega_i\omega_j\int_{s}^{t}
    \left((e^{-\gamma_k(t-u)}-1) + 1 \right)
    \left((e^{-\gamma_k(1-u)}-1) + 1 \right)
    \dd u \\
    &= g^{2}\sum_{i,j=1}^{K}\omega_i\omega_j\int_s^{t}
    \left(e^{-\gamma_i(t-u)} \right)
    \left(e^{-\gamma_j(1-u)}\right)
    \dd u ,\\
% \end{align}
% and
% \begin{comment}
% \begin{align}
%     cov(Y_l(t),X(T)|\mvZ(s)=\z) &= \sum_{k=1}^{K}\omega_k\int^{t}_{s} e^{-\gamma_l(t-u)}(\zeta_k(u,T)+g(u))du \\
%     &= g\sum_{k=1}^{K}\omega_k\int^{t}_{s} e^{-\gamma_l(t-u)}(e^{-\gamma_k(T-u)}-1)du \\
%     &=g\sum_{k=1}^{K}\omega_k\frac{\left(\left(\left(e^{t\gamma_{k}} - e^{T\gamma_{k}}\right) \gamma_{l} - \gamma_{k} e^{T\gamma_{k}}\right) e^{t\gamma_{l}} + \left(\left(e^{T\gamma_{k}} - e^{s\gamma_{k}}\right) \gamma_{l} + \gamma_{k} e^{T\gamma_{k}}\right) e^{s\gamma_{l}}\right) e^{-t\gamma_{l} - T\gamma_{k}}}{\gamma_{l} \left(\gamma_{l} + \gamma_{k}\right)}
% \end{align}
% \end{comment}
%begin{align}
    \text{Cov}(Y^{i}_t,Y^{j}_1|Z_s=z) &= \int^{t}_{s}e^{-\gamma_i(t-u)}e^{-\gamma_j(1-u)}du
   =\frac{e^{-\gamma_{j} - t\gamma_{i}} \left(e^{t\gamma_{j} + t\gamma_{i}} - e^{s\gamma_{j} + s\gamma_{i}}\right)}{\gamma_{j} + \gamma_{i}},
\end{align}
and for $s=0$
\begin{align}
     \text{Cov}(Y^l_t,X_1) &= \sum_{k=1}^{K}\omega_k\int^{t}_{0} e^{-\gamma_l(t-u)}(\zeta_k(u,1)+g(u))du \\
    &= g\sum_{k=1}^{K}\omega_k\int^{t}_{0} e^{-\gamma_l(t-u)}e^{-\gamma_k(1-u)}du \\
    &=g\sum_{k=1}^{K}\omega_k\frac{\left(\mathrm{e}^{t \left({\gamma}_{l} + {\gamma}_{k}\right)} - 1\right) \mathrm{e}^{-t{\gamma}_{l} - {\gamma}_{k}}}{{\gamma}_{l} + {\gamma}_{k}}.
\end{align}

\subsection{Loss regularization via the reverse pinned process}
\label{sec:loss_reg}
In the derivation of the previous section (\Cref{sec:sampling_partially}), we see from \cref{eq:terminal_y} that the terminal values of the noise process $Y$ directly depend on $x_0$, i.e., on information from the initial distribution $\Pi_0$. Hence, initializing the time reversal of the partially pinned process is only feasible when the desired endpoint is already known, which makes simulating the time reversal of FDBM impractical in general. However, we derive below the time reversal of the partially pinned process, which allows us to use the drift of the reversed process to regularize the loss function during training in the unpaired setting, where we condition on both an initial and a terminal state. Whenever $X=(X(t))_{t\in[0,1]}$ is a stochastic process and $g$ is a function on $[0,1]$, we write $\bar{X}(t)=X(1-t)$ for the reverse-time model and $\bar{g}(t)=g(1-t)$ for the reverse-time function. In \citet{debortoli2024schrodingerbridgeflowunpaired} the reverse pinned process connecting $x_1$ and $x_0$ is again a Brownian bridge. For our reference process, the reverse model of the partially pinned process follows \cite{anderson1982}
\begin{scriptsize}
\begin{align}
        \dd \bar{Z}_{|x_0,x_1}(t) &= \left[F \bar{Z}_{|x_0,x_1}(t)  + G G^{T} u^{\rightarrow}(1-t, \bar{Z}_{|x_0,x_1}(t))-G G^{T} \nabla_{z}\log \bar{p}_t(\bar{Z}_{|x_0,x_1}(t)|x_0,x_1)\right]\dd t + G\dd \bar{B}(t)\\
        &= \left\{F \bar{Z}_{|x_0,x_1}(t)  + G G^{T} \left[u^{\rightarrow}(1-t, \bar{Z}_{|x_0,x_1}(t))- \nabla_{z}\log \bar{p}_t(\bar{Z}_{|x_0,x_1}(t)|x_0,x_1)\right]\right\}\dd t + G\dd \bar{B}(t) \\
        &= \left\{F \bar{Z}_{|x_0,x_1}(t) + G G^{T} \left[\nabla_z\log q_{1|t}(x_1|\bar{Z}_{|x_0,x_1}(t))- \nabla_{z}\log \bar{p}_t(\bar{Z}_{|x_0,x_1}(t)|x_0,x_1)\right]\right\}\dd t + G\dd \bar{B}(t)
\end{align}
\end{scriptsize}where $q_{1|t}(\cdot|z):=\Sm^1_{1|t}(\cdot|z)$ is the density of $X_1|(Z_t=z)$, $p_t:=\Sm_t$ is the marginal density of the augmented reference process $Z$, $p_t(\cdot|x_0,x_1):=\Sm_{t|X_0,X_1}(\cdot|x_0,x_1)$ is the marginal density of the partially pinned process defined in \cref{eq:dynamics-pinned-process} and $u^{\rightarrow}:=u$ according to \cref{eq:def_u}. We find with Bayes' theorem
\begin{equation}
    p_t(z|x_0,x_1) = \frac{\rho_t(z,x_0,x_1)}{\pi_{0,1}(x_0,x_1)},
\end{equation}
where $\pi_{0,1}$ is the joint density associated to $\Pi_{0,1}$ and $\rho_t$ is the joint density of $(Z_t, X_0,X_1)$. Since  $Z$ is Markov with $Z_0 = (X_0,0_{dK})$, we have
\begin{equation}
    q_{1|t,x_0}(x_1|z,x_0) =  q_{1|t,s}(x_1|z_t,z_0) = q_{1|t}(x_1|z_t)
\end{equation}
and
\begin{align}
    q_{1|t}(x_1|z_t) 
        & =  q_{1|t}(x_1|z_t,z_0) 
        % & = q_{1|t,x_0}(x_1|z_t,x_0) \\
        = \frac{\rho_t(z_t,x_0,x_1)}{p(z_t,x_0)} 
        = \frac{\rho_t(z_t,x_0,x_1)}{p_t(z_t|x_0)\pi_0(x_0)},
\end{align}
where $\pi_0$ corresponds to $\Pi_0$. Hence, by the above equations
\begin{align}
    \log q_{1|t}(x_1|z_t) &- \log p_t(z_t|x_0,x_1) \\
    &= \log\left(\frac{\rho_t(z_t,x_0,x_1)}{p_t(z_t|x_0)\pi_0(x_0)}\cdot \frac{\lambda(x_0,x_1)}{\rho_t(z_t,x_0,x_1)} \right) \\
    &= \log\left(\frac{\lambda(x_0,x_1)}{p_t(z_t|x_0)\pi_0(x_0)} \right) \\
    &=\log \lambda(x_0,x_1) - \log p_t(z_t|x_0)-\log \pi_0(x_0) 
\end{align}
and we find for the gradient
\begin{align}
    &\nabla_z\left[\log q_{1|t}(x_1|z_t) - \log p_t(z_t|x_0,x_1)\right] = -\nabla_z \log p_t(z_t|x_0),
\end{align}
such that
\begin{align}
        \dd \bar{Z}_{|x_0,x_1}(t)
        &= \left[F \bar{Z}_{|x_0,x_1}(t)  - G G^{T}\nabla_z \log \bar{p}_t(z_{1-t}|x_0)\right]\dd t + G\dd \bar{B}(t).
\end{align}
Hence, the reverse dynamics of the partially pinned process coincide with the reverse dynamics of the reference process conditioned on $x_0$. 
In addition, we have
\begin{align}
\log p_t(z_t|x_0) &=\nabla_y [\log p^{x}_t(x_t|y^1_t,...,y^k_t,x_0) +\log p^{y}_t(y^1_t,...,y^k_t|x_0)] \\
&=\nabla_y [\log p^{x}_t(x_t|y^1_t,...,y^k_t,x_0) +\log p^{y}_t(y^1_t,...,y^k_t)], 
\end{align}
where we use the independence of $(Y^{1},...,Y^{K}_t)$ and $X_0$. To calculate further, we note that $X_t|(Y^{1}_t=y^1,...,Y^{K}_t=y^K,X_0=x_0) \sim\mathcal{N}(\mu_{t}(y,x_0),\sigma^{2}_{t|\y})$ is normal distributed with
\begin{align}
\label{eq:mus}
    \mu_{t}(y,x_0) &= \mathbb{E}\left[X_t|(Y^{1}_t=y^1,...,Y^{K}_t=y^K,X_0=x_0)\right] \\&\stackrel{(\ref{eq:mafbm_explicit})}{=} x_0 +  \sum^{K}_{k=1}\omega_k \zeta_k(0,t) y_k(t) \\
    &=\mu_{t|0}(z) -x + x_0, 
\end{align}
where $z=(x,y_1,...,y_K)$ and 
\begin{align}
    \sigma^{2}_{t|\y} &= \mathbb{V}\left[X_t|(Y^{1}_t=y^1,...,Y^{K}_t=y^K,X_0=x_0)\right] \\
    &= \sqrt{\epsilon}\sum^{K}_{i,j=1}\frac{\omega_i\omega_j}{\gamma_i+\gamma_j}(1-e^{-t(\gamma_i+\gamma_j)}) \\
    &\stackrel{(\ref{eq:sigma_1|t})}{=}\sigma^{2}_{t|0}.
\end{align}
Therefore
\begin{align}
    &\partial_x \log p_t(z_t|x_0) \\
    &= \partial_x [\log p^{x}_t(x_t|y^1_t,...,y^k_t,x_0) +\log p^{y}_t(y^1_t,...,y^k_t)]
    \\
    &=\partial_x \log p_t(x_t|y^1_t,...,y^k_t,x_0)\\
    &=-\frac{x_t-[x_0 +  \sum^{K}_{k=1}\omega_k \zeta_k(0,t) y^k_t ]}{\sigma^{2}_{t|0}} \\
    &=\frac{x_0 -x_t+ \sum^{K}_{k=1}\omega_k \zeta_k(0,t) y_k(t)}{\sigma^{2}_{t|0}} 
\end{align}
and 
\begin{align}
    &\nabla_y \log p_t(z_t|x_0) = \nabla_y [\log p^{x}_t(x_t|y^1_t,...,y^K_t,x_0) +\log p^{y}_t(y^1_t,...,y^K_t)] \\
    &= -\frac{x_t-\mu_{t|(\y,x_0)}}{\sigma^{2}_{t|0}}[-\nabla_y \mu_{t|(\y,x_0)}] + \nabla_y \log p^{y}_t(y^1_t,...,y^K_t) \\
    &= [\omega_1\zeta_1(0,t),...,\omega_K\zeta_K(0,t)]^{T}\frac{x_t-\mu_{t|(\y,x_0)}}{\sigma^{2}_{t|0}} -\begin{pmatrix}
        0 \\
         \Lambda^{-1}_t\begin{pmatrix}
        y^1_t\\
        \vdots\\
        y^K_t\\
    \end{pmatrix}
    \end{pmatrix}, \\
\end{align}
such that, in total
\begin{small}
\begin{align}
        \dd \bar{Z}_{|x_0,x_1}(t) = \left\{F \bar{Z}_{|x_0,x_1}(t)  - G G^{T} u^{\leftarrow}(1-t,\bar{Z}_{|x_0,x_1}(t))\right\}\dd t + G\dd \bar{B}(t),
\end{align}
with
\begin{align}
   u^{\leftarrow}(t, z) 
   &= \left[1,-\omega_1\zeta_1(0,t),...,-\omega_K\zeta_K(0,t)\right]^{T}\frac{x_0-x+\sum^{K}_{k=1}\omega_k \zeta_k(0,t)y_k}{\sigma^{2}_{t|0}} - \begin{pmatrix}
        0 \\
         \Lambda^{-1}(t)\begin{pmatrix}
        y^1_t\\
        \vdots\\
        y^K_t\\
    \end{pmatrix}
    \end{pmatrix}.
   % \\
   %  +&\Lambda^{-1}\begin{pmatrix}
   %       cov(Y_1(1),X(1)) \\
   %       \vdots \\
   %       cov(Y_K(1),X(1))
   %   \end{pmatrix}\frac{x_0}{\sigma^2_{1|0}}\\
   %   s_{\theta}()&\approx \frac{x_0-x+\sum^{K}_{k=1}\omega_k \zeta_k(0,t)y_k}{\sigma^{2}_{t|0}}\\
   %   &s_{\theta}()+ \frac{x - \sum^{K}_{k=1}\omega_k \zeta_k(0,t)y_k}{\sigma^2_{t|0}} \approx \frac{x_0}{\sigma^2_{t|0}} \\
   %   &\frac{s_{\theta}()\sigma^2_{t|0}+ x - \sum^{K}_{k=1}\omega_k \zeta_k(0,t)y_k}{\sigma^2_{1|0}} \approx \frac{x_0}{\sigma^2_{1|0}} \\
\end{align}
\end{small}
We use the above calculations to derive a backward loss. Let $Z_{|0,1}(t)\sim \mathcal{N}(\bar{\Sigma}_{t},\bar{\mu}_t)$
with
\begin{align}
    \bar{\mu}_t = 
    (x_0,0,...,0)^{T} + \frac{1}{\sigma^{2}_{1|0}}\Sig_{12}(t|0)(x_1-x_0)
\end{align}
and
\begin{equation}
    \bar{\Sig}_t = \Sig_{t|0} - \frac{1}{\sigma^{2}_{T|t}}\Sig_{12}(t|0)\Sig^{T}_{12}(t|0)
\end{equation}
according to the derivations in  \Cref{sec:sampling_partially}. Since by the calculations of this section
\begin{equation}
    \nabla_z\log q_{1|t}(x_1|z) - \nabla_{z}\log \bar{p}_t(z|x_0,x_1) - u^{\leftarrow}(t, z) = 0_{d(K+1)},
\end{equation}
we aim to enforce 
\begin{scriptsize}
\begin{equation}
     0_{d(K+1)} = \left[1,\omega_1\zeta_1(t,1),...,\omega_K\zeta_K(t,1)\right]^{T}v^{\theta}_t(\mu_{1|t}(z)) - \nabla_{z}\log p_t(Z_{|0,1}(t)|x_0,x_1) - u^{\leftarrow}(t, z)
\end{equation}
\end{scriptsize}for the neural network $v^{\theta}_t$ learned to approximation the forward dynamics transforming $\pi_0$ to $\pi_1$. Moreover, since 
\begin{equation}
    \nabla_{z}\log p_t(Z_{|0,1}(t)|x_0,x_1) = -\bar{\Sigma}_{t}^{-1}(Z_{|0,1}(t)-\bar{\mu}_{t}),
\end{equation}
we aim for 
\begin{small}
\begin{equation}
    \bm{0}_{K+1}  \stackrel{d}{=} \bar{\Sigma}_{t}\left\{\left[1,\omega_1\zeta_1(t,1),...,\omega_K\zeta_K(t,1)\right]^{T}v^{\rightarrow}_t(\mu_{1|t}(Z_{|0,1}(t))) -u^{\leftarrow}(t, Z_{|0,1}(t))\right\}- (Z_{|0,1}(t)-\bar{\mu}_{t}) 
\end{equation}
\end{small}
and define 
\begin{tiny}
\begin{align}
\mathcal{L}^{\leftarrow}_t(\theta,z_t,x_0,x_1)&:=\left\Vert \bar{\Sigma}_{t}\left\{\left[1,\omega_1\zeta_1(t,1),...,\omega_K\zeta_K(t,1)\right]^{T}v^{\theta}_t(\mu_{1|t}(Z_{|0,1}(t))) -u^{\leftarrow}(t, Z_{|0,1}(t))\right\}- (Z_{|0,1}(t)-\bar{\mu}_{t}) \right\Vert^2,
\end{align}
\end{tiny}
with
\begin{scriptsize}
\begin{align}
u^{\leftarrow}(t, z) &= \left\{\left[1,-\omega_1\zeta_1(0,t), \ldots ,-\omega_K\zeta_K(0,t)\right]^{T}\frac{x_0-x+\sum^{K}_{k=1}\omega_k \zeta_k(0,t)y_k}{\sigma^{2}_{t|0}} - \begin{pmatrix}
    0 \\
     \Lambda^{-1}(t)\begin{pmatrix}
    y_1\\
    \vdots\\
    y_K\\
\end{pmatrix}
\end{pmatrix}\right\}
\end{align}
\end{scriptsize}
to minimize for some $\lambda\in[0,1]$
\begin{scriptsize}
\begin{align}
    &\mathcal{L}_t(\theta,v^{\theta},\tilde{\mathbb{P}}) \\
    &= \int_{\mathbb{R}^{d(K+1)}}\int_{(\mathbb{R}^{d})^{2}}\left\Vert v^{\rightarrow}_t(\mu_{1|t}(z_t))-\frac{x_1- x_t - \sum_k \omega_k \zeta_k(t, 1) y_k(t)}{\sigma^{2}_{1|t}}\right\Vert^{2}
    \dd\tilde{\mathbb{P}}_{X_0,X_1}(x_0,x_1)
    \dd \Sm_{t|X_0,X_1}(z_t|x_0,x_1) \notag\\
    % &+\int_{\mathbb{R}^{d(K+1)}}\int_{(\mathbb{R}^{d})^{2}}\left\Vert 
    % % (prbly typo): we're still considering the reverse right?
    % %v^{\rightarrow}_{1-t}(\mu_{1|1-t}(z_t))
    % v^{\leftarrow}_{1-t}(\mu_{t|0}(z_t))
    % -\frac{x_0-x_t + \sum_k \omega_k \zeta_k(0, t) y_k(t)}{\sigma^{2}_{t|0}}\right\Vert^{2} 
    % %\dd\mathbb{P}^{\rightarrow}_{0,1}(x_0,x_1)
    % \dd\tilde{\mathbb{P}}^{x,\rightarrow}_{0,1}(x_0,x_1)
    % \dd \Sm_{t|X_0,X_1}(z_t|x_0,x_1).
    &+\lambda\int_{\mathbb{R}^{d(K+1)}}\int_{(\mathbb{R}^{d})^{2}}\mathcal{L}^{\leftarrow}_t(\theta,z_t,x_0,x_1)\dd\tilde{\mathbb{P}}_{X_0,X_1}(x_0,x_1)
    \dd \Sm_{t|X_0,X_1}(z_t|x_0,x_1),
\end{align} 
\end{scriptsize}
incorporating, for $\lambda > 0$, the drift of the time-reversal of the partially pinned process.

\addtocontents{toc}{\protect\setcounter{tocdepth}{1}}

\clearpage \newpage
\section{Broader impact} 
Fractional Diffusion Bridge Models (FDBM) introduce non-Markovian stochastic dynamics into generative modeling, enabling the learning of long-range dependencies and memory effects observed in real systems. This is largely a theoretical contribution. This framework can benefit scientific domains where temporal correlations are fundamental, including molecular design, protein dynamics, materials discovery, and biological simulation, by improving the physical fidelity of generative models.

By bridging stochastic physics and machine learning, FDBM contributes to more interpretable and physically grounded generative tools, potentially reducing experimental costs and accelerating discovery. Nevertheless, as with any generative model, misuse for fabricating deceptive or unsafe data is possible. To mitigate this, our open-source release emphasizes research and educational use with clear documentation.

\section{Related work} \label{adx:extended-related-work}
\paragraph{Diffusion based generative modeling} 
Diffusion models \citep{Dickstein,ho2020denoising} have achieved remarkable success in generative modeling, setting state-of-the-art performance across image~\citep{song2019generative,rombach2022high} and molecule generation~\citep{huang2025hog,hoogeboom2022equivariant}. They have had a major impact across a broad range of domains, including materials and drug discovery~\citep{manica2023accelerating,corso2023diffdock}, realistic audio synthesis~\citep{copet2023simple,kreuk2023audiogen}, 3D object and texture generation~\citep{zeng2022lion,leng2024hypersdfusion,foti2024uv}, medical imaging~\citep{kazerouni2023diffusion,aversa2023diffinfinite}, aerospace design~\citep{espinosa2023generate}, and DNA sequence modeling~\citep{avdeyev2023dirichlet,wang2024finetuning}.
Building on the seminal contribution of \citet{song2021scorebased}, who introduced a continuous-time framework for score-based diffusion models via stochastic processes with an exact reverse-time model, a large body of subsequent work has expanded this perspective by analyzing its properties~\citep{karras2022elucidating,chen2023easysampling,singhal2023where2diffuse} and generalizing it to subspaces~\citep{Jing_2022subspacediff}, Riemannian manifolds~\citep{bortoli2022riemannian,huang2022riemannian}, alternative stochastic dynamics such as non-linear drifts~\citep{kim2022maximum}, general corruptions~\citep{daras2023soft} and reflecting processes~\citep{lou2023reflected,fishman2023diffusion}, as well as by learning the drift of the forward process~\citep{bartosh2024neural}.
A unifying perspective on diffusion and diffusion bridge models has been proposed through mixtures of diffusion bridges~\citep{peluchetti2023nondenoisingforwardtimediffusions}, optimal control~\citep{berner2024an}, and the generalized Schrödinger bridge problem~\citep{richter2024improved,blessing2025underdamped}, with applications to sampling from unnormalized densities.
Recent methods incorporated non-Gaussian priors, as well as non-Gaussian conditioning into diffusion modeling and considered the \emph{boundary value problem} through diffusion bridges~\cite{wu2022diffusion,li2023bbdm,kimunpaired}. In line with our research, non-standard noise sources for continuous-time diffusion models have been explored, including heavy-tailed Lévy processes~\citep{yoon2023scorebased,paquet2024annealed}, and non-Markovian fractional Brownian motion~\citep{tong2022,nobis2024generative,liang2025protgfdm}.

\paragraph{Fractional Brownian motion in machine learning}
Memory-aware fractional Brownian motion has been employed in machine learning for generative modeling~\citep{tong2022,hayashi2022fractional,nobis2024generative,liang2025protgfdm}, variational inference~\citep{daems2023variational}, and stochastic optimal control~\citep{daems2025efficient}.
Our work builds directly on the Markovian approximation of fractional Brownian motion (MA-fBM) introduced by \citet{HARMS2019} and further refined through the derivation of optimal approximation coefficients by \citet{daems2023variational}.
\citet{daems2023variational} demonstrate how variational inference can be performed for SDEs driven by MA-fBM, a framework later enhanced by \citet{daems2025efficient} using techniques from stochastic optimal control.
\citet{nobis2024generative} introduced a continuous-time score-based diffusion model driven by MA-fBM, which \citet{liang2025protgfdm} extended to protein generation.

\paragraph{The Schrödinger bridge problem} The Schrödinger bridge problem \cite{schrodinger1931umkehrung,schrodinger1932theorie,foellmer1988random,leonard2013survey} is a stochastic optimal control formulation that serves as an entropy-regularized generalization of the optimal transport problem on path spaces. It offers a principled alternative to Diffusion Models\citep{Dickstein,ho2020denoising, song2021scorebased} and Flow Matching approaches \citep{lipman2023flow, liu2023learning}, by directly interpolating between marginal distributions via maximum entropy dynamics \cite{vargas2021solving,bortoli2021diffusion}. While the algorithm proposed by \citet{bortoli2021diffusion} was based on Iterative Proportional Fitting (IPF) \citep{fortet1940resolution, kullback1968probability, ruschendorf1993note, liu2022deep, neklyudov2023action}, \citet{shi2023diffusion} and \citet{peluchetti2023diffusion} concurrently introduced Iterative Markovian Fitting (IMF) for Brownian-driven diffusion processes, which directly learns the time-dependent drift of a stochastic process solving an SDE. Specifically, \citet{shi2023diffusion} considered a scalar, positive diffusion function, whereas \citet{peluchetti2023diffusion} formulated their approach for matrix-valued diffusion functions that may depend on the state of the process. For unpaired data translation, we build upon the framework of \citet{debortoli2024schrodingerbridgeflowunpaired}, where IMF is extended to $\alpha$-IMF, an online variant of IMF, summarized in detail in \Cref{adx:SBFlow}. See also \citet{peyre2019cot} for a comprehensive overview of optimal transport methods. 

\paragraph{Stochastic bridges for paired data translation}
Recent studies have extended stochastic bridges to the paired data settings. \citet{liu2023learning} proposed a structured diffusion framework for constrained domains, alongside a task-specific training loss. \citet{liu2023i2sb} propose a generative bridge model for image-to-image translation and \citet{somnath2023aligned} introduced aligned diffusion bridges that interpolate between matched samples and evaluated the method on toy datasets, cell differentiation, and predicting conformational changes in proteins.
\citet{debortoli2023augmentedbridgematching} identified limitations in preserving the coupling of the training data in the approach of \citet{somnath2023aligned} and \citet{liu2023i2sb}, which they resolved by augmenting the drift of the learned process with the starting value. Our framework
FDBM in the paired setting is built upon the repository provided by \citet{somnath2023aligned}\footnote{\url{https://github.com/vsomnath/aligned_diffusion_bridges}}, including the training setup, model architectures, data visualization, and all used datasets. Conceptually, we adopt the viewpoint of \citet{debortoli2023augmentedbridgematching}, providing the initial value to the neural network, approximating the drift, at all points in time. 

\section{The Schrödinger bridge problem for unpaired data translation} \label{adx:SBFlow} In this section, we summarize the Schrödinger Bridge Flow (SBFlow) introduced by \citet{debortoli2024schrodingerbridgeflowunpaired}, which our FDBM builds upon for unpaired data translation. Adopting the perspective of Entropic Optimal Transport (EOT) and assuming unpaired data samples from the distributions $\Pi_0$  and $\Pi_1$ on $\mathbb{R}^d$, \citet{bortoli2021diffusion} seek to find the coupling distribution
\begin{equation}
    \Pi^{\star} = \arg\min_{\Pi \in \mathcal{P}(\mathbb{R}^d \times \mathbb{R}^d)}
        \left\lbrace 
            \int_{\mathbb{R}^d \times \mathbb{R}^d} \frac{1}{2} ||x_0-x_1||^2 \dd \Pi(x_0, x_1) - \varepsilon \mathcal{H}(\Pi)
            %\,\,;\,\, \Pi_0=\pi_0, \Pi_1=\pi_0
        \right\rbrace,
        \label{eq:adx-staticSB}
\end{equation}
where the differential entropy $\mathcal{H}(\Pi)$ can be controlled by a regularization parameter $\varepsilon>0$, and $\mathcal{P}(\mathbb{R}^d \times \mathbb{R}^d)$ is the set of coupling probability measures on $\mathbb{R}^{d}\times\mathbb{R}^{d}$. Adopting EOT rather than optimal transport (OT)---restored when $\varepsilon=0$---allows a degree of regularizing stochasticity when solving for a transport map. The formulation of EOT in \cref{eq:adx-staticSB} can be understood as a \textit{static} version of the dynamic formulation of the Schrödinger bridge problem described \cref{eq:dyn_sbp}. We refer the reader to \citet{leonard2013survey} for a detailed discussion of the relation between the static and dynamic Schrödinger bridge Problem
\begin{equation}
    \Tm^{SB} = \arg\min_{\mathbb{T} \in \mathcal{P}(\mathcal{C}^d)}
        \left\lbrace 
            \kl(\mathbb{T}| \Qm)
            \,\,;\,\, \mathbb{T}_0 = \Pi_0, \mathbb{T}_1 = \Pi_1
        \right\rbrace,
        \label{eq:adx-dynamiceSB}
\end{equation}
where we now seek a path measure $\Pm^{SB}$ with marginal distributions $\Pi_0$ and $\Pi_1$.The reference path measure $\Qm$ in \citet{debortoli2024schrodingerbridgeflowunpaired} is associated with a scaled Brownian motion $\sqrt{\varepsilon}B_t$ with $\varepsilon > 0$. Remarkably, under some assumptions, \cref{eq:adx-staticSB} and \cref{eq:adx-dynamiceSB} share the same unique solution \cite{leonard2013survey} for the coupling distribution in the sense that $\Tm^{SB} = \Pi^\star_{0,1}$.

The difficulty of solving \cref{eq:adx-dynamiceSB} stems from the need to optimize over the infinite-dimensional space of path measures. Traditional approaches like Iterative Proportional Fitting (IPF)~\citep{neklyudov2023action,liu2022deep,fortet1940resolution,kullback1968probability,ruschendorf1993note} become computationally costly in high dimensions as they require simulating complex conditioned processes. The Iterative Markov Fitting (IMF), concurrently introduced by \citet{shi2023diffusion,peluchetti2023diffusion}, bypasses this bottleneck by operating directly on learning the time-evolving drift of a stochastic process solving an SDE. It operates by iteratively alternating between fitting a forward-time process and a backward-time process. \citet{debortoli2024schrodingerbridgeflowunpaired} introduced an online version of IMF called $\alpha$-IMF that is described in the following. 

$\alpha$-IMF, much like IMF, builds on reciprocal projections and Markovian projections \cite{peluchetti2023diffusion,shi2023diffusion,leonard2014reciprocal}. These projections accomplish two key objectives. Projections to the reciprocal class ensure matching terminal distributions $\Pi_0$, $\Pi_1$, while Markovian projections ensure that the drift of the learned process depend only in expectation on $X_1$ and that the learned process satisfies an SDE. A path measure $\Pm$ is in the reciprocal class of some other path measure $\Qm$ if
\begin{equation}
    \Pm =\int_{\mathbb{R}^d \times \mathbb{R}^d} \Qm(\cdot | x_0, x_1) \dd \Pm_{0,1}(x_0, x_1) =:  \Pm_{0,1} \Qm_{|0,1}.
    \label{eq:ad-reciporcalProj}
\end{equation}
Now, when we assume that $\Qm$ is induced by the scaled Brownian Motion $(\sqrt{\varepsilon}B_t)_{t\in[0,1]}$, then following \citet[Definition 2.2]{debortoli2024schrodingerbridgeflowunpaired} and \citet[Definition 1]{shi2023diffusion} the Markovian projection of the path measure $\Pi$ %$\Pm$ %$\Pi$ 
is the Markovian path measure $\mathbb{M}$ associated with $X^\prime$ solving
\begin{equation}
    \dd X^\prime_t = v_t(X^\prime_t)\dd t + (\sqrt{\varepsilon} \dd B_t),\quad X^\prime_0 = X_0
    \label{eq:adx-makovianProj}
\end{equation}
and the intractable drift function
\begin{equation}
    v_t(x_t) = \frac{\mathbb{E}_{\Pi_{1|t}}[X_1 | X_t=x_t] -x_t}{1-t}
    \label{eq:adx-driftNN}
\end{equation}
being learned by a neural network. In the following we will refer to the projection for the reciprocal class (see \cref{eq:ad-reciporcalProj}) of $\Qm$ as $\mathrm{proj}_{\Qm}(\cdot)$ and to the Markovian projection associated to the SDE in \cref{eq:adx-makovianProj} as $\mathrm{proj}_{\mathbb{M}}(\cdot)$.
\citet{debortoli2024schrodingerbridgeflowunpaired} consider IMF from the perspective of a \textit{flow of path measures} $(\Pm^s, \hat{\Pm}^s)_{s\geq0}$, describing Markovian and reciprocal class states respectively
% alpha IMF
\begin{align}
    \hat{\Pm}^0         & = (\Pi_0 \otimes \Pi_1)\Qm_{|0,1}, \label{eq:adx-reciprocal0}\\
    \Pm^s               & = \mathrm{proj}_{\mathbb{M}} (\hat{\Pm}^s),\\
    \partial\hat{\Pm}^s & = \mathrm{proj}_{\Qm}(\mathrm{proj}_{\mathbb{M}}(\hat{\Pm}^s)) - \hat{\Pm}^s,  \label{eq:adx-pathflow-grad}
\end{align}
where the only fixed point w.r.t. the vector field of the \textit{flow of path measures} in \eqref{eq:adx-pathflow-grad} is the Schrödinger bridge.
Finaly \citet{debortoli2024schrodingerbridgeflowunpaired} propose a novel discretization approach  
\begin{equation}
    \hat{\Pm}^{s+1} = (1-\alpha) \hat{\Pm}^{s} + \alpha \, \mathrm{proj}_{\Qm}(\Pm^s), \label{eq:adx-alphaIMF}
\end{equation}
which converges to the Schrödinger bridge \cite[Theorem 3.1]{debortoli2024schrodingerbridgeflowunpaired} and recovers IMF for $\alpha=1$.

To counteract error accumulation issues, a bidirectional online procedure can be implemented to achieve $\alpha$-IMF. This involves concurrently training two models or a single direction-conditioned model: one approximating the forward drift for the $\Pi_0 \rightarrow \Pi_1$ process, and another approximating the backward drift for the $\Pi_1 \rightarrow \Pi_0$ process. \citet{debortoli2024schrodingerbridgeflowunpaired} first pretrain a bridge matching model $v^{\theta}$ for both directions following DSBM \cite{shi2023diffusion} w.r.t. \cref{eq:adx-reciprocal0}, where samples are drawn from $\Pi_0 \otimes \Pi_1$, such that $v^{\theta}_t(x) \approx (\mathbb{E}_{\hat{\Pm}^0_{1|t}}[X_1 | X_t=x] - x) / (1-t)$. Furthermore, they  propose a bidirectional loss formulation of the online procedure of $\alpha$-DSBM, where samples are drawn from the opposing directional processes
\begin{align}
    \mathcal{L}_t(v^{\rightarrow}_t, v^{\leftarrow}_t, \Pm^{\rightarrow}, \Pm^{\leftarrow}) 
        & = \int_{(\mathbb{R}^d)^3} \left|\left| v^{\rightarrow}_t(x_t) - \frac{x_1-x_t}{1-t} \right|\right|^2 
                \dd\Pm_{0,1}^{\leftarrow}(x_0,x_1)\dd\Qm_{t|0,1}(x_t |x_0,x_1) \notag\\
        & \phantom{=\ } +
            \int_{(\mathbb{R}^d)^3} \left|\left| v^{\leftarrow}_{1-t}(x_t) - \frac{x_1-x_t}{t} \right|\right|^2 
                \dd\Pm_{0,1}^{\rightarrow}(x_0,x_1)\dd\Qm_{t|0,1}(x_t |x_0,x_1)
\end{align}
with associated forward and backward SDEs following the Markovian projection, as described by \cref{eq:adx-makovianProj}, in respective directions.

\section{Implementation details for paired data translation} 
\label{app:implementation_paired}

In the followign we will provide implementation details for all experiments with paired data translations.
We emphasize here again that the implementation of FDBM in the paired setting is built upon the repository provided by \citet{somnath2023aligned}\footnote{\url{https://github.com/vsomnath/aligned_diffusion_bridges}}, including the training setup, model architectures, data visualization, and all used datasets.

\subsection{Network architectures}
\paragraph{Toy experiments and cell differentiation} For SBALIGN, we use two multilayer perceptrons (MLPs) to approximate the drift $b^{\theta}$ and Doob’s $h$-score $m^{\phi}$. For ABM and FDBM, we use only the MLP employed in SBALIGN to approximate the drift $b^{\theta}$, but the initial state $x_0$ is additionally provided to the network by concatenating it with the input, following \citet{debortoli2023augmentedbridgematching}. This setup is used in the experiments shown in \Cref{fig:toy_coupling,fig:aligned_toy_marginals}, on the \textit{Moons} and \textit{T-shape} datasets, as well as in the cell differentiation task, with the respective number of parameters reported in \Cref{tab:params}.

\paragraph{Conformational changes in proteins}
We use the GNN architecture from \citet{somnath2023aligned}. However, following \citet{debortoli2023augmentedbridgematching}, the initial state $x_0$ is additionally provided to the network by concatenating it with the input. See \Cref{tab:params} for a comparison of the number of parameters.

\subsection{Training \& Sampling}
\paragraph{Toy experiment} We follow precisely the training of \citet{somnath2023aligned}. For sampling, we use $100$ steps of the Euler--Maruyama method and generate a single trajectory for each test starting point. This procedure is used both for calculating the WSD and for the visualization in \Cref{fig:aligned_toy_qualitative_new}, whereas \citet[Figure 2]{somnath2023aligned} report trajectories averaged over multiple trials.

\paragraph{Cell differentiation} We follow precisely the training and sampling setup of \citet{somnath2023aligned}.

\paragraph{Conformational changes in proteins} 
The results reported in \Cref{tab:d3pm-full} were obtained by averaging over $5$ training trials, each run for $300$ epochs, and performing one sampling trial per trained model, generating a single path over $100$ time steps. The remaining training set-up closely follows \citet{somnath2023aligned}. We use the AdamW~\cite{kingma2014adam} optimizer with an initial learning rate of $0.001$ and a training batch size of $2$. During validation, inference is performed using the exponential moving average of the model parameters, which is updated at every optimization step with a decay rate of $0.9$. After each epoch, we simulate trajectories on the validation set and compute the mean RMSD. The model achieving the lowest mean RMSD on the validation set is selected for final evaluation on the test set. We observe that the best model was saved for ABM and FDBM towards the end of training, indicating that a longer training could further improve the overall results. 

\subsection{Compute}
The toy experiments were run locally on a CPU and completed within minutes. Each trial of $300$ training epochs for the protein conformational change task was completed within $24$ hours on a single NVIDIA A100 GPU (40 GB VRAM).

\subsection{Datasets}
\paragraph{Toy datasets}
The \textit{Moons} dataset is obtained by generating two moons to produce samples from $\Pi_0$ and then rotating them clockwise $90$ degrees around the center to produce samples from $\Pi_1$.
The \textit{T-Shape} dataset is produced by a bi-modal distribution, where $\Pi_0$ is supported on two of the four extremes of an imaginary T-shaped area. The target distribution $\Pi_01$ is created by shifting $\Pi_0$ to the opposite side. The rotations and shifts imply paired data, since there is a one-to-one correspondence between samples in $\Pi_0$ and $\Pi_1$.For a detailed description of the datasets, we refer the reader to \citet{somnath2023aligned}, who designed both datasets. See \Cref{fig:aligned_toy_marginals} for a visualization of the dataset marginals.
% \begin{figure}[h]
%     \centering
%     \includegraphics[width=\linewidth]{}
%     \caption{Toy datasets from \citet{somnath2023aligned}. Realizations of source (red) and target (blue) distribution.}
%     \label{fig:toy-data}
% \end{figure}

\begin{figure}[h]
    \begin{subfigure}[t]{\linewidth}
        \captionsetup{width=0.925\linewidth} % adjust as needed
        \centering
        \begin{subfigure}[t]{0.4\textwidth}
            \centering
            \includegraphics[width=\linewidth]{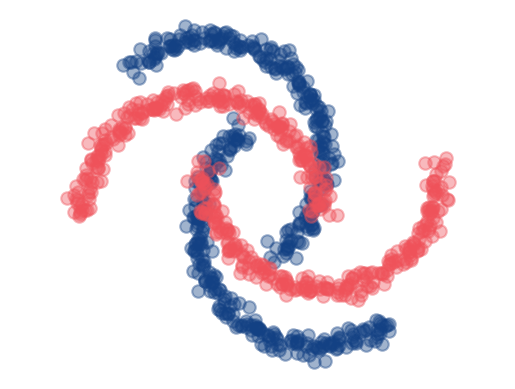}
        \end{subfigure}
        \begin{subfigure}[t]{0.4\textwidth}
            \centering
            \includegraphics[width=\linewidth]{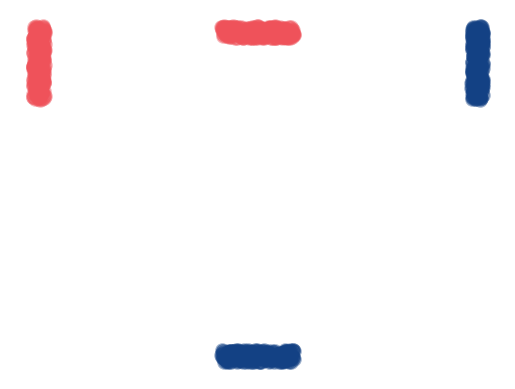}
        \end{subfigure}
    %     \begin{subfigure}[t]{0.32\textwidth}
    %         \centering
    %         \includegraphics[width=\linewidth]{}
    %     \end{subfigure}
    %     \caption{Hurst index $H=0.7$ on the Moons dataset \cite{somnath2023aligned}.}
    %     \label{fig:moons-toy-data-main-paper}
    % \end{subfigure}
    % % NOTE: similar style to what they did in the reference paper
    % %       feel free to exchange this figure if you want :)
    % \begin{subfigure}[t]{\linewidth}
    %     \captionsetup{width=0.925\linewidth} % adjust as needed
    %     \centering
    %     \begin{subfigure}[t]{0.32\textwidth}
    %         \centering
    %         \includegraphics[width=\linewidth]{}
    %     \end{subfigure}
    %     \begin{subfigure}[t]{0.32\textwidth}
    %         \centering
    %         \includegraphics[width=\linewidth]{}
    %     \end{subfigure}
    %     \begin{subfigure}[t]{0.32\textwidth}
    %         \centering
    %         \includegraphics[width=\linewidth]{}
    %     \end{subfigure}
    %     \caption{Hurst index $H=0.2$ on T-Shaped dataset \cite{somnath2023aligned}.}
    %     \label{fig:t-toy-data-main-paper}
    \end{subfigure}
    \caption{
    %(LHS) Marginals of the \textit{Moons} dataset and (RHS) the \textit{T-shape} dataset introduced by \citet{somnath2023aligned}.
    Marginals of the \textit{Moons} dataset and the \textit{T-shape} dataset introduced by \citet{somnath2023aligned}.
    }
    \label{fig:aligned_toy_marginals}
\end{figure}

\paragraph{Cell differentiation} We use a dataset of genetically traced cells during the process of blood formation, created by \citet{cells_diff_dataset} and curated by \citet{somnath2023aligned}. The dataset consists of two snapshots: one recorded on day $2$, when most cells remain undifferentiated, and another on day $4$, which includes a diverse set of mature cell types. For a detailed descirption of the dataset we refer the reader to \citet{somnath2023aligned}.

\paragraph{Conformational changes in proteins} We use the curated subset from \citet{somnath2023aligned} of the D3PM dataset \cite{d3pm_dataset}, which focuses on structure pairs with $C_{\alpha}$ RMSD $> 3$\AA\ . This subset initially comprises of $2,370$ ligand-free (apo) - ligand-bound (holo) pairs. To ensure high-quality alignment, \citet{somnath2023aligned} compute the $C_{\alpha}$ RMSD between pairs of proteins common residues superimposed using the Kabsch algorithm \cite{kabschW1976solution} and retain only those examples where the computed RMSD closely matches the original D3PM value. This results in a cleaned dataset of $1,591$ pairs, which is split into training, validation, and test sets of $1,291 / 150 / 150$ examples, respectively. All structures are Kabsch-superimposed to remove global translational and rotational artifacts, ensuring that the model focuses solely on internal conformational changes. For more details see \citet{somnath2023aligned}.

\section{Implementation details for unpaired data translation} 
\label{app:implementation_unpaired}

In the followign we will provide implementation details for all experiments with upaired data translations on AFHQ~\citep{choi2020starganv2}.

\subsection{Experiments on unpaired data translation}
\paragraph{Network architecture} The Diffusion Transformer (DiT) \citep{peebles2023scalable} is a scalable architecture that adapts the Vision Transformer (ViT) \citep{dosovitskiy2020image} for generative modeling with diffusion processes. Unlike convolution-based U-Nets commonly used in image diffusion models, a DiT model treats denoising as a sequence modeling task by operating directly on (latent) patches of an image, capturing long-term dependencies via Attention \citep{vaswani2017attention}. DiT architectures are grouped into small (DiT-S), base (DiT-B), large (DiT-L), and extra large (DiT-XL) variants, where \citet{peebles2023scalable} observed diminishing returns after scaling from DiT-L to DiT-XL. Notably, \citet{peebles2023scalable} show that the model scales with FLOPs, rather than parameter size. Therefore, a smaller model with more tokens (i.e., smaller patches) can achieve identical performance to a larger model with fewer patches. Following this finding, we used the models with the most tokens for respective parameter sizes. Hence, we selected the variants DiT-B/2 and DiT-L/2---where the ``/2'' indicates a patch size of $2\times2$ for respective tokens---as suitable backbone architectures for all experiments on imaging data.

\paragraph{Training \& sampling parameterization} We used the same training and sampling parameterizations for all datasets and experiments. Parameterizations for DiT-B/2 and DiT-L/2 were kept identical. See \Cref{tab:train-config-unpaired} for detailed parameterizations of all experiments.

\begin{table}[H]
    \setlength{\tabcolsep}{3pt}
    \centering
    \scriptsize
    \begin{tabular}{@{}ccccccccc@{}}
    \toprule
        Model & Optimizer & Learning Rate & EMA Rate & Linear Warmup & Cosine Decay & Online Finetuning & Euler--Maruyama Steps & Parameters\\
    \midrule
        DiT-B/2 & lion \citep{chen2023symbolic} & 0.0001 & 0.999 & 10K & 90K & 4K & 200 & 130M\\
        DiT-L/2 & lion \citep{chen2023symbolic} & 0.0001 & 0.999 & 10K & 90K & 4K & 200 & 458M\\
    \bottomrule
    \end{tabular}
    \caption{Hyperparameters for experiments with Diffusion Transformers.}
    \label{tab:train-config-unpaired}
\end{table}

\paragraph{Compute} Experiments were conducted in single- and mutli-GPU settings, using full precision (FP32) for all runs. Computation times are denoted in an equivalent of A100 GPU (40GB VRAM) hours, as a common reference for scientific compute time. All pretrainings of 100K steps for the AFHQ-32 and AFHQ-256 datasets were completed in 16 hours (A100) for the DiT-B/2 variant and 54 hours (A100) for the DiT-L/2 variant. The online finetunings of 4K steps were completed in 12 hours (A100) for the DiT-B/2 variant and 43 hours (A100) for the DiT-L/2 variant. Samplings experiments were completed in 0.5 hours (A100) for the DiT-B/2 variant and 1.5 hours (A100) for the DiT-L/2 variant. All pretrainings of 100K steps for the AFHQ-512 datasets were completed in 256 hours (A100) for the DiT-L/2 variant. Samplings experiments were completed in 5 hours (A100) for the DiT-L/2 variant.

\section{Computational efficiency}
\label{sec:computational cost}
\paragraph{Number of learnable parameters} 
We use the GNN architecture from \citet{somnath2023aligned}, but following \citet{debortoli2023augmentedbridgematching} the initial state $x_0$ is additionally provided to the network by concatenating it with the input. Nevertheless, the GNN we use for ABM and FDBM has fewer parameters, since \citet{somnath2023aligned} approximate two functions ($b_t$ and $\nabla_x \log h_t$) with a single GNN resulting in more parameters in the output layer. We emphasize that ABM and FDBM deploy the same model architecture and summarize the number of learnable parameters in \Cref{tab:params}. Throughout all unpaired data translation experiments, we use the same model architecture for both SBFlow and FDBM with the same number of learnable parameters.
\begin{table}[h!]
    \centering
    \scriptsize
    %\footnotesize
    \scalebox{1.0}{
    \begin{tabular}{c c c c}
        \toprule
        %\scriptsize
        \textbf{\# parameters per task} & \textbf{SBALIGN} \cite{somnath2023aligned} & \textbf{ABM} \cite{debortoli2023augmentedbridgematching} & \textbf{FDBM}\\
        \midrule
        Coupling-preserving (\Cref{fig:toy_coupling}) & $58,692$ & $31,618$ &	$31,618$ \\ 
        Moons & $58,692$ & $31,618$ &	$31,618$ \\ 
        T-Shape & $19,204$ & $10,754$ & $10,754$\\
        Cell Differentiation & $310,372$ &  	$177,970$ & $177,970$\\
        Predicting Conformations & $545,220$ 	& $537,900$ & $537,900$ \\
        \bottomrule
    \end{tabular}
    }
    \caption{Number of learnable parameters in SBALIGN, ABM, and FDBM.
    }
    \label{tab:params}
\end{table}

\paragraph{Runtime comparison} We provide a runtime comparison of FDBM in the paired setting in \Cref{tab:runtime_paired}. The runtime per training step is averaged over $1000$ training steps, and the runtime to sample one conformation is averaged over the $150$ test samples of the D3PM test set. Training times for ABM and FDBM are nearly identical and both outperform SBALIGN, which requires approximating two functions and thus involves a larger model. For sampling, ABM and FDBM again show an advantage over SBALIGN. FDBM requires, on average, only $0.0422$ seconds more than ABM to sample a conformation over $100$ Euler--Maruyama steps. This slight increase is due to simulating a higher-dimensional stochastic process. However, the effect is minor, as the dominant computational cost during sampling comes from forward passes through the GNN, which are identical for both ABM and FDBM. Throughout all unpaired data translation experiments, we use the same model architecture for both SBFlow and FDBM. 

The differing components during training are the sampling from the (partially) pinned process and the loss computation, both showing nearly identical runtime in \Cref{tab:runtime_unpaired}. The sampling algorithm of FDBM during inference is identical for the paired and unpaired settings. All computations of this section were performed on an NVIDIA A100 GPU (40 GB VRAM).
\begin{table}[h!]
    \centering
    \scriptsize
    \scalebox{1.0}{
    \begin{tabular}{c c c c}
        \toprule
        %\scriptsize
        \textbf{Average Runtime [s]}  & \textbf{SBALIGN}\cite{somnath2023aligned} & \textbf{ABM}\cite{debortoli2023augmentedbridgematching} & \textbf{FDBM}\\
        \midrule
        Training step & $0.0159\pm{0.0075}$  & $0.01438\pm{0.0065}$ & $0.01412\pm{0.0063}$ \\ 
        Sampling one conformation over $100$ sampling steps & $0.7078\pm{0.3409}$ & $0.6424\pm{0.2992}$ & $0.6846\pm{0.3021}$\\
        \bottomrule
    \end{tabular}
    }
    \caption{Runtime comparison of SBALIGN, ABM, and FDBM.
    }
    \label{tab:runtime_paired}
\end{table}

\begin{table}[h!]
    \centering
    \scriptsize
    \scalebox{1.0}{
    \begin{tabular}{c c c c}
        \toprule
        %\scriptsize
        \textbf{Average Runtime [s]}  & \textbf{SBFlow}\cite{debortoli2024schrodingerbridgeflowunpaired} & \textbf{FDBM}\\
        \midrule
        Sampling from (partially) pinned process 	 & $0.0010\pm{0.0002}$  & $0.0011\pm{0.0003}$ \\ 
        Calculation of loss term & $0.0132\pm{0.0074}$ & $0.0132\pm{0.0018}$ \\
        \bottomrule
    \end{tabular}
    }
    \caption{Runtime comparison of SBFlow and FDBM. The runtimes are averaged over $1000$ computations. All computations were performed on an NVIDIA A100 GPU (40 GB VRAM).
    }
    \label{tab:runtime_unpaired}
\end{table}

\section{Evaluation metrics}
\label{sec:metrics}
\paragraph{Wasserstein distance} \label{app:wasserstein} To measure the distance from the original data distribution from the predicted data distribution we use Wasserstein-1 distance \cite{villani2009optimal}.
The Wasserstein-1 distance between ground truth data distribution $p_t$ and sampled data distribution $p_s$ is defined as
\begin{align}
W_1(p_t, p_s) = \mathop{\text{inf}}_{\gamma\sim\Pi(p_t,p_s)} \mathbb{E}_{(x,\hat{x})}[||x - \hat{x}||].
\end{align}
The lower the Wasserstein distance, the better are the distributions $p_t$ and $p_s$ aligned.

\paragraph{Root Mean Square Deviation} \label{app:rmsd} Root mean square deviation of $C_{\alpha}$ atomic positions is a distance between two superimposed molecules/proteins.
If $\mathbf{x}$ is an observed 3D structure/configuration of the protein and $\mathbf{\hat{x}}$ is a predicted configuration of the protein then
\begin{align}
\text{RMSD}(\mathbf{x}, \mathbf{\hat{x}}) = \sqrt{\frac{1}{n}\sum_{i=1}^n ||x_i - \hat{x}_i||^2}.
\end{align}
The lower the RMSD, the lower their L2-distance w.r.t. some unit of measure.
In our example, the unit of the measure is Angstrom, \AA.

\paragraph{Fr\'echet Inception Distance (FID)}
The Fr\'echet Inception Distance (FID) \cite{Heusel2017} measures the distance between the feature distributions of real and generated images, typically using embeddings from a pretrained Inception network. Given the empirical mean and covariance of real images $(\mu_r, \Sigma_r)$ and generated images $(\mu_g, \Sigma_g)$ in this feature space, FID is defined as
\begin{align}
\mathrm{FID}(X, Y) &= \|\mu_r - \mu_g\|_2^2 + \mathrm{Tr}\left(\Sigma_r + \Sigma_g - 2(\Sigma_r \Sigma_g)^{1/2}\right),
\end{align}
where $X$ and $Y$ denote the sets of real and generated images, respectively. The first term captures differences in mean features (style/content shifts), while the second term accounts for differences in variability. FID is widely used in image generation and style transfer as it correlates well with human judgment of realism and diversity.

\paragraph{Learned Perceptual Image Patch Similarity (LPIPS)}
The LPIPS metric \cite{Zhang2018} quantifies perceptual similarity between two images by comparing deep features extracted from a pretrained network (e.g., VGG, AlexNet). Let $x$ and $y$ be two images. The LPIPS score is computed by comparing their normalized feature maps $\hat{f}_l(x), \hat{f}_l(y)$ at multiple layers $l$
\begin{align}
\mathrm{LPIPS}(x, y) &= \sum_{l=1}^L \frac{1}{H_l W_l} \sum_{h=1}^{H_l} \sum_{w=1}^{W_l} 
\left\| \mathbf{w}_l \odot \left( \hat{f}_l(x)_{h,w} - \hat{f}_l(y)_{h,w} \right) \right\|_2^2,
\end{align}
where $\mathbf{w}_l$ are learned per-channel weights, and $H_l, W_l$ denote the spatial dimensions of layer $l$. LPIPS has been shown to align well with human perceptual similarity judgments, making it valuable for evaluating and training generative models, especially in style transfer tasks where pixel-wise metrics fall short.

\section{Cell Differentiation} 
\label{sec:cell_differentiation}
% \begin{table}[h!]
%     \centering
%     \small%\footnotesize
%     \scalebox{1.0}{
%     \begin{tabular}{c c c c c}
%         \toprule
%         %\scriptsize
%         \textbf{Methods} & \textbf{MMD $\downarrow$} & \textbf{$\bm{W_{\epsilon}}$ $\downarrow$}& $\bm{l_{s}(PS)}$ $\downarrow$ & RMSD $\downarrow$\\
%         \midrule
%         FBSB$^\star$ \cite{somnath2023aligned} & $1.55e-2$ & $12.50$ &	$4.08$ & $9.64e-1$\\ 
%         FBSB with SBALIGN$^\star$ \cite{somnath2023aligned}  & $\bm{5.31e-3}$ & $10.54$ & $0.99$ & $9.85e-1$\\
%         SBALIGN$^\star$ \cite{somnath2023aligned} & $1.07e-2$ &  	$11.11$ & $1.24$ & $9.21e-1$\\
%         ABM \cite{debortoli2023augmentedbridgematching} & $545,220$ 	& $537,900$ & $537,900$ \\
%         $\text{FDBM}(H=0.3)$ & $545,220$ 	& $537,900$ & $537,900$ \\
%         $\text{FDBM}(H=0.3)$ & $545,220$ 	& $537,900$ & $537,900$ \\
%         \bottomrule
%     \end{tabular}
%     }
%     \caption{Number of learnable parameters in SBALIGN, ABM, and FDBM.
%     }
%     \label{tab:params}
% \end{table}
\begin{table}[htb]
\centering
\begin{scriptsize}
\begin{tabular}{lccccc}
\toprule
\textbf{Methods} & \textbf{MMD $\downarrow$} & $\bm{W_\varepsilon}$ $\downarrow$ & $\bm{\ell_2(\mathrm{PS})}$ $\downarrow$ & \textbf{RMSD $\downarrow$}\\
\midrule
\textsc{FBSB}$^{*}$ &  1.55e-2 %MMD
                 & 12.50    %W
                 & 4.08    %l_2
                 & 9.64e-1 %RMSD
                 \\
\textsc{FBSB with SBALIGN}$^{*}$ &  \textbf{5.31e-3} %MMD
                 & 10.54    %W
                 & 0.99    %l_2
                 & 9.85e-1 %RMSD
                 \\
\textsc{SBALIGN}$^{*}$ & 1.07e-2 %MMD
                 &  11.11  %W
                 & 1.24    %l_2
                 & 9.21e-1 %RMSD
                 \\
                \hdashline
\textsc{ABM} & 4.10e-2 %MMD
                 & 9.50 %W
                 & 0.89    %l_2
                 & 8.72e-1 %RMSD
                 \\
                 \hdashline
% \textsc{FDBM}($H=0.1$) & 1.04e-1%MMD
%                  & 10.07 %W
%                  & 1.32 %l_2
%                  & \textbf{7.70e-1} %RMSD
%                  \\
% \textsc{FDBM}($H=0.2$) & 7.31e-2%MMD
%                  & 9.62 %W
%                  & 1.11 %l_2
%                  & 7.90e-1%RMSD
%                  \\
\textsc{FDBM}($H=0.3$) (\textbf{ours}) & 5.34e-2%MMD
                 & \textbf{9.32} %W
                 & 0.89 %l_2
                 & 8.11e-1%RMSD
                 \\
\textsc{FDBM}($H=0.4$) (\textbf{ours}) & 4.52e-2%MMD
                 & 9.35 %W
                 & \textbf{0.85} %l_2
                 & 8.21e-1%RMSD
                 \\      
% \textsc{FDBM}($H=0.5$) & 4.22e-2%MMD
%                  & 9.57%W
%                  & 0.95%l_2
%                  & 8.27e-1%RMSD
%                  \\
% \textsc{FDBM}($H=0.6$) & 3.78e-2%MMD
%                  & 9.65%W
%                  & 1.01%l_2
%                  & 8.32e-1%RMSD
%                  \\
% \textsc{FDBM}($H=0.7$) & 3.35e-2%MMD
%                  & 9.68 %W
%                  & 0.99 %l_2
%                  & 8.40e-1%RMSD
%                  \\
% \textsc{FDBM}($H=0.8$) & 3.10e-2%MMD
%                  & 9.81 %W
%                  & 1.09%l_2
%                  & 8.85e-1%RMSD
%                  \\
% \textsc{FDBM}($H=0.9$) & 2.96e-2%MMD
%                  & 9.97 %W
%                  & 1.14%l_2
%                  & 8.52e-1%RMSD
%                  \\
% \textsc{FDBM}($H=0.95$) & 2.78e-2%MMD
%                  & 9.95 %W
%                  & 1.13%l_2
%                  & 8.56e-1%RMSD
%                  \\
                 \bottomrule
\end{tabular}
\end{scriptsize}
\caption{Comparison of performance on the cell differentiation task. Results marked with an asterisk ($*$) are obtained from \citet{somnath2023aligned}.
}
\label{tab:cell_diff}
\end{table}
We evaluate FDBM on the cell differentiation task introduced by \citet{somnath2023aligned}. We fix the diffusion coefficient to $\sqrt{\varepsilon}=1$ across all retrained methods SBALIGN, ABM and FDBM. All scores are averaged over $10$ training trials and $10$ sampling trials for each trained model. We follow the approach of \citet{somnath2023aligned} and average for each prediction over $20$ sampled paths.
This task allows us to assess FDBM for cell differentiation prediction on both the distributional quality and perturbation accuracy of the generated data using distributional metrics such as Wasserstein-2 distance ($W_\varepsilon$) \cite{cuturi2013lightspeed} and kernel maximum mean discrepancy (MMD) \cite{gretton2010kernel}, as well as the Perturbation signature $\ell_2(\mathrm{PS})$ \cite{bunne2022supervised} and RMSD. The dataset consists of two snapshots: one recorded on day $2$, when most cells remain undifferentiated, and another on day $4$, which includes a diverse set of mature cell types. We assess the performance of FDBM against forward-backward Schrödinger bridge models (FBSB) \cite{chen2022likelihood}, SBALIGN, and ABM.
Consistent with our findings on protein conformational changes, we observe in \Cref{tab:cell_diff} that ABM shows superior performance compared to all other Brownian baselines in all metrics except MMD. FDBM achieves the best performance in the rough regime ($H = 0.3$ and $H = 0.4$), with slightly better average $W_{\epsilon}$ and RMSD scores, while ABM remains superior in terms of MMD.

\section{Extended Experiments} \label{app:experiments}
In the following we provide more details on the results reported in the main paper. Detailed scores of all ablations are listed in \Cref{tab:ablation_baseline,tab:ablation-H,tab:ablation-K}. Additional evaluations of AFHQ-256 Dogs $\leftrightarrow$ Wild and Dogs $\leftrightarrow$ Cats are listed in \Cref{tab:adx-dogs}. Additional visual examples for AFQH-512 samples are displayed in \Cref{fig:adx:512-detail,fig:adx:512-overview} and for AFHQ-256 in \Cref{fig:adx:256-detail,fig:adx:256-overview}.
Additional results for the D3PM dataset are listed in \Cref{tab:d3pm-ablation}, as well as additional experiments with toy data in \Cref{tab:toy,tab:toy_abm_vs_fdbm}.

\begin{table}[htb]
   \centering
    \caption{Average Wasserstein distance over $10$ runs between samples generated by the Brownian‑driven baseline and the target distribution, for varying diffusion coefficient $\sqrt{\varepsilon}$.}
    \label{tab:toy}
    % let's avoid scalebox<1 unless we really need that extra 1mm space
    \scalebox{.95}{
    \setlength{\tabcolsep}{2.5pt}
    \begin{tiny}
        \begin{tabular}{@{}llllllllll@{}}
                \toprule
                    \multicolumn{1}{c}{\multirow{2}{*}{\bf  BM driven}}
                % HEADING: the Hurst index
                    & \multicolumn{8}{c}{\textbf{Wasserstein Distance} $\downarrow$} \\
                    \cmidrule(r){2-9}
                    & 
                        $\sqrt{\varepsilon}=1.0$ &  
                        $\sqrt{\varepsilon}=0.8$ & 
                        $\sqrt{\varepsilon}=0.6$ & 
                        $\sqrt{\varepsilon}=0.4$ &
                        $\sqrt{\varepsilon}=0.2$ &
                        $\sqrt{\varepsilon}=0.1$ &
                        $\sqrt{\varepsilon}=0.05$ &
                        $\sqrt{\varepsilon}=0.01$\\
                \midrule
                \textbf{Moons}
                & $0.020$\scriptsize{$\pm 0.008$} % g=1.0
                & \bm{$0.015$}\scriptsize{$\pm 0.005$} % g=0.8
                & $0.019$\scriptsize{$\pm 0.006$}  % g=0.6
                & $0.025$\scriptsize{$\pm 0.003$} % g=0.4
                & $0.033$\scriptsize{$\pm 0.011$} % g=0.2
                & $0.206$\scriptsize{$\pm 0.008$}  % g=0.1
                & $0.121$\scriptsize{$\pm 0.016$}  % g=0.05
                & $0.206$\scriptsize{$\pm 0.019$}  % g=0.01
                \\
                \textbf{T-shaped}
                & $0.395$\scriptsize{$\pm 0.045$} % g=1.0
                & $0.346$\scriptsize{$\pm 0.029$} % g=0.8
                & $0.251$\scriptsize{$\pm 0.008$}  % g=0.6
                & $0.154$\scriptsize{$\pm 0.010$} % g=0.4
                & \bm{$0.082$}\scriptsize{$\pm 0.028$} % g=0.2
                & $0.178$\scriptsize{$\pm 0.049$}  % g=0.1
                & $0.529$\scriptsize{$\pm 0.007$}  % g=0.05
                & $0.570$\scriptsize{$\pm 0.092$}  % g=0.01
                \\
                \bottomrule
        \end{tabular}
    \end{tiny}
    }
\end{table}
\begin{table}[htb]
    \centering
    \setlength{\tabcolsep}{1.5pt}
    \tiny
    \caption{
    % Average Wasserstein distance over $10$ runs between generated samples and target distribution, for varying Hurst index $H$.
    Wasserstein distance ($10$ runs average) between generated samples and target distribution.
    }
    % let's avoid scalebox<1 unless we really need that extra 1mm space
    \scalebox{1}{
    \setlength{\tabcolsep}{3.5pt}
        \begin{tabular}{@{}llllllllll@{}}
                \toprule
                    \multicolumn{1}{c}{\multirow{2}{*}{}}
                % HEADING: the Hurst index
                    & \multicolumn{7}{c}{\textbf{Wasserstein Distance} $\downarrow$} \\
                    \cmidrule(r){2-8}
                    & 
                        % $H=0.9$ & 
                        $H=0.8$ & 
                        $H=0.7$ & 
                        $H=0.6$ & 
                        ABM \cite{debortoli2023augmentedbridgematching} & 
                        $H=0.4$ & 
                        $H=0.3$ & 
                        $H=0.2$ &
                        % $H=0.1$ 
                        \\
                \midrule
                \textbf{Moons}
                % & $0.023$\scriptsize{$\pm 0.002$} % H=0.9
                & $0.017$\scriptsize{$\pm 0.002$} % H=0.8
                & \bm{$0.012$}\scriptsize{$\pm 0.002$}  % H=0.7
                & \bm{$0.012$}\scriptsize{$\pm 0.003$} % H=0.6
                & $0.015$\scriptsize{$\pm 0.019$}  % H=0.5 (ABM Basline - not ours)
                & $0.029$\scriptsize{$\pm 0.006$}  % H=0.4
                & $0.033$\scriptsize{$\pm 0.008$}  % H=0.3
                & $0.048$\scriptsize{$\pm 0.016$}  % H=0.2
                % & $0.083$\scriptsize{$\pm 0.022$}  % H=0.1
                \\
                \textbf{T-shaped}
                % & $0.066$\scriptsize{$\pm 0.016$} % H=0.9
                & $0.082$\scriptsize{$\pm 0.043$}  % H=0.8
                & $0.091$\scriptsize{$\pm 0.041$}  % H=0.7
                & $0.083$\scriptsize{$\pm 0.031$} % H=0.6
                & $0.082$\scriptsize{$\pm 0.028$}  % H=0.5 (ABM Basline - not ours)
                & $0.068$\scriptsize{$\pm 0.015$}  % H=0.4
                & $0.062$\scriptsize{$\pm 0.013$}  % H=0.3
                & \bm{$0.048$}\scriptsize{$\pm 0.039$}  % H=0.2
                % & $0.183$\scriptsize{$\pm 0.057$}  % H=0.1
                \\
                \bottomrule
        \end{tabular}
    }
    \label{tab:toy_abm_vs_fdbm}
\end{table}
\begin{table}[h]
    \vspace{-5mm}
    \centering
    \caption{Ablation of the diffusion coefficient $\sqrt{\varepsilon}$ of our Brownian driven baseline ABM \cite{debortoli2023augmentedbridgematching}. Additionally compared to the scores reported in \citet{somnath2023aligned}.}
    %\setlength{\tabcolsep}{3pt}
    % let's avoid scalebox<1 unless we really need that extra 1mm space
    \scalebox{1}{
    \begin{scriptsize}
        \begin{tabular}{llllllll}
                \toprule
                % HEADING: the Hurst index
                            & \multicolumn{3}{c}{RMSD(\AA)} && \multicolumn{3}{c}{\% $\text{RMSD(\AA})<\tau$}\\
            \cmidrule{2-4} \cmidrule{6-8} 
    		  \textbf{D3PM Test Set \cite{somnath2023aligned}}  &  Median & Mean & Std &&  $\tau=2$ & $\tau=5$ & $\tau=10$\\
                \midrule
                EGNN \cite{xu2022geodiff,somnath2023aligned}
                &$19.99$ %Median
                &$21.37$ %Mean
                &$8.21$ %Std
                &
                &$1\%$  %\tau = 2
                &$1\%$ %\tau = 5
                &$3\%$ %\tau = 10
                \\
                $\text{SBALIGN}_{(10,10)}$ \cite{somnath2023aligned} 
                &$3.80$ %Median
                &$4.98$ %Mean
                &$3.95$ %Std
                &
                &$0\%$  %\tau = 2
                &$69\%$ %\tau = 5
                &$93\%$ %\tau = 10
                \\
                $\text{SBALIGN}_{(100,100)}$ \cite{somnath2023aligned} 
                &$3.81$ %Median
                &$5.02$ %Mean
                &$3.96$ %Std
                &
                &$0\%$ %\tau = 2
                &$70\%$ %\tau = 5
                &$93\%$ %\tau = 10
                \\
                %\cdashlinelr{1-8}
                \midrule
                $\text{ABM}(\varepsilon = 1.0)$ \cite{debortoli2023augmentedbridgematching} (1 trial)  
                &$3.14$ %Median
                &$4.11$ %Mean
                &$3.32$ %Std
                &
                &$1\%$ %\tau = 2
                &$79\%$ %\tau = 5
                &$97\%$ %\tau = 10
                \\
                $\text{ABM}(\varepsilon = 0.8)$ \cite{debortoli2023augmentedbridgematching} (1 trial)  
                &$2.68$ %Median
                &$3.93$ %Mean
                &$3.39$ %Std
                &
                &$23\%$ %\tau = 2
                &$79\%$ %\tau = 5
                &$96\%$ %\tau = 10
                \\
                $\text{ABM}(\varepsilon = 0.6)$ \cite{debortoli2023augmentedbridgematching} (1 trial)  
                &$2.47$ %Median
                &$3.65$ %Mean
                &$3.59$ %Std
                &
                &$35\%$ %\tau = 2
                &$85\%$ %\tau = 5
                &$97\%$ %\tau = 10
                \\
                $\text{ABM}(\varepsilon = 0.4)$ \cite{debortoli2023augmentedbridgematching} (1 trial)  
                &$2.47$ %Median
                &$3.60$ %Mean
                &$3.66$ %Std
                &
                &$43\%$ %\tau = 2
                &$86\%$ %\tau = 5
                &$95\%$ %\tau = 10
                \\
                $\text{ABM}(\varepsilon = 0.2)$ \cite{debortoli2023augmentedbridgematching} (1 trial)  
                &\bm{$2.20$} %Median
                &\bm{$3.58$} %Mean
                &$3.45$ %Std
                &
                &\bm{$45\%$} %\tau = 2
                &\bm{$81\%$} %\tau = 5
                &\bm{$97\%$} %\tau = 10
                \\
                $\text{ABM}(\varepsilon = 0.1)$ \cite{debortoli2023augmentedbridgematching} (1 trial)  
                &$2.70$ %Median
                &$3.67$ %Mean
                &$3.54$ %Std
                &
                &$43\%$ %\tau = 2
                &$83\%$ %\tau = 5
                &$96\%$ %\tau = 10
                \\
                $\text{ABM}(\varepsilon = 0.05)$  \cite{debortoli2023augmentedbridgematching} (1 trial)  
                &$2.69$ %Median
                &$3.59$ %Mean
                &$3.83$ %Std
                &
                &$35\%$ %\tau = 2
                &$82\%$ %\tau = 5
                &$95\%$ %\tau = 10
                \\
                $\text{ABM}(\varepsilon = 0.01)$ \cite{debortoli2023augmentedbridgematching} (1 trial)  
                &$2.96$ %Median
                &$3.78$ %Mean
                &$4.08$ %Std
                &
                &$30\%$ %\tau = 2
                &$77\%$ %\tau = 5
                &$93\%$ %\tau = 10
                \\
                \midrule
                $\text{ABM}(\varepsilon = 0.2)$  \cite{debortoli2023augmentedbridgematching}  (5 trials)
                &$2.40$ %Median
                &$3.49$ %Mean
                &$3.54$ %Std
                &
                &$43\%$ %\tau = 2
                &$84\%$ %\tau = 5
                &$96\%$ %\tau = 10
                \\
                %\cdashlinelr{1-8}
                $\text{FDBM}(H=0.4,\varepsilon=0.2)$ (5 trials)
                &$2.24$ %Median
                &$3.39$ %Mean
                &$3.57$ %Std
                &
                &$45\%$ %\tau = 2
                &$84\%$%\tau = 5
                &$97\%$ %\tau = 10
                \\
                $\text{FDBM}(H=0.3,\varepsilon=0.2)$ (5 trials)
                &$2.33$ %Median
                &$3.42$ %Mean
                &$3.42$ %Std
                &
                &$43\%$  %\tau = 2
                &$85\%$ %\tau = 5
                &$97\%$ %\tau = 10
                \\
                $\text{FDBM}(H=0.2,\varepsilon=0.2)$ (5 trials)
                &\bm{$2.12$} %Median
                &\bm{$3.34$} %Mean
                &\bm{$3.59$} %Std
                &
                &\bm{$48\%$} %\tau = 2
                &\bm{$86\%$} %\tau = 5
                &$96\%$ %\tau = 10
                \\
                $\text{FDBM}(H=0.1,\varepsilon=0.2)$ (5 trials)
                &$2.20$ %Median
                &$3.44$ %Mean
                &$3.57$ %Std
                &
                &$46\%$ %\tau = 2
                &$83\%$%\tau = 5
                &\bm{$97\%$} %\tau = 10
                \\
                \bottomrule
        \end{tabular}
    \end{scriptsize}
    }
    \label{tab:d3pm-ablation}
\end{table}
\begin{table}[h!]
    \tiny
    \centering
    \caption{Comparison of FID and LPIPS for AFHQ-512 across Cats $\rightarrow$ Wild and Wild $\rightarrow$ Cats translation tasks.}
    \begin{tabular}{lccccc}
        \toprule
        \textbf{AFHQ-512} & \multicolumn{1}{c}{cats $\rightarrow$ wild} & \multicolumn{1}{c}{wild $\rightarrow$ cats} \\
        \cmidrule(lr){2-3} \cmidrule(lr){4-5}
                    & FID $\downarrow$          & FID $\downarrow$ \\
        \midrule
        SBFlow      & 17.79 $\pm$ 0.66          & \textbf{24.17 $\pm$ 0.81} \\
        FDBM (H=0.4)& \textbf{14.27 $\pm$ 0.86} &  30.11 $\pm$ 0.75 \\
        \bottomrule
    \end{tabular}
    \label{tab:afhq512}
\end{table}
\begin{table}[htbp]
    \tiny
    \setlength{\tabcolsep}{1.5pt}
    \centering
    \caption{Pretraining ablation for entropic regularization $\varepsilon$ of the SBFlow \citep{debortoli2024schrodingerbridgeflowunpaired} baseline.}
    \begin{subtable}{0.495\linewidth}
        \captionsetup{width=0.925\linewidth} % adjust as needed
        \centering
        \caption{AFHQ-32 with DiT-B/2.}
        \begin{tabular}{@{}l l cc cc@{}}
        \toprule
            \multicolumn{1}{c}{\multirow{2}{*}{Method}} 
            & \multicolumn{1}{c}{\multirow{2}{*}{$\varepsilon$}} 
            & \multicolumn{2}{c}{cats $\rightarrow$ wild}
            & \multicolumn{2}{c}{cats $\leftarrow$ wild}\\
        \cmidrule{3-6}              
           & & FID $\downarrow$ & LPIPS $\downarrow$           
            & FID $\downarrow$ & LPIPS $\downarrow$ \\
            \midrule
            SBFlow & 0.75 & $161.95$ \scalebox{0.75}{$\pm 2.19$}& $0.159$ \scalebox{0.75}{$ \pm 0.002$}& $138.20$ \scalebox{0.75}{$ \pm 2.27$}& $0.135$ \scalebox{0.75}{$ \pm 0.001$}  \\
            SBFlow & $1$   & $\mathbf{59.04}$ \scalebox{0.75}{$\pm 1.14$}& $0.104$ \scalebox{0.75}{$ \pm 0.001$}& $\mathbf{74.36}$ \scalebox{0.75}{$ \pm 1.02$}& $0.151$ \scalebox{0.75}{$ \pm 0.001$} \\
            SBFlow & $1.125$ & $77.24$ \scalebox{0.75}{$\pm 0.85$}& $0.106$ \scalebox{0.75}{$ \pm 0.001$}& $77.90$ \scalebox{0.75}{$ \pm 1.40$}& $0.163$ \scalebox{0.75}{$ \pm 0.001$} \\
            SBFlow & $1.25$&  $96.66$ \scalebox{0.75}{$\pm 1.37$}& $0.110$ \scalebox{0.75}{$ \pm 0.001$}& $88.77$ \scalebox{0.75}{$ \pm 1.16$}& $0.172$ \scalebox{0.75}{$ \pm 0.001$} \\
        \bottomrule
        \end{tabular}
        \label{tab:ablation_baseline-32}
    \end{subtable}
    \begin{subtable}{0.49\linewidth}
        \captionsetup{width=0.925\linewidth} % adjust as needed
        \centering
        \caption{AFHQ-256 with DiT-B/2.}
        \begin{tabular}{l l cc cc}
        \toprule
            \multicolumn{1}{c}{\multirow{2}{*}{Method}} 
            & \multicolumn{1}{c}{\multirow{2}{*}{$\varepsilon$}} 
            & \multicolumn{2}{c}{cats $\rightarrow$ wild}
            & \multicolumn{2}{c}{cats $\leftarrow$ wild}\\
        \cmidrule{3-6}              
            && FID $\downarrow$ & LPIPS $\downarrow$           
            & FID $\downarrow$ & LPIPS $\downarrow$ \\
            \midrule
            SBFlow & $0.75$&  $42.67$ \scalebox{0.75}{$\pm 0.73$}& $0.659$ \scalebox{0.75}{$ \pm 0.001$}& $46.42$ \scalebox{0.75}{$ \pm 0.89$}& $0.588$ \scalebox{0.75}{$ \pm 0.001$}\\
            SBFlow & $1$   & $\mathbf{15.67}$ \scalebox{0.75}{$\pm 0.65$}& $0.578$ \scalebox{0.75}{$ \pm 0.002$}& $\mathbf{30.75}$ \scalebox{0.75}{$ \pm 0.88$}& $0.594$ \scalebox{0.75}{$ \pm 0.001$} \\
            SBFlow & $1.125$ & $33.46$ \scalebox{0.75}{$\pm 1.25$}& $0.592$ \scalebox{0.75}{$ \pm 0.001$}& $37.36$ \scalebox{0.75}{$ \pm 0.93$}& $0.609$ \scalebox{0.75}{$ \pm 0.002$} \\
            SBFlow & $1.25$& $54.05$ \scalebox{0.75}{$\pm 1.10$}& $0.623$ \scalebox{0.75}{$ \pm 0.001$}& $48.63$ \scalebox{0.75}{$ \pm 1.16$}& $0.629$ \scalebox{0.75}{$ \pm 0.001$} \\
        \bottomrule
        \end{tabular}
        \label{tab:ablation_baseline-256}
    \end{subtable}
    \label{tab:ablation_baseline}
\end{table}

\begin{table}[htbb]
    \tiny
    \setlength{\tabcolsep}{1.5pt}
    \centering
    \caption{Pretraining ablation for hurst index $H$ related parameterization of our method. $K=5$ was fixed for all experiments. The best results and results where the mean is within the standard deviation of the best result are highlighted in boldface.
    }
    \begin{subtable}{0.49\linewidth}
        \captionsetup{width=0.925\linewidth} % adjust as needed
        \centering
        \caption{AFHQ-32 with DiT-B/2 and $K=5$ for FDBM (\textbf{ours}).}
        \begin{tabular}{@{}l l cc cc@{}}
        \toprule
            \multicolumn{1}{c}{\multirow{2}{*}{Method}} 
            & \multicolumn{1}{c}{\multirow{2}{*}{H}} 
            & \multicolumn{2}{c}{cats $\rightarrow$ wild}
            & \multicolumn{2}{c}{cats $\leftarrow$ wild}\\
        \cmidrule{3-6}              
            && FID $\downarrow$ & LPIPS $\downarrow$           
            & FID $\downarrow$ & LPIPS $\downarrow$ \\
            \midrule
            FDBM & ${0.9}$ & $47.03$ \scalebox{0.75}{$\pm 1.53$}& $0.099$ \scalebox{0.75}{$ \pm 0.001$}& $52.38$ \scalebox{0.75}{$ \pm 1.06$}& $0.155$ \scalebox{0.75}{$ \pm 0.002$} \\
            FDBM & ${0.8}$ & $45.18$ \scalebox{0.75}{$\pm 1.05$}& $0.095$ \scalebox{0.75}{$ \pm 0.001$}& $50.59$ \scalebox{0.75}{$ \pm 0.65$}& $0.155$ \scalebox{0.75}{$ \pm 0.001$} \\
            FDBM & ${0.7}$ & $48.36$ \scalebox{0.75}{$\pm 0.92$}& $0.095$ \scalebox{0.75}{$ \pm 0.001$}& $51.65$ \scalebox{0.75}{$ \pm 0.74$}& $0.156$ \scalebox{0.75}{$ \pm 0.002$} \\
            FDBM & ${0.6}$ & $43.45$ \scalebox{0.75}{$\pm 0.93$}& $0.097$ \scalebox{0.75}{$ \pm 0.001$}& $48.79$ \scalebox{0.75}{$ \pm 0.73$}& $0.155$ \scalebox{0.75}{$ \pm 0.001$} \\
            FDBM & ${0.5}$ & $\mathbf{40.21}$ \scalebox{0.75}{$\mathbf{\pm 1.18}$}& $0.097$ \scalebox{0.75}{$ \pm 0.001$}& $\mathbf{45.74}$ \scalebox{0.75}{$ \pm 0.69$}& $0.154$ \scalebox{0.75}{$ \pm 0.002$} \\
            FDBM & ${0.4}$ & $44.84$ \scalebox{0.75}{$\pm 1.32$}& $0.096$ \scalebox{0.75}{$ \pm 0.001$}& $47.65$ \scalebox{0.75}{$ \pm 0.97$}& $\mathbf{0.152}$ \scalebox{0.75}{$\mathbf{ \pm 0.001}$} \\
            FDBM & ${0.3}$ & $58.27$ \scalebox{0.75}{$\pm 0.97$}& $\mathbf{0.090}$ \scalebox{0.75}{$\mathbf{ \pm 0.001}$}& $54.89$ \scalebox{0.75}{$ \pm 0.78$}& $0.153$ \scalebox{0.75}{$ \pm 0.001$} \\
            FDBM & ${0.2}$ & $83.62$ \scalebox{0.75}{$\pm 1.45$}& -- & $68.05$ \scalebox{0.75}{$ \pm 1.21$}& --  \\
            FDBM & ${0.1}$ & $131.04$ \scalebox{0.75}{$\pm 1.51$}& -- & $123.20$ \scalebox{0.75}{$ \pm 1.92$}& --  \\
        \bottomrule
        \end{tabular}
        \label{tab:ablation_baseline-32}
    \end{subtable}
    \begin{subtable}{0.495\linewidth}
        \captionsetup{width=0.925\linewidth} % adjust as needed
        \centering
        \caption{AFHQ-256 with DiT-B/2 and $K=5$ for FDBM (\textbf{ours}).}
        \begin{tabular}{l l cc cc}
        \toprule
            \multicolumn{1}{c}{\multirow{2}{*}{Method}} 
            & \multicolumn{1}{c}{\multirow{2}{*}{H}} 
            & \multicolumn{2}{c}{cats $\rightarrow$ wild}
            & \multicolumn{2}{c}{cats $\leftarrow$ wild}\\
        \cmidrule{3-6}              
            && FID $\downarrow$ & LPIPS $\downarrow$           
            & FID $\downarrow$ & LPIPS $\downarrow$ \\
            \midrule
            FDBM & ${0.9}$ & $21.15$ \scalebox{0.75}{$\pm 1.26$}& $\mathbf{0.522}$ \scalebox{0.75}{$ \pm \mathbf{0.002}$}
                                              & $\mathbf{19.50}$ \scalebox{0.75}{$ \pm 0.36$}& $\mathbf{0.539}$ \scalebox{0.75}{$ \pm 0.002$}\\
            FDBM & ${0.8}$ & $19.65$ \scalebox{0.75}{$\pm 1.39$}& $\mathbf{0.523}$ \scalebox{0.75}{$ \pm 0.001$}
                                              & $19.88$ \scalebox{0.75}{$ \pm 0.63$}& $0.542$ \scalebox{0.75}{$ \pm 0.002$} \\
            FDBM & ${0.7}$ & $18.64$ \scalebox{0.75}{$\pm 1.11$}& $0.529$ \scalebox{0.75}{$ \pm 0.002$}
                                              & $\mathbf{19.46}$ \scalebox{0.75}{$ \pm 0.46$}& $0.547$ \scalebox{0.75}{$ \pm 0.002$} \\
            FDBM & ${0.6}$ & $\mathbf{16.77}$ \scalebox{0.75}{$\pm 0.71$}& $0.530$ \scalebox{0.75}{$ \pm 0.002$}
                                              & $\mathbf{19.14}$ \scalebox{0.75}{$\mathbf{ \pm 0.38}$}& $0.551$ \scalebox{0.75}{$ \pm 0.001$} \\
            FDBM & ${0.5}$ & $\mathbf{16.19}$ \scalebox{0.75}{$\mathbf{\pm 0.83}$}& $0.534$ \scalebox{0.75}{$ \pm 0.002$}
                                              & $21.91$ \scalebox{0.75}{$ \pm 0.55$}& $0.565$ \scalebox{0.75}{$ \pm 0.002$} \\
            FDBM & ${0.4}$ & $\mathbf{17.02}$ \scalebox{0.75}{$\pm 0.78$}& $0.542$ \scalebox{0.75}{$ \pm 0.002$}
                                              & $24.32$ \scalebox{0.75}{$ \pm 0.63$}& $0.577$ \scalebox{0.75}{$ \pm 0.001$} \\
            FDBM & ${0.3}$ & $28.50$ \scalebox{0.75}{$\pm 1.68$}& $0.549$ \scalebox{0.75}{$ \pm 0.002$}
                                              & $30.53$ \scalebox{0.75}{$ \pm 0.79$}& $0.591$ \scalebox{0.75}{$ \pm 0.001$} \\
            FDBM & ${0.2}$ & $59.83$ \scalebox{0.75}{$\pm 2.36$}& -- 
                                              & $37.17$ \scalebox{0.75}{$ \pm 0.62$}& --  \\
            FDBM & ${0.1}$ & $81.36$ \scalebox{0.75}{$\pm 1.38$}& -- 
                                              & $43.69$ \scalebox{0.75}{$ \pm 1.00$}& --  \\
        \bottomrule
        \end{tabular}
        \label{tab:ablation_baseline-256}
    \end{subtable}
    \label{tab:ablation-H}
\end{table}

\begin{table}[htbp]
    \tiny
    \setlength{\tabcolsep}{1.5pt}
    \centering
    \caption{Pretraining ablation for hurst index $H$ related parameterization of our method. $K=5$ was fixed for all experiments. The best results and results where the mean is within the standard deviation of the best result are highlighted in boldface.
    }
    \begin{subtable}{0.49\linewidth}
        \captionsetup{width=0.925\linewidth} % adjust as needed
        \centering
        \caption{AFHQ-32 with DiT-B/2, $\varepsilon=1$ and $H=0.5$.}
        \begin{tabular}{l l cc cc}
        \toprule
            \multicolumn{1}{c}{\multirow{2}{*}{Method}} 
            & \multicolumn{1}{c}{\multirow{2}{*}{K}} 
            & \multicolumn{2}{c}{cats $\rightarrow$ wild}
            & \multicolumn{2}{c}{cats $\leftarrow$ wild}\\
        \cmidrule{3-6}              
            && FID $\downarrow$ & LPIPS $\downarrow$           
            & FID $\downarrow$ & LPIPS $\downarrow$ \\
            \midrule
            FDBM & $6$ & $63.99$ \scalebox{0.75}{$\pm 2.13$}& $\mathbf{0.091}$ \scalebox{0.75}{$ \pm 0.001$}& $53.46$ \scalebox{0.75}{$ \pm 0.72$}& $\mathbf{0.150}$ \scalebox{0.75}{$ \pm 0.001$} \\
            FDBM & $5$ & $\mathbf{40.21}$ \scalebox{0.75}{$\pm 1.18$}& $0.097$ \scalebox{0.75}{$ \pm 0.001$}& $\mathbf{45.74}$ \scalebox{0.75}{$ \pm 0.69$}& $0.154$ \scalebox{0.75}{$ \pm 0.002$}\\
            FDBM & $4$ & $\mathbf{41.13}$ \scalebox{0.75}{$\pm 1.19$}& $0.097$ \scalebox{0.75}{$ \pm 0.001$}& $46.92$ \scalebox{0.75}{$ \pm 0.89$}& $0.155$ \scalebox{0.75}{$ \pm 0.002$} \\
            FDBM & $3$ & $\mathbf{41.32}$ \scalebox{0.75}{$\pm 1.38$}& $0.096$ \scalebox{0.75}{$ \pm 0.001$}& $47.82$ \scalebox{0.75}{$ \pm 0.82$}& $0.154$ \scalebox{0.75}{$ \pm 0.001$} \\
            FDBM & $2$ & $42.14$ \scalebox{0.75}{$\pm 1.45$}& $0.095$ \scalebox{0.75}{$ \pm 0.001$}& $\mathbf{44.61}$ \scalebox{0.75}{$ \pm 0.93$}& $0.154$ \scalebox{0.75}{$ \pm 0.002$} \\
            FDBM & $1$ & $41.15$ \scalebox{0.75}{$\pm 1.11$}& $0.097$ \scalebox{0.75}{$ \pm 0.001$}& $46.64$ \scalebox{0.75}{$ \pm 0.97$}& $0.153$ \scalebox{0.75}{$ \pm 0.002$} \\
        \bottomrule
        \end{tabular}
        \label{tab:ablation_baseline-32}
    \end{subtable}
    \begin{subtable}{0.49\linewidth}
        \captionsetup{width=0.925\linewidth} % adjust as needed
        \centering
        \caption{AFHQ-256 with DiT-B/2, $\varepsilon=1$ and $H=0.6$.}
        \begin{tabular}{l l cc cc}
        \toprule
            \multicolumn{1}{c}{\multirow{2}{*}{Method}} 
            & \multicolumn{1}{c}{\multirow{2}{*}{K}} 
            & \multicolumn{2}{c}{cats $\rightarrow$ wild}
            & \multicolumn{2}{c}{cats $\leftarrow$ wild}\\
        \cmidrule{3-6}              
            && FID $\downarrow$ & LPIPS $\downarrow$           
            & FID $\downarrow$ & LPIPS $\downarrow$ \\
            \midrule
            FDBM & $6$ & $59.08$ \scalebox{0.75}{$\pm 1.95$}& $0.547$ \scalebox{0.75}{$ \pm 0.001$}& $38.54$ \scalebox{0.75}{$ \pm 1.11$}& $0.592$ \scalebox{0.75}{$ \pm 0.002$} \\
            FDBM & $5$ & $\mathbf{16.77}$ \scalebox{0.75}{$\pm 0.71$}& $0.530$ \scalebox{0.75}{$ \pm 0.002$}& $\mathbf{19.14}$ \scalebox{0.75}{$ \pm 0.38$}& $\mathbf{0.551}$ \scalebox{0.75}{$ \pm 0.001$} \\
            FDBM & $4$ & $18.67$ \scalebox{0.75}{$\pm 0.75$}& $\mathbf{0.528}$ \scalebox{0.75}{$ \pm 0.002$}& $19.77$ \scalebox{0.75}{$ \pm 0.52$}& $0.555$ \scalebox{0.75}{$ \pm 0.002$} \\
            FDBM & $3$ & $17.89$ \scalebox{0.75}{$\pm 0.88$}& $\mathbf{0.528}$ \scalebox{0.75}{$ \pm 0.002$}& $20.27$ \scalebox{0.75}{$ \pm 0.45$}& $0.555$ \scalebox{0.75}{$ \pm 0.001$} \\
            FDBM & $2$ & $19.20$ \scalebox{0.75}{$\pm 1.04$}& $\mathbf{0.527}$ \scalebox{0.75}{$ \pm 0.002$}& $21.74$ \scalebox{0.75}{$ \pm 0.48$}& $0.555$ \scalebox{0.75}{$ \pm 0.002$} \\
            FDBM & $1$ & $19.98$ \scalebox{0.75}{$\pm 0.86$}& $0.550$ \scalebox{0.75}{$ \pm 0.002$}& $30.61$ \scalebox{0.75}{$ \pm 1.20$}& $0.594$ \scalebox{0.75}{$ \pm 0.001$} \\
        \bottomrule
        \end{tabular}
        \label{tab:ablation_baseline-256}
    \end{subtable}
    \label{tab:ablation-K}
\end{table}
\begin{table}[htp]
    \scriptsize
    \setlength{\tabcolsep}{1.5pt}
    \centering
    \caption{Additional evaluations of AFHQ-256 Dogs $\leftrightarrow$ Wild and Dogs $\leftrightarrow$ Cats. The best results and results where the mean is within the standard deviation of the best result are highlighted in boldface.}
        \begin{subfigure}[b]{0.495\linewidth}
            \captionsetup{width=0.925\linewidth} % adjust as needed
            \centering
            \caption{AFHQ-256 with DiT-B/2 and $K=5$ for FDBM (\textbf{ours}).}
            \begin{tabular}{l l cc cc}
            \toprule
                \multicolumn{1}{c}{\multirow{2}{*}{Method}} 
                & \multicolumn{1}{c}{\multirow{2}{*}{H}} 
                & \multicolumn{2}{c}{dogs $\rightarrow$ wild}
                & \multicolumn{2}{c}{dogs $\leftarrow$ wild}\\
            \cmidrule{3-6}              
                && FID $\downarrow$ & LPIPS $\downarrow$           
                & FID $\downarrow$ & LPIPS $\downarrow$ \\
                \midrule
                SBFlow & & 20.74 \scalebox{0.75}{$\pm$ 0.64} & {0.53} \scalebox{0.75}{$\pm$ 0.002} & 47.07 \scalebox{0.75}{$\pm$ 0.80} & 0.56 \scalebox{0.75}{$\pm$ 0.002}
                \\
                \hdashline
                FDBM & ${0.9}$ & {20.37 \scalebox{0.75}{$\pm$ 0.98}} & \textbf{0.52 \scalebox{0.75}{$\pm$ 0.002}} & {43.11 \scalebox{0.75}{$\pm$ 0.68}} & \textbf{0.54 \scalebox{0.75}{$\pm$ 0.002}} \\
                FDBM & ${0.8}$ & {19.22 \scalebox{0.75}{$\pm$ 0.83}} & {0.53} \scalebox{0.75}{$\pm$ 0.003} & {41.76 \scalebox{0.75}{$\pm$ 0.84}} & {0.55 \scalebox{0.75}{$\pm$ 0.002}} \\
                FDBM & ${0.7}$ & {18.11 \scalebox{0.75}{$\pm$ 0.75}} & {0.53} \scalebox{0.75}{$\pm$ 0.002} & {40.08 \scalebox{0.75}{$\pm$ 0.73}} & 0.56 \scalebox{0.75}{$\pm$ 0.002} \\
                FDBM & ${0.6}$ & {18.43 \scalebox{0.75}{$\pm$ 0.72}} & {0.53} \scalebox{0.75}{$\pm$ 0.002} & {39.84 \scalebox{0.75}{$\pm$ 0.89}} & 0.57 \scalebox{0.75}{$\pm$ 0.002} \\
                FDBM & ${0.5}$ & \textbf{14.74 \scalebox{0.75}{$\pm$ 0.53}} & 0.55 \scalebox{0.75}{$\pm$ 0.002} & \textbf{37.68 \scalebox{0.75}{$\pm$ 0.55}} & 0.58 \scalebox{0.75}{$\pm$ 0.002} \\
                FDBM & ${0.4}$ & {15.78 \scalebox{0.75}{$\pm$ 0.85}} & 0.56 \scalebox{0.75}{$\pm$ 0.002} & {38.51 \scalebox{0.75}{$\pm$ 0.64}} & 0.59 \scalebox{0.75}{$\pm$ 0.001} \\
            \bottomrule
            \end{tabular}
        \end{subfigure}
        \begin{subfigure}[b]{0.495\linewidth}
            \captionsetup{width=0.925\linewidth} % adjust as needed
            \centering
            \caption{AFHQ-256 with DiT-B/2 and $K=5$ for FDBM (\textbf{ours}).}
            \begin{tabular}{l l cc cc}
            \toprule
                \multicolumn{1}{c}{\multirow{2}{*}{Method}} 
                & \multicolumn{1}{c}{\multirow{2}{*}{H}} 
                & \multicolumn{2}{c}{dogs $\rightarrow$ cats}
                & \multicolumn{2}{c}{dogs $\leftarrow$ cats}\\
            \cmidrule{3-6}              
                && FID $\downarrow$ & LPIPS $\downarrow$           
                & FID $\downarrow$ & LPIPS $\downarrow$ \\
                \midrule
                SBFlow & & \textbf{18.38 \scalebox{0.75}{$\pm$ 0.36}} & 0.56 \scalebox{0.75}{$\pm$ 0.002} & 50.08 \scalebox{0.75}{$\pm$ 1.38} & 0.56 \scalebox{0.75}{$\pm$ 0.002} \\
                \hdashline
                FDBM & ${0.9}$ & 19.86 \scalebox{0.75}{$\pm$ 0.67} & \textbf{0.55 \scalebox{0.75}{$\pm$ 0.002}} & {45.19 \scalebox{0.75}{$\pm$ 0.74}} & \textbf{0.55 \scalebox{0.75}{$\pm$ 0.002}} \\
                FDBM & ${0.8}$ & 20.14 \scalebox{0.75}{$\pm$ 0.64} & 0.56 \scalebox{0.75}{$\pm$ 0.002} & {45.08 \scalebox{0.75}{$\pm$ 0.98}} & \textbf{0.55 \scalebox{0.75}{$\pm$ 0.001}} \\
                FDBM & ${0.7}$ & 21.12 \scalebox{0.75}{$\pm$ 0.62} & 0.56 \scalebox{0.75}{$\pm$ 0.002} & {43.44 \scalebox{0.75}{$\pm$ 0.82}} & 0.56 \scalebox{0.75}{$\pm$ 0.002} \\
                FDBM & ${0.6}$ & 22.06 \scalebox{0.75}{$\pm$ 0.57} & 0.57 \scalebox{0.75}{$\pm$ 0.001} & {\textbf{41.35} \scalebox{0.75}{$\pm$ 0.90}} & 0.57 \scalebox{0.75}{$\pm$ 0.002} \\
                FDBM & ${0.5}$ & 22.15 \scalebox{0.75}{$\pm$ 0.75} & 0.58 \scalebox{0.75}{$\pm$ 0.002} & {42.36 \scalebox{0.75}{$\pm$ 0.77}} & 0.58 \scalebox{0.75}{$\pm$ 0.002} \\
                FDBM & ${0.4}$ & 24.79 \scalebox{0.75}{$\pm$ 0.68} & 0.59 \scalebox{0.75}{$\pm$ 0.002} & \textbf{41.21 \scalebox{0.75}{$\pm$ 1.12}} & 0.59 \scalebox{0.75}{$\pm$ 0.002} \\

            \bottomrule
            \end{tabular}
        \end{subfigure}
    \label{tab:adx-dogs}
\end{table}

\clearpage\newpage
\begin{figure}[p]
    \centering
    \begin{subfigure}{0.495\linewidth}
        \includegraphics[width=\linewidth]{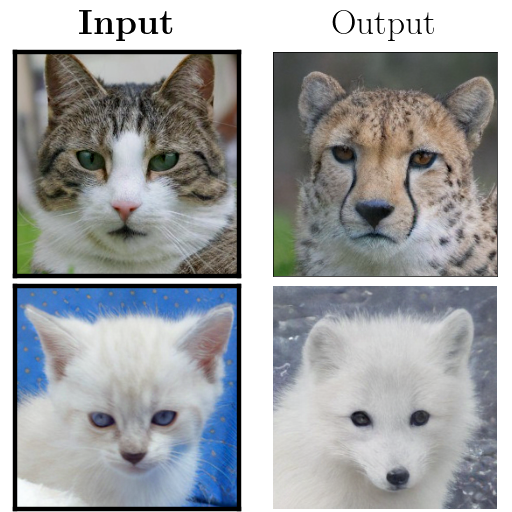}
        \caption{AFHQ-256 cats $\rightarrow$ wild}
    \end{subfigure}
    \begin{subfigure}{0.495\linewidth}
        \includegraphics[width=\linewidth]{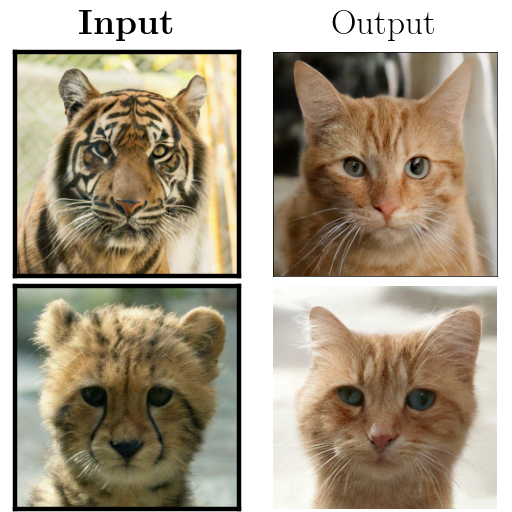}
        \caption{AFHQ-256 wild $\rightarrow$ cats}
    \end{subfigure}
    \caption{A detailed look at exemplary samplings with our method with H=0.4, K=5 for AFHQ-256.}
    \label{fig:adx:256-detail}
\end{figure}

\begin{figure}[p]
    \centering
    \begin{subfigure}{0.495\linewidth}
        \includegraphics[width=\linewidth]{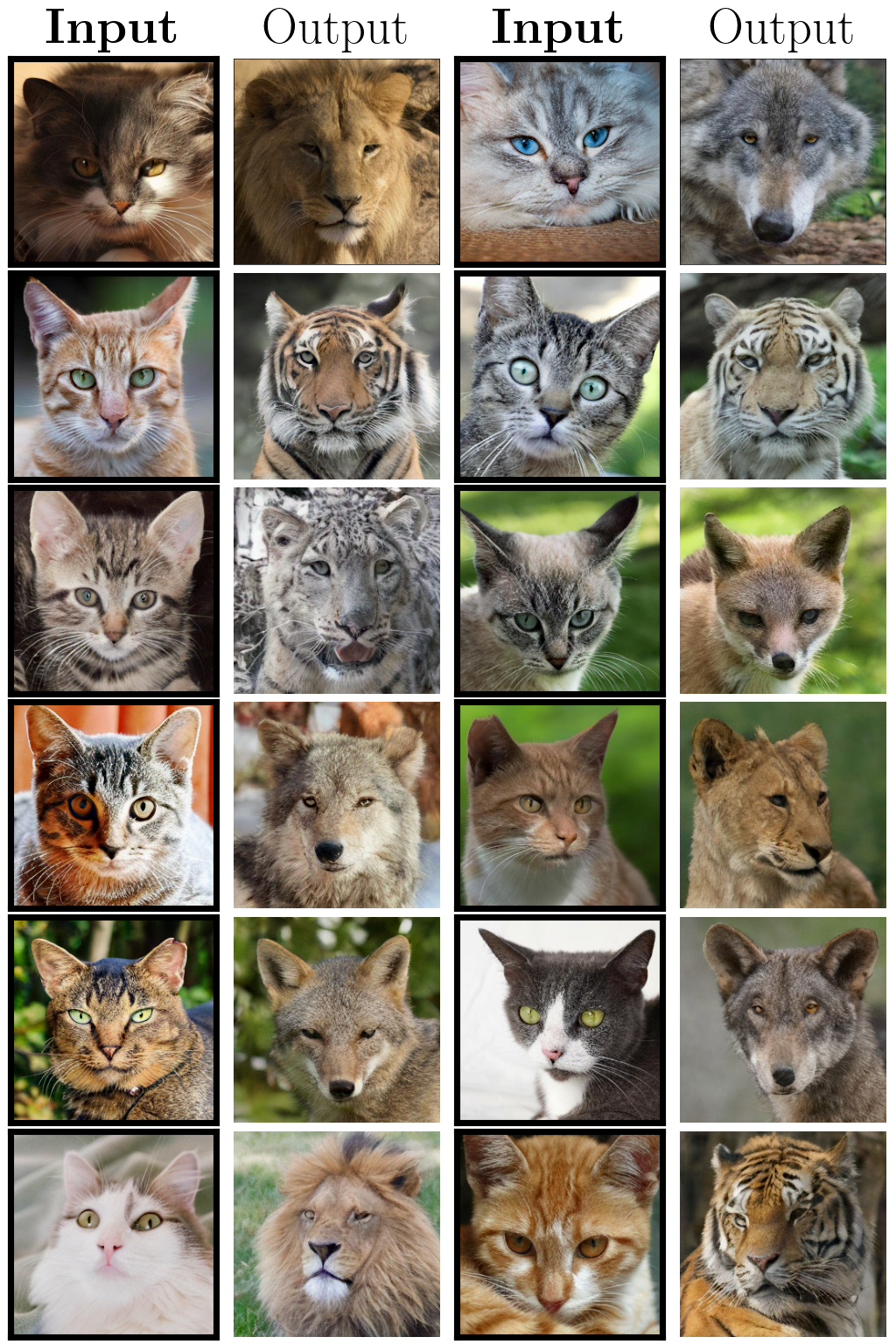}
        \caption{AFHQ-256 cats $\rightarrow$ wild}
    \end{subfigure}
    \begin{subfigure}{0.495\linewidth}
        \includegraphics[width=\linewidth]{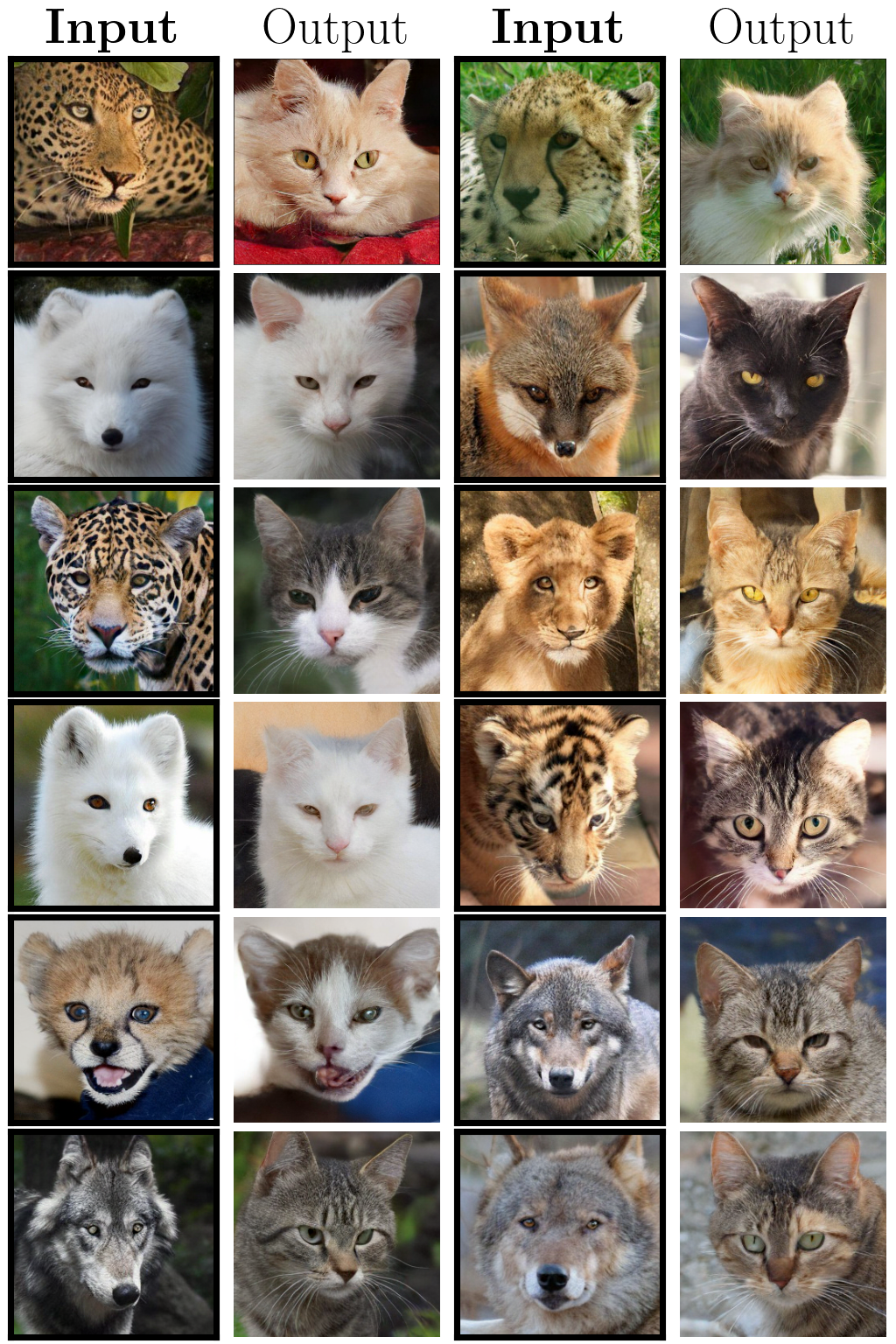}
        \caption{AFHQ-256 wild $\rightarrow$ cats}
    \end{subfigure}
    \caption{Overview of exemplary samplings with our method with H=0.4, K=5 for AFHQ-256.}
    \label{fig:adx:256-overview}
\end{figure}
\begin{figure}[p]
    \centering
    \begin{subfigure}{0.495\linewidth}
        \includegraphics[width=\linewidth]{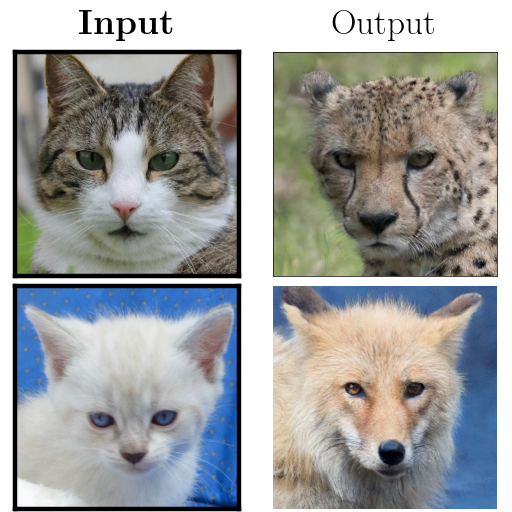}
        \caption{AFHQ-512 cats $\rightarrow$ wild}
    \end{subfigure}
    \begin{subfigure}{0.495\linewidth}
        \includegraphics[width=\linewidth]{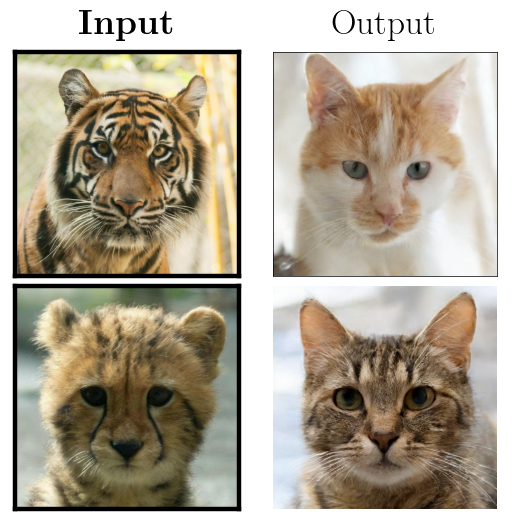}
        \caption{AFHQ-512 wild $\rightarrow$ cats}
    \end{subfigure}
    \caption{A detailed look at exemplary samplings with our method with H=0.4, K=5 for AFHQ-512.}
    \label{fig:adx:512-detail}
\end{figure}

\begin{figure}[p]
    \centering
    \begin{subfigure}{0.495\linewidth}
        \includegraphics[width=\linewidth]{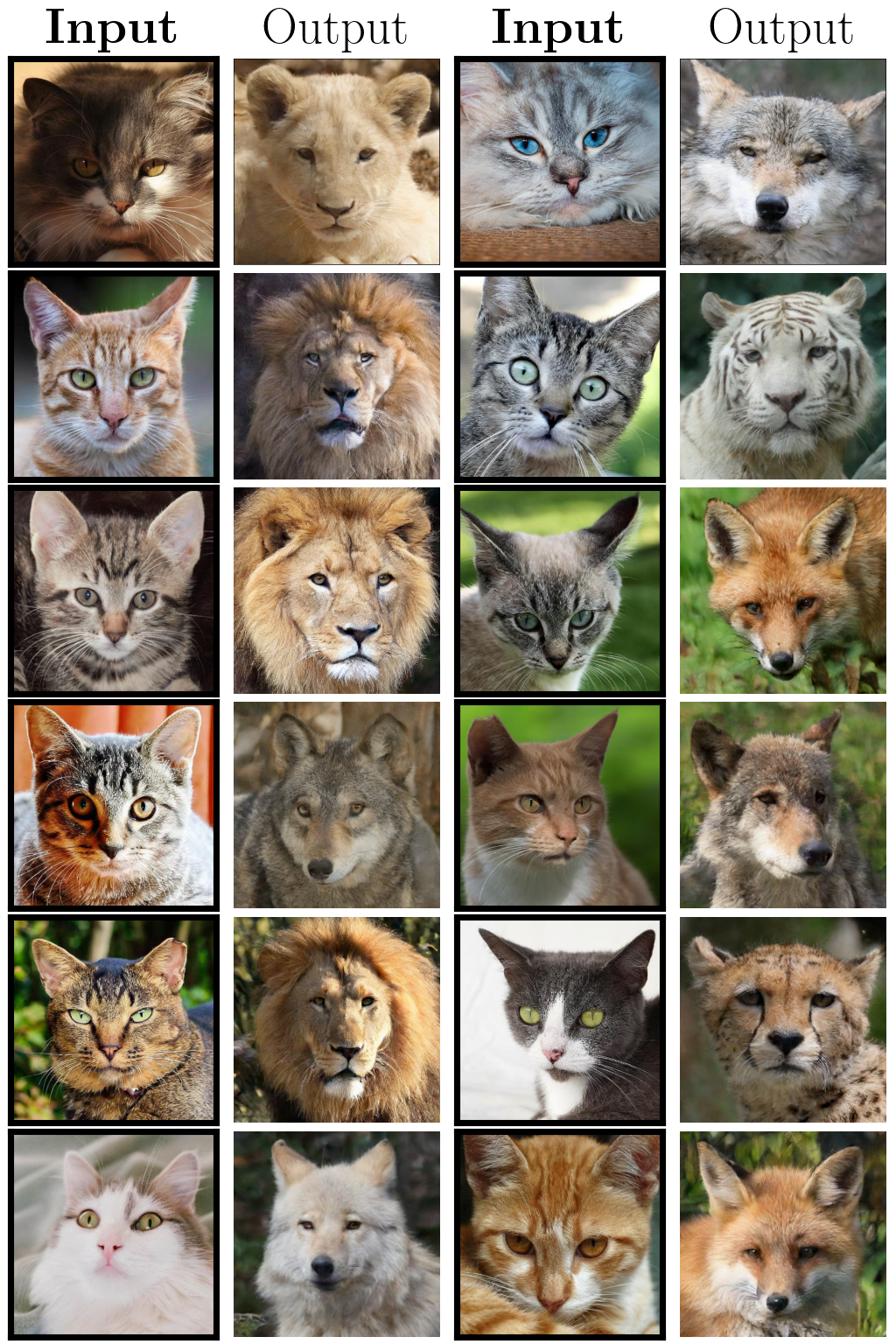}
        \caption{AFHQ-512 cats $\rightarrow$ wild}
    \end{subfigure}
    \begin{subfigure}{0.495\linewidth}
        \includegraphics[width=\linewidth]{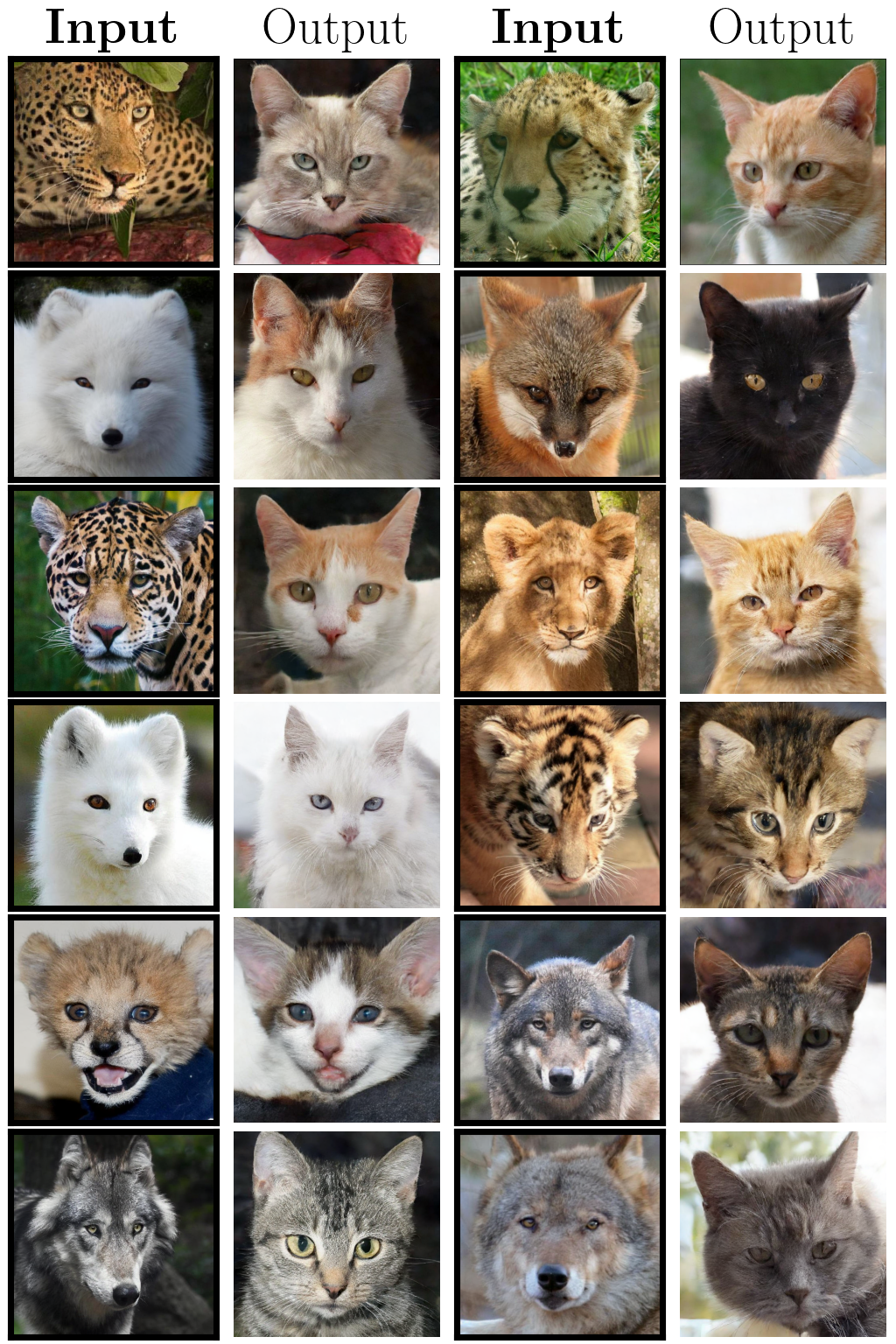}
        \caption{AFHQ-512 wild $\rightarrow$ cats}
    \end{subfigure}
    \caption{Overview of exemplary samplings with our method with H=0.4, K=5 for AFHQ-512.}
    \label{fig:adx:512-overview}
\end{figure}

%%%%%%%%%%%%%%%%%%%%%%%%%%%%%%%%%%%%%%%%%%%%%%%%%%%%%%%%%%%%%%%%%%%%%%%%%%%%%%%
%%%%%%%%%%%%%%%%%%%%%%%%%%%%%%%%%%%%%%%%%%%%%%%%%%%%%%%%%%%%%%%%%%%%%%%%%%%%%%%
% NeurIPS Paper Checklist
%%%%%%%%%%%%%%%%%%%%%%%%%%%%%%%%%%%%%%%%%%%%%%%%%%%%%%%%%%%%%%%%%%%%%%%%%%%%%%%
%%%%%%%%%%%%%%%%%%%%%%%%%%%%%%%%%%%%%%%%%%%%%%%%%%%%%%%%%%%%%%%%%%%%%%%%%%%%%%%
%\clearpage
%\input{checklist}

\end{document}